\newtheorem{lemma}{\textbf{Lemma}}%
\newtheorem{theorem}{\textbf{Theorem}}%
\newtheorem{proposition}{\textbf{Proposition}}[section]
\newtheorem{assumption}{Assumption}
\newtheorem{remark}{\textbf{Remark}}
\newcommand{\vx}{\boldsymbol{x}}
\newcommand{\vxi}{\boldsymbol{\xi}}
\newcommand{\graph}{\mathcal{G}}
\newcommand{\vertexSet}{\mathcal{V}}
\newcommand{\edgeSet}{\mathcal{E}}
\newcommand\coconut[0]{\textsc{Coconut}}
\newcommand{\mValue}{\mathbf{V}}
\newcommand{\mTokenEmbd}{\mathbf{U}}
\newcommand{\mI}{\mathbf{I}}
\newcommand{\mW}{\mathbf{W}}
\newcommand{\transformer}{\mathsf{TF}}
\newcommand{\softmax}{\mathsf{SoftMax}}
\newcommand{\vh}{\boldsymbol{h}}
\newcommand{\vmu}{\boldsymbol{\mu}}
\newcommand{\onehot}{\boldsymbol{e}}
\newcommand{\one}{\mathbbm{1}}
\newcommand{\vlambda}{\boldsymbol{\lambda}}
\newcommand{\tokenBOS}{\texttt{<s>}}
\newcommand{\tokenQUERY}{\texttt{<Q>}}
\newcommand{\tokenSource}{\texttt{s}}
\newcommand{\tokenTarget}{\texttt{t}}
\newcommand{\tokenEnd}{\texttt{c}}
\newcommand{\tokenStart}{\texttt{r}}
\newcommand{\tokenEdge}{\texttt{<e>}}
\newcommand{\tokenReasoning}{\texttt{<R>}}
\newcommand{\tokenAnswer}{\texttt{<A>}}
\newcommand{\thought}[1]{\texttt{[t\textsubscript{#1}]}}
\newcommand{\sX}{\mathcal{X}}
\newcommand{\vocab}{\mathsf{Voc}}
\newcommand{\PosIdx}{\mathsf{Idx}}
\newcommand{\embd}{\mathsf{E}}
\newcommand{\loss}{\mathcal{L}}
\newcommand{\neighbor}{\mathcal{N}}
\newcommand{\indeg}{\mathsf{deg}^-}
\newcommand{\Unif}{\textsf{Unif}}
\newcommand{\coconutBFS}[0]{\textsc{Coconut-BFS}}
\newcommand{\Nc}{\neighbor_C}
\newcommand{\bfs}{\mathrm{BFS}}
\newcommand{\coco}{\mathrm{coco}}
\newcommand{\dataset}{\mathcal{D}}
\def\1{\bm{1}}
\def\vh{{\bm{h}}}
\def\vx{{\bm{x}}}
\def\mI{{\bm{I}}}
\DeclareMathAlphabet{\mathsfit}{\encodingdefault}{\sfdefault}{m}{sl}
\SetMathAlphabet{\mathsfit}{bold}{\encodingdefault}{\sfdefault}{bx}{n}
\newcommand{\E}{\mathbb{E}}
\newcommand{\R}{\mathbb{R}}
\title{Emergence of Superposition: Unveiling the Training Dynamics of Chain of Continuous Thought}
\author{Hanlin Zhu$^1$ \ 
  Shibo Hao$^2$ \
  Zhiting Hu$^2$ \
  Jiantao Jiao$^{1,3}$ \ 
  Stuart Russell$^1$ \ Yuandong Tian$^4$ 
   \\
  $^1$ UC Berkeley \quad
  $^2$ UCSD \quad 
  $^3$ Nvidia \quad
  $^4$ Meta AI
  \\
  \texttt{\{hanlinzhu,jiantao,russell\}@berkeley.edu} \\
  \texttt{\{s5hao,zhh019\}@ucsd.edu}, \quad \texttt{yuandong@meta.com}
}
\begin{document}

\maketitle

\begin{abstract}
Previous work shows that the chain of continuous thought (continuous CoT) improves the reasoning capability of large language models (LLMs) by enabling implicit parallel thinking, and a subsequent work provided theoretical insight by showing that a two-layer transformer equipped with continuous CoT can efficiently solve directed graph reachability by maintaining a superposition of multiple reasoning traces in the continuous thought. However, it remains unclear how the superposition mechanism is naturally learned from gradient-based training methods. To fill this gap, we theoretically analyze the training dynamics of a simplified two-layer transformer on the directed graph reachability problem to unveil how the superposition mechanism emerges during training in two training stages -- (i) a \emph{thought-generation} stage that autoregressively expands the continuous thought, and (ii) a \emph{prediction} stage that converts the thought into the final answer. Our analysis reveals that during training using continuous thought, the index-matching logit, an important quantity which reflects the strength of the model's local search ability, will first increase and then remain bounded under mild assumptions. The bounded index-matching logit effectively balances exploration and exploitation during the reasoning process: the model will exploit local problem structures to identify plausible search traces, and assign comparable weights to multiple such traces to explore when it is uncertain about which solution is correct, which results in superposition. Our experimental results tracking the growth of logits further validate our theory.
\end{abstract}

\section{Introductions}
\label{sec:intro}

Large Language Models (LLMs) show great reasoning capabilities in many complex tasks, especially when equipped with chain-of-thought (CoT)~\citep{wei2022chain}. However, due to the large inference cost of long CoT for complex tasks, many recent works seek alternative test-time scaling methods to more efficiently improve LLMs' reasoning ability~\citep{goyal2023think,wang2023guiding,pfau2024let,su2025token}. 

One promising method is to use chain-of-continuous-thought (\coconut{}, or continuous CoT)~\citep{hao2024training}, where the reasoning trace of an LLM is kept in a continuous latent space instead of projected back to the discrete token space. Continuous CoT exhibits both theoretical advantages~\citep{zhu2025reasoning} and empirical performance gains~\citep{hao2024training} in many tasks. To more efficiently and reliably scale up continuous CoT to solve more challenging tasks, it requires a deeper understanding of its internal mechanism.

 Previous work~\citep{zhu2025reasoning} theoretically shows that one of the most important advantages of continuous CoT is that it can enable the model to reason by superposition: when the model encounters multiple plausible search traces and is uncertain about which one is correct, it can keep all plausible traces in parallel since the CoT is in continuous space instead of discrete tokens. In particular, \citet{zhu2025reasoning} abstracted a family of reasoning tasks as a directed graph reachability problem, i.e., whether there exists a path from a given start node to a given destination node, and showed that a two-layer transformer with $O(n)$ (where $n$ is the number of vertices in the graph) continuous thought decoding steps can efficiently solve the task by providing a construction of the parameters. Therefore, a natural next question is:
 \begin{quote}
     \emph{Do gradient-based methods naturally lead to such a construction, and can we theoretically prove it?}
 \end{quote}

This paper answers the above question affirmatively by theoretically analyzing the training dynamics of a (simplified) two-layer transformer on the graph reachability problem in two training stages: (i) a \emph{thought generation} stage where the model autoregressively generates a chain of continuous thoughts and (ii) a \emph{prediction} stage where the model predicts the final answer using the generated thought.

Importantly, our analysis of the thought generation training stage reveals why the superposition can \emph{emerge} even if the training data only presents one demonstration for each training sample. Our theoretical analysis as well as experimental results show that when training with continuous thought (i.e., the \coconut{} training method in \Cref{sec:thought-gen-main} and \Cref{sec:exp-main}), the index-matching logit, an important quantity that measures the strength of model's local search capability, will remain bounded under mild conditions, which is in contrast to many previous analysis on transformer logit without continuous thought (e.g., \citet{tian2023scan,nichani2024transformers,nguyen2025one}) where the logit will grow logarithmically and thus unbounded. A bounded index-matching logit can balance exploration and exploitation: if the logit is too small, the model cannot even perform local search, resulting in a nearly random guess in the next step; if the logit is too large, the model might over-confidently commit to one of the plausible search traces merely depending on local features (e.g., the indegree of a node in our case) even if it is uncertain about the solution, and thus early discard the correct path. A bounded index-matching logit encourages the model to exploit the local structure while explore multiple plausible solutions by assigning comparable weights to them, which naturally results in superposition. This answers the question of \citet{zhu2025reasoning} why superposition can emerge during training.

\subsection{Related works}

\paragraph{Reasoning with chain of thought.} Chain-of-thought (CoT)~\citep{wei2022chain} is a simple yet effective test time scaling method to enhance LLM's reasoning capability. It can either be prompt-based only~\citep{khot2022decomposed,zhou2022least} or be included in the training sample to create high-quality training data~\citep{yue2023mammoth,yu2023metamath,wang2023math,shao2024deepseekmath}. Beyond empirical study, many theoretical works also explore the advantages of the CoT method. For example, \citet{liu2022transformers,feng2023towards,merrill2023expressive,li2024chain} shows that CoT can improve the expressivity of transformers. \citet{zhu2024towards} studies the importance of CoT for two-hop reasoning via training dynamics. \citet{wen2024sparse,kim2024transformers} studies how CoT in the training data can improve the sample efficiency of transformers. Instead of the text-based CoT, this paper studies continuous CoT where the ``thinking tokens'' lie in a latent continuous space and do not need to be converted to discrete tokens.

\paragraph{Latent space reasoning.} A recent line of work studies latent space reasoning, a novel paradigm beyond text-based CoT~\citep{goyal2023think,wang2023guiding,pfau2024let,su2025token,hao2024training}. For example, \citet{goyal2023think} proposed to use pause tokens, which are learnable tokens that are inserted into the original text to increase the computation space.  Later, \citet{london2025pause} theoretically shows that the pause token can strictly increase the expressivity of the transformer. Similarly, \citet{pfau2024let} studies filler tokens, which also increase the computation space of LLMs. \citet{wang2023guiding} proposed to use planning tokens at the beginning of the response generation to improve the reasoning capability. \citet{su2025token} proposed to use abstract tokens in a latent space to enhance the reasoning performance while reducing the inference cost. The most related work is \citet{hao2024training}, which proposes to use continuous CoT for reasoning. A follow-up work \citet{zhu2025reasoning} theoretically shows the advantage of continuous CoT via expressivity. Our work takes a further step by analyzing the training dynamics of continuous CoT.

\paragraph{Training dynamics of transformers.} There is a rich line of literature studying the optimization of transformer-based models~\citep{jelassi2022vision,bietti2023birth,mahankali2023one,fu2023can,tian2023scan,tian2023joma,zhang2024trained,li2024mechanics,huang2024context,guo2024active}. More recent works also focus on understanding reasoning abilities or patterns of transformers via training dynamics, including induction heads~\citep{nichani2024transformers}, the reversal curse~\citep{zhu2024towards,ma2026breaking}, CoT~\citep{wen2024sparse,kim2024transformers,huang2025transformers}, factual recall~\citep{nichani2024understanding}, two hop reasoning~\citep{guo2025llms,lin2025identity}, out of context reasoning~\citep{huang2025generalization}, etc. Along the line, our paper aims to understand the internal mechanism of continuous CoT and why superposition emerges via the analysis of training dynamics.

\section{Problem Formulation}
\label{sec:prelim+form}

\paragraph{Basic notations.}We use $[N]$ to denote the set $\{1, 2, \ldots, N\}$ for any integer $N > 0$ and use $[i:j]$ to denote $\{i, i+1, \ldots, j-1, j\}$ for integers $i \leq j$. For any finite set $\sX$, we use $|\sX|$ to denote its cardinality and use $\Unif(\sX)$ to denote the uniform distribution over $\sX$. We use $\mathbb{R}$ to denote the set of real numbers and denote $x_+ = \max(x, 0)$ for $x \in \mathbb{R}$.  For any vector $\vx = (x_1, \ldots, x_d) \in \mathbb{R}^d$, the softmax function $\softmax(\cdot): \mathbb{R}^d \to \mathbb{R}^d$ is defined as $\softmax(\vx)_i = \exp(x_i) / (\sum_{j=1}^d \exp(x_j))$. Let $\mI_d \in \mathbb{R}^{d\times d}$ denote the identity matrix. Let $\vocab = [M]$ denote a vocabulary of size $M$ for a fixed integer $M > 0$. For each token $v \in \vocab$, there is an associated embedding $\embd(v) \in \mathbb{R}^d$. Let $\mTokenEmbd = [\embd(1), \embd(2), \ldots, \embd(M)] \in \mathbb{R}^{d \times M}$ be the token embedding matrix.

\paragraph{Graph and permutation.} For a directed graph $\graph=(\vertexSet,\edgeSet)$ with the vertex set $|\vertexSet|=n$ and
the edge set $\edgeSet=\{(\tokenSource_i\to \tokenTarget_i)\}_{i=1}^m$, we fix a root node $\tokenStart\in\vertexSet$ and two candidate destination nodes $\tokenEnd_1,\tokenEnd_2\in\vertexSet$ such that exactly one node, denoted $\tokenEnd_\star$, is reachable from $\tokenStart$ and denote the other as $\tokenEnd_\perp$ that is unreachable. For a radius $c\in\mathbb{N}$, define the $c$-ball as
$\neighbor_c^\graph(\tokenStart) =\{v\in\vertexSet:\ \text{$v$ is reachable from $\tokenStart$ within $c$ steps}\}$.
For a subset $\vertexSet'\subseteq\vertexSet$, we define the restricted in-degree as 
$\indeg_{\graph,\vertexSet'}(v)=|\{u\in\vertexSet':\ (u\to v)\in\edgeSet\}|$.
We also fix a shortest path from $\tokenStart$ to $\tokenEnd_\star$ as $p=(p_0,\ldots,p_C)$ with $p_0=\tokenStart$, $p_C=\tokenEnd_\star$, $(p_{c-1}\to p_{c})\in\edgeSet$ for any $c \in [C]$.  For any permutation $\pi$ over $\vertexSet$, we define $\pi(\graph) = (\vertexSet, \pi(\edgeSet))$, where $\pi(\edgeSet) = \{ (\pi(\tokenSource) \to \pi(\tokenTarget)) \ | \ (\tokenSource \to \tokenTarget) \in \edgeSet \}$, and define $\pi(p) = (\pi(p_0), \ldots, \pi(p_C))$. We also denote the set of all permutations over $\vertexSet$ as $S_\vertexSet$. 

\paragraph{Chain of continuous thought.} Let $\transformer_\theta(\cdot):(\mathbb{R}^d)^* \to \mathbb{R}^d$ be a transformer which receives an input embedding sequence $\vh = \vh_{[t]}  \overset{\triangle}{=} (\vh_1, \vh_2, \ldots, \vh_t) \in \mathbb{R}^{d\times t}$ and outputs $\transformer_\theta(\vh) \in \mathbb{R}^d$. For convenience, we assume weight tying. A traditional decoder using a discrete CoT will sample the next token $v_{t+1} \sim \softmax(\mTokenEmbd^\top \transformer_\theta(\vh))$. Then the embedding of $v_{t+1}$ will be appended to the end of the input, i.e., $\vh_{t+1} = \embd(v_{t+1})$. For continuous CoT, one directly appends the output of the transformer to the end of the input sequence without converting it to a token, i.e., setting $\vh_{t+1} = \transformer_\theta(\vh)$. Assume the prompt $\vx = [x_1, \ldots, x_{t_0}] \in \vocab^{t_0}$ and its corresponding embedding sequence is $\vh_{[t_0]} = [\vh_1, \ldots, \vh_{t_0}] = [\embd(x_1), \ldots, \embd(x_{t_0})]$. For notation convenience, we use $\thought{c} = \vh_{t_0+c}$ to denote the continuous thought generated at decoding step $c$, where $\thought{c} = \transformer_\theta(\vh_{[t_0+c-1]})$. In particular, $\thought{0} = \vh_{t_0}$. After $C$ decoding steps, one can append a special token $\tokenAnswer$ at the end of the sequence to trigger the transformer to switch the mode and generate the final answer. Specifically, one can set $\vh_{T} = \embd(\tokenAnswer)$ where $T = t_0+C+1$ and generate the final answer $\widetilde{\transformer}_{\theta, C, \mTokenEmbd}(\vh_{[t_0]}) := \arg\max_{v\in \vocab} \mTokenEmbd^\top \transformer_\theta(\vh_{[T]})$.

\paragraph{Graph reachability and prompt format.} In this paper, we mainly focus on the directed graph reachability problem as studied in \citet{zhu2025reasoning}, where we are given a graph $\graph = (\vertexSet, \edgeSet)$, two candidate destination nodes $\tokenEnd_1$ and $\tokenEnd_2$, and a root node $\tokenStart$. The task is to identify which of the two nodes can be reached by $\tokenStart$ (denoted as $\tokenEnd_\star$). The prompt structure is illustrated in \Cref{fig:input_format} following \citet{zhu2025reasoning}. The prompt consists of (1) a BOS (beginning of sentence) token $\tokenBOS$; (2) the graph description part, which contains $m$ edges where each edge is represented by a source node $\tokenSource_i$, a target node $\tokenTarget_i$, and a special edge token $\tokenEdge$; (3) the task description part that contains a special question token $\tokenQUERY$, two candidate destination nodes $\tokenEnd_1$ and $\tokenEnd_2$, a special reasoning token $\tokenReasoning$ and a root node $\tokenStart$. See \Cref{tab:token_notation} for the full list of token notations. Note that $t_0 = 3m+6$ is the prompt length, and let $\vh_{[t_0]} = (\vh_1, \vh_2, \ldots \vh_{t_0})$ be the input embedding sequence. 
Following \citet{zhu2025reasoning}, we use $\PosIdx(v)$ to denote the position of a token in the input sequence (e.g., $\PosIdx(\tokenBOS) = 1, \PosIdx(\tokenSource_i) = 3i-1, \PosIdx(\tokenEnd_1) = 3m+3$, $\PosIdx(\tokenReasoning) = 3m+5$), use $\PosIdx(\tokenEdge, i) = 3i+1$ to denote the position of the $i$-th $\tokenEdge$ token, and use $\PosIdx(\thought{i}) = t_0+i$ to denote the position of the continuous thought at step $i$. See \Cref{tab:pos_idx} for the complete list of position indices.

\begin{figure}[htbp]
\centering
 \includegraphics[width=0.9\textwidth]{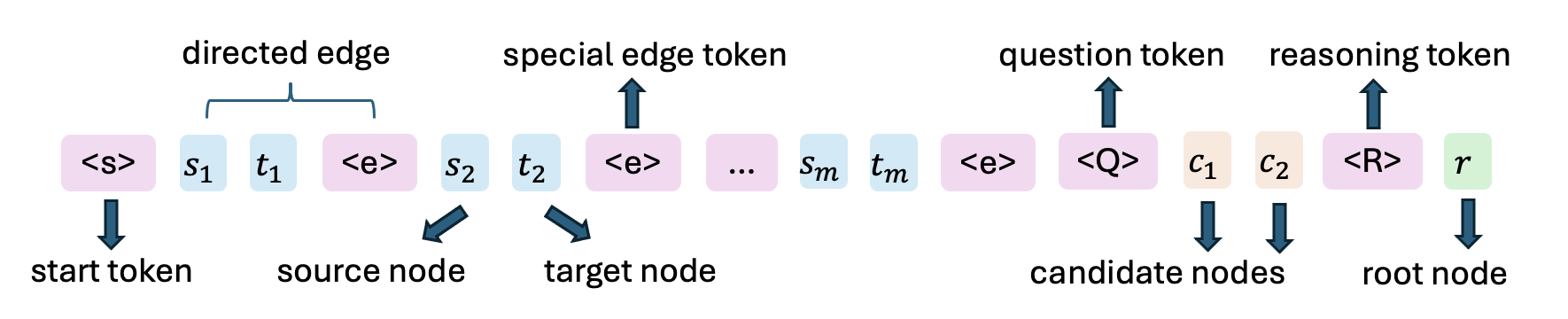}
    \caption{\small Prompt format of the graph reachability problem (Figure 1 of \citet{zhu2025reasoning}).}
    \label{fig:input_format}
\end{figure}

\citet{zhu2025reasoning} provided a construction of transformer parameters $\theta$ such that $\widetilde{\transformer}_{\theta, C, \mTokenEmbd}(\vh_{[t_0]}) = \tokenEnd_\star$ (i.e., the transformer can predict the reachable candidate node using continuous CoT) for any graph and root-candidate node tuples, where $\vh_{[t_0]}$ corresponds to the prompt of the graph and task descriptions. However, they did not theoretically study whether the constructed solution can be naturally learned via gradient-based methods. In the following sections, we theoretically show that the solution can be learned via gradient flow in both the thought generation stage (\Cref{sec:thought-gen-main}) and the prediction stage (\Cref{sec:pred-main}). We also provide empirical results showing that the training dynamics in our theoretical analysis align well with the experiments (\Cref{sec:exp-main}).

\subsection{Index-matching Logits and Local Search Capability}

Before we delve into the technical details in the following sections, we first provide an intuitive explanation of the dynamics of the main mechanism.

\begin{figure}[htbp]
  \centering

  \begin{subfigure}[b]{0.9\textwidth}
    \centering
    \includegraphics[width=\textwidth]{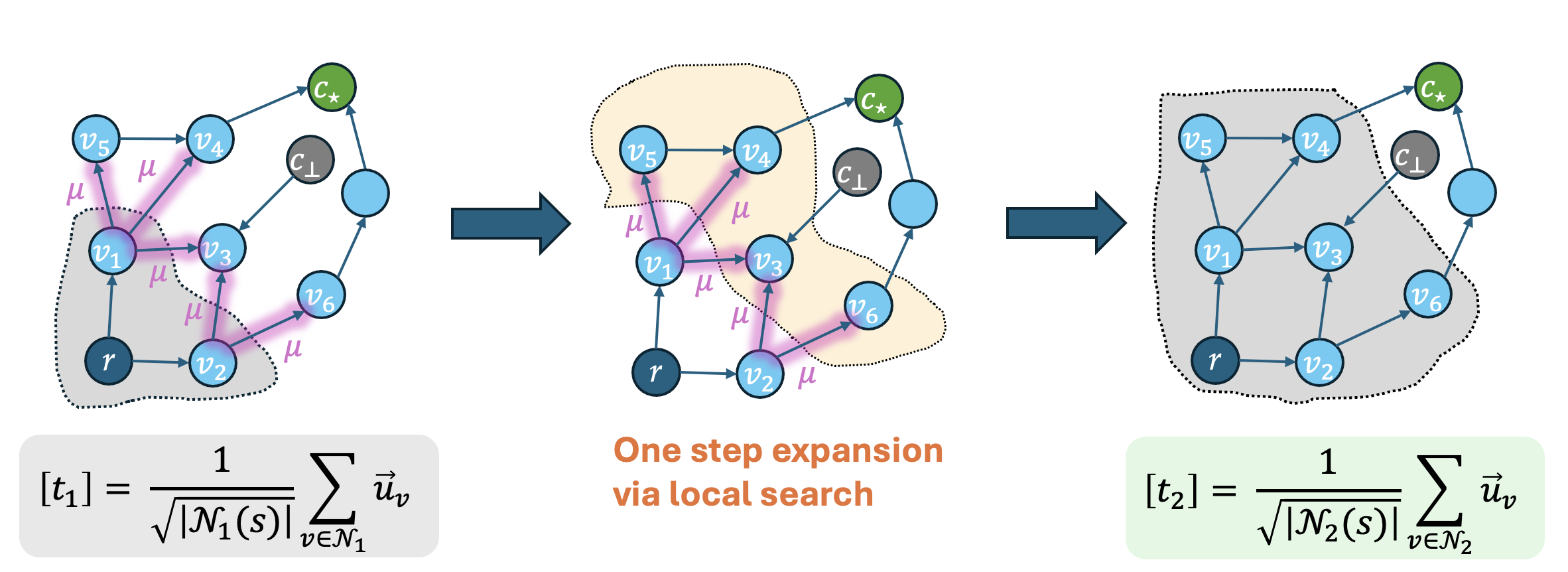}
    \caption{\small \textbf{Left}: The continuous thought at step 1 $\thought{1}$ encodes embeddings of nodes that are reachable from the root node $\tokenStart$ within one step. \textbf{Middle}: One-step expansion via local search where the strength is quantified by index-matching logit $\mu$. \textbf{Right}: After one-step expansion, the continuous thought at step 2 $\thought{2}$ encodes nodes reachable within two steps.}
    \label{fig:illustration_graph}
  \end{subfigure}

  \vskip\baselineskip

  \begin{subfigure}[b]{0.9\textwidth}
    \centering
    \includegraphics[width=\textwidth]{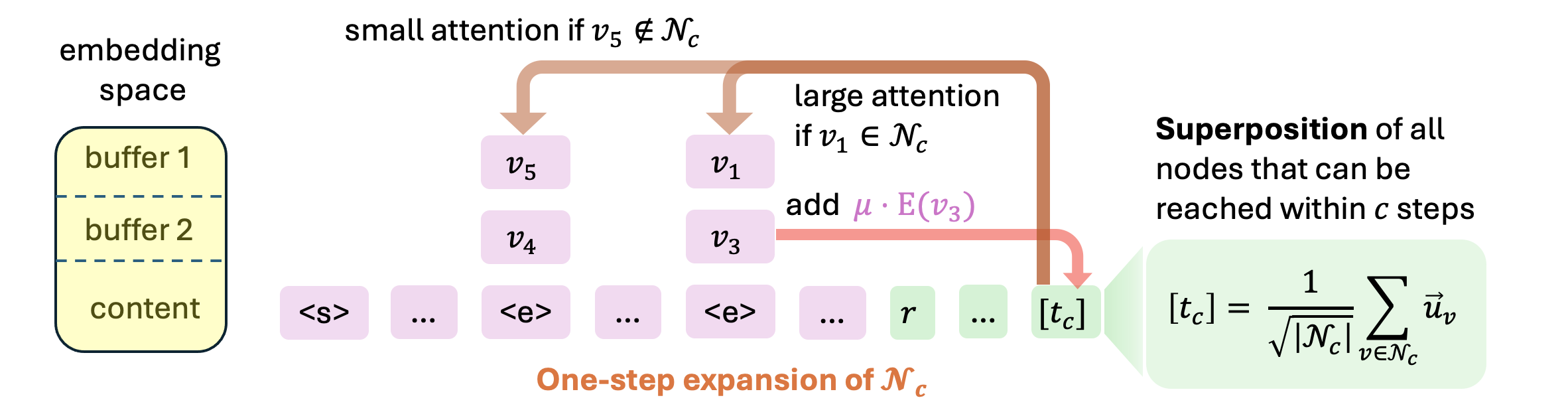}
    \caption{\small Illustration of how one-step expansion is implemented (adapted from Figure 3 of \citet{zhu2025reasoning}). In the first layer of the transformer, each special edge token $\tokenEdge$ copies its corresponding source and target nodes to its buffer spaces. In the second layer, as illustrated in the figure, the current thought $\thought{c}$ pays large attention to an edge if its source node has been explored, and adds its target node to the superposition, where the strength of the added node is controlled by the index-matching logit $\mu$. The two edges $v_5 \to v_4$ and $v_1 \to v_3$ corresponds to edges in \Cref{fig:illustration_graph}.}
    \label{fig:illustration_sequence}
  \end{subfigure}

  \caption{Pictorial illustration of the superposition mechanism and the index-matching logit $\mu$.}
  \label{fig:illustration}
\end{figure}

\paragraph{Global planning vs. local search.} In the context of graph reachability, the global planning refers to a model's capability to analyze the structure of the whole graph and then determine a path from the root node to the destination node. In contrast, local search focuses only on which nodes are reachable in one step from the current node, which is much easier to learn than global planning. When using discrete CoT, the model can choose only one path at a time. Therefore, the model needs global planning to select the correct path or to backtrack from the wrong one. When using continuous CoT, the model can keep multiple plausible paths simultaneously. Therefore, the model can rely solely on local search to perform parallel BFS, solving the task with only simple skills. 

\paragraph{Index-matching logits.} We use the index-matching logit $\mu$ to quantify the strength of the model's local search capability, which is illustrated in \Cref{fig:illustration} and will be formally defined in \eqref{eq:reparam} in \Cref{sec:thought-gen-main}. In \Cref{thm:main-informal-boundedness}, we will prove that under mild conditions, the index-matching logit $\mu$ will first increase and then remain bounded. Note that a positive, bounded logit $\mu$ effectively balances exploration and exploitation in node expansion: if $\mu$ is too small, each edge will receive similar attention in \Cref{fig:illustration_sequence}, and thus the model even lacks the local search capability to exploit the local graph structure; if $\mu$ is too large, the model will put too much weights on nodes with large in-degree (e.g., in \Cref{fig:illustration_graph}, $v_3$ weights 2$\mu$ and other frontier nodes such as $v_4$ weights $\mu$, where the difference in weights will be significant under large $\mu$ and commonly used softmax attention) and thus lacks exploration of different plausible paths.

\section{Analysis of the Thought Generation Stage}
\label{sec:thought-gen-main}

In this section, we analyze the training dynamics of the thought generation stage. We consider any graph $\graph = (\vertexSet, \edgeSet)$, a root node $\tokenStart \in \vertexSet$, two candidate destination nodes $\tokenEnd_1, \tokenEnd_2 \in \vertexSet$, where $\{\tokenEnd_1, \tokenEnd_2\} \in\{\tokenEnd_\star, \tokenEnd_\perp\}$ with $\tokenEnd_\star$ reachable from $\tokenStart$ and $\tokenEnd_\perp$ unreachable. We are also given a (discrete) CoT demonstration, which is a shortest path $p = (p_0, \ldots, p_C)$ from $\tokenStart$ to $\tokenEnd_\star$ where $p_0 = \tokenStart$, $p_C = \tokenEnd_\star$.

We use curriculum learning following \citet{hao2024training,zhu2025reasoning}, where at stage $(c+1)$ for any $0 \leq c < C$, upon receiving the prompt embeddings $\vh_{[c_0]}$, the model will first generate $c$ continuous thoughts $\thought{1}, \ldots, \thought{c}$ autoregressively without supervision (i.e., no loss calculated on the first $c$ continuous thoughts at stage $c+1$), and then be trained to generate the next continuous thought $\thought{c+1} = \transformer_\theta(\vh_{[t_0+c]})$. Since the learning procedure at each stage is similar, we focus below on a fixed $c$. 

\citet{zhu2025reasoning} constructs a solution for a two-layer transformer, where the first layer mainly performs copy (e.g., the $i$-th special edge token $\tokenEdge$ will copy the information of its corresponding source node $\tokenSource_i$ and target node $\tokenTarget_i$). Since the copy mechanism has been widely studied~\citep{nguyen2025one}, as well as its formation via training dynamics~\citep{nichani2024transformers}, we mainly focus on the dynamics after the copy mechanism has been established. Thus, we analyze the dynamics of the second layer of the transformer.

In particular, let the hidden states of each special edge token $\tokenEdge$ and the current thought $\thought{c}$ after the first transformer layer be 
\begin{align}
\label{eq:hidden_state_first_layer_thought_gen}
    \vh_{\PosIdx(\tokenEdge, i)} = \embd_\tokenSource(\tokenSource_i) + \embd_\tokenTarget(\tokenTarget_i) \in \mathbb{R}^d, \qquad  \vh_{\PosIdx(\thought{c})} = 
    \sum_{v \in \neighbor_{c}^{\graph}(\tokenStart) } 
   \frac{1}{\sqrt{\left|\neighbor_{c}^{\graph}(\tokenStart)\right|}} \embd(v) \in \mathbb{R}^d,  
\end{align}
where $\embd_\tokenSource(v) \in \mathbb{R}^d$ and $\embd_\tokenTarget(v) \in \mathbb{R}^d$ map token $v \in \vocab$ to different subspaces of $\mathbb{R}^d$. For example, as in the construction of \citet{zhu2025reasoning}, we can set $d = 3M$, and $\embd_\tokenSource(\cdot)$,  $\embd_\tokenTarget(\cdot)$ and $\embd(\cdot)$ each corresponds to $M$ different non-zero entries. This is also similar to previous work \citet{chen2025distributional,nguyen2025one} where  $\embd_\tokenSource(\cdot)$ and $\embd_\tokenTarget(\cdot)$ can be viewed as previous token heads. We make the following assumptions on the embedding $\embd_\tokenSource(\cdot)$,  $\embd_\tokenTarget(\cdot)$ and $\embd(\cdot)$:

\begin{assumption}[Orthonormal embeddings]
\label{assump:embedding_orthonormal}
 Assume $\embd_\tokenTarget(\cdot) \equiv \embd(\cdot)$. For any $u, v \in \vocab$, $\embd_\tokenSource(u)^\top\embd_\tokenSource(v) = \embd_\tokenTarget(u)^\top\embd_\tokenTarget(v) = \one\{u=v\}$ and $\embd_\tokenSource(u)^\top\embd_\tokenTarget(v) = 0$.
\end{assumption}

\eqref{eq:hidden_state_first_layer_thought_gen} means after the first layer, each special edge token $\tokenEdge$ will copy the embeddings of its corresponding source and target nodes $\tokenSource_i$ and $\tokenTarget_i$ to the same position in different subspaces. Also, we assume by induction that after training stages $1, 2, \ldots, c$, the current thought generated by the well-trained model $\vh_{\PosIdx(\thought{c})}$ is a normalized superposition of token embeddings of all nodes reachable from $\tokenStart$ within $c$ steps. Below, we study the training dynamics of the current stage (i.e., stage $c+1$).

\paragraph{The forward path and reparameterization.}
We consider the setting where the second layer is attention-only. The forward pass can be formulated as
\begin{equation}
\begin{aligned}
\label{eq:forward_pass_thought_gen}
& \phi\!\left(\vh;\{\vh_i\}_i\right) = \sum_{i}\mValue \sigma(\vh^\top \mW \vh_i)\,\vh_i,\qquad
\\
& \vxi =  \mTokenEmbd^\top\!\big(\vh_{\PosIdx(\thought{c})}+\phi(\vh_{\PosIdx(\thought{c})};\{\vh_{\PosIdx(\tokenEdge, i)}\}_{i=1}^m)\big)\in\mathbb{R}^{M},
\end{aligned}
\end{equation}
where $\mValue, \mW \in \mathbb{R}^{d\times d}$ are attention parameters and $\sigma(\cdot):\mathbb{R}\to\mathbb{R}$ is an activation function that determines the range of attention scores, and $\vxi = (\xi_v)_{v\in\vocab} \in \mathbb{R}^M$ is the output logit vector for each token in the vocabulary.
Similar to the analysis in \citet{nguyen2025one}, we adopt the linear attention $\sigma(\vh^\top \mW \vh_i)=\vh^\top \mW \vh_i$, fix $\mValue=\mI$ and use the \emph{index-matching} reparameterization
\begin{equation}
\label{eq:reparam}
\mW\;=\;\sum_{v\in\vertexSet}\mu_v\,\embd(v)\embd_{\tokenSource}(v)^\top, \quad \mu_v(t) = 0 \text{ for } t = 0.
\end{equation}

\begin{remark}
    Note that a more general form of the attention weight matrix can be
    \begin{equation}
    \label{eq:reparam-general}
    \mW = \mu_{\tokenReasoning}\,\embd(\tokenAnswer)\embd(\tokenReasoning)^\top + \sum_{v, v'\in\vertexSet}\mu_{v,v'}\,\embd(v)\embd_{\tokenSource}(v')^\top.
    \end{equation}
    The first term only takes effect in the prediction stage (\Cref{sec:pred-main}), so we can set $\mu_\tokenReasoning = 0$ for now. The second term involves $n\times n$ cross terms. The symmetry of the vertices, which can be enforced by permuting vertex labels during training, makes the $n\times n$ parameters $\{\mu_{v,v'}\}_{v,v'}$ effectively two parameters $\{\mu_1, \mu_2\}$ where $\mu_{v,v} \equiv \mu_1$ and $\mu_{v,v'} \equiv \mu_2$ for $v \neq v'$. Moreover, if we focus on the relative value between $\mu_1$ and $\mu_2$, we can further simplify the attention weight matrix by assuming $\mu_2 = 0$.
\end{remark}

For notation simplicity, we use $\vh_i$ to denote $\vh_{\PosIdx(\tokenEdge, i)}$, use $\vh_{\thought{c}}$ to denote $\vh_{\PosIdx(\thought{c})}$ and use $\neighbor_c$, $\neighbor_{c+1}$  to denote $\neighbor_c^\graph(\tokenStart)$, $\neighbor_{c+1}^\graph(\tokenStart)$, respectively when the graph $\graph$ and root node $\tokenStart$ is clear from the context. We also denote $d_u := \indeg_{\graph,\neighbor_c}(u)$ which is the indegree of $u$ with source nodes restricted in $\neighbor_c$. Finally, we denote $K = |\neighbor_c|$ and $\lambda = \frac{1}{\sqrt{K}}$.

\paragraph{Loss functions.} An ideal model should be able to directly output the shortest path from the start node $\tokenStart$ to the desired candidate destination node $\tokenEnd_\star$, i.e., the prediction of the $(c+1)$-th continuous thought $\thought{c+1}$ exactly corresponds to the $(c+1)$-th step of the shortest path $p_{c+1}$. However, experiments in \citet{zhu2025reasoning} show that even for a 12-layer transformer, it is hard to predict the shortest path even if the length of the shortest path is only 3 or 4. Therefore, we take a step back and pursue a more practical goal -- we expect the model to at least be able to generate an arbitrary path starting from the start node $\tokenStart$, which only requires local search ability that is much easier than the global planning ability. In the context of continuous thought, we expect the model to include information of all vertices that are reachable from $\tokenStart$ within $(c+1)$ steps in the generated thought $\thought{c+1}$. We consider the following two loss functions:
\begin{align}
\textbf{\coconutBFS:}\quad
\ell^{\bfs}_{\graph,\tokenStart}
&:= -\log\frac{\sum_{v\in \neighbor_{c+1}} \exp(\xi_v)}
{\sum_{v\in \vertexSet}\exp(\xi_v)},
\label{eq:main-bfs-loss}
\\
\textbf{\coconut:}\quad
\ell^{\coco}_{\graph,\tokenStart,p}
&:= -\log\frac{\exp\!\big(\xi_{p_{c+1}}\big)}
{\sum_{v\in \vertexSet}\exp(\xi_v)},
\label{eq:main-coco-loss}
\end{align}
with permutation-averaged dataset losses 
\[\loss^{\bfs} = \E_{\pi\sim\Unif(S_{\vertexSet})}[\ell^{\bfs}_{\pi(\graph),\pi(\tokenStart)}] \quad \text{and} \quad \loss^{\coco} = \E_{\pi\sim\Unif(S_{\vertexSet})}[\ell^{\coco}_{\pi(\graph),\pi(\tokenStart),\pi(p)}]. \] 
Note that, intuitively, the permutation-averaged loss will lead to similar behavior across different parameters. 
The first loss $\loss^{\bfs}$ explicitly encourages the model to predict any nodes in $\neighbor_{C+1}$. However, in practice, it is costly and even impossible to search over the entire solution space exhaustively; instead, we usually present only one demonstration for each task instance during training (in our setting, only one path $p$ per instance $(\graph, \tokenStart, \tokenEnd_1, \tokenEnd_2)$), which corresponds to the second loss $\loss^{\coco}$ and aligns with the practical setting where chain of thought data can be used for supervision.

\citet{zhu2025reasoning} observed in experiments that superposition emerges even without explicit guidance during training, i.e., using the loss $\loss^{\coco}$. In this paper, we investigate the emergence of superposition by analyzing its training dynamics. The following lemma gives the gradient of the index-matching strength parameter $\mu_v(t)$ using gradient flow under the loss function $\loss^{\coco}$. 

\begin{lemma}[Gradient of $\mu_v$ under $\loss^{\coco}$; informal version of \Cref{thm:mu-dynamics-coconut} in \Cref{app:sec_proof_thought_gen}]
\label{lem:main-informal-ode}
Under permutation-averaged training from symmetric initialization and gradient flow $\dot\mu_v=-\alpha\nabla_{\mu_v}\loss^\coco$, we have $\mu_v(t)\equiv \mu(t)$ for all $v$ and times $t$, and the gradient of $\mu_v$ is 
\[
\dot\mu(t)\;=\;\frac{\alpha}{n\sqrt{K}}\Big(d_{p_{c+1}} - F(\mu(t))\Big),\qquad
F(\mu)=\frac{\sum_{u\in\neighbor_{c+1}} d_u\,e^{\lambda(\one\{u\in\neighbor_c\}+\mu d_u)}}
{\sum_{u\in\neighbor_{c+1}} e^{\lambda(\one\{u\in\neighbor_c\}+\mu d_u)} + (n-|\neighbor_{c+1}|)}.
\]
Moreover $F$ is smooth, strictly increasing, with $F(-\infty) = 0$, $F(+\infty) = \max_{v\in\vertexSet} d_v$ and $0 < F(\mu)<\max_{v\in\vertexSet} d_v$ for all finite $\mu$.
\end{lemma}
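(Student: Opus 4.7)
The plan is to compute the logits $\xi_v$ in closed form using orthogonality of the embeddings, differentiate the cross-entropy loss using the standard softmax identity, exploit the permutation averaging to collapse all $\{\mu_v\}$ into a single scalar while simultaneously producing the $1/n$ normalization, and finally verify the stated analytic properties of $F$.

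First I would expand the forward pass \eqref{eq:forward_pass_thought_gen} with the reparameterization $\mW = \sum_v \mu_v \embd(v) \embd_\tokenSource(v)^\top$. By \Cref{assump:embedding_orthonormal} and the forms of $\vh_{\thought{c}}$ and $\vh_i$, the score collapses to $\vh_{\thought{c}}^\top \mW \vh_i = \lambda \mu_{\tokenSource_i} \one\{\tokenSource_i \in \neighbor_c\}$, yielding $\xi_v = \lambda \one\{v \in \neighbor_c\} + \lambda \sum_{u \in \neighbor_c} \mu_u \one\{(u \to v) \in \edgeSet\}$. In particular $\xi_v = 0$ for every $v \notin \neighbor_{c+1}$, since such $v$ lies outside $\neighbor_c$ and has no in-edge from $\neighbor_c$; this produces the $(n - |\neighbor_{c+1}|)$ constant term in the denominator of $F$. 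Differentiating gives $\nabla_{\mu_w} \xi_v = \lambda \one\{w \in \neighbor_c\} \one\{(w \to v) \in \edgeSet\}$.

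Next I would handle the permutation averaging. A direct relabeling shows that $\nabla_{\mu_w} \ell^\coco_{\pi(\graph), \pi(\tokenStart), \pi(p)}$ depends on $w$ only through $w' := \pi^{-1}(w)$, so averaging uniformly over $\pi \in S_\vertexSet$ replaces $w'$ with a uniform random vertex in $\vertexSet$. This immediately implies $\nabla_{\mu_w} \loss^\coco$ is independent of $w$, so gradient flow preserves the symmetric tying $\mu_v(t) \equiv \mu(t)$. Combining the softmax identity $\nabla_\theta \ell^\coco = -\nabla_\theta \xi_{p_{c+1}} + \sum_v \tfrac{e^{\xi_v}}{Z} \nabla_\theta \xi_v$ with the expression above, the uniform average over $w'$ delivers the prefactor $1/n$ and collapses the sum $\sum_{w' \in \neighbor_c} \one\{(w' \to v) \in \edgeSet\}$ into $d_v$. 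Substituting $\xi_v = \lambda(\one\{v \in \neighbor_c\} + \mu d_v)$ yields precisely $\dot\mu = \frac{\alpha}{n\sqrt{K}}(d_{p_{c+1}} - F(\mu))$ with $F$ as stated.

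Finally I would verify the properties of $F$. Smoothness is immediate: $F$ is a rational function of exponentials whose denominator is bounded below by $n - |\neighbor_{c+1}| \geq 1$, using the fact that the unreachable candidate $\tokenEnd_\perp$ guarantees $\neighbor_{c+1} \subsetneq \vertexSet$. For strict monotonicity I would rewrite $F(\mu) = \E_{u \sim p(\mu)}[d_u]$, where $p(\mu)$ is the softmax on $\vertexSet$ with logits $\xi_u$ (extending $d_u = 0$ for $u \notin \neighbor_{c+1}$), and differentiate to obtain $F'(\mu) = \lambda \Var_{u \sim p(\mu)}[d_u]$, which is strictly positive since $p$ has full support and $d_u$ takes at least two distinct values (e.g., $d_{\tokenEnd_\perp} = 0$ and $d_{p_{c+1}} \geq 1$). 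The limits $F(-\infty) = 0$ and $F(+\infty) = \max_v d_v$ follow by dividing numerator and denominator by $1$ and by $e^{\lambda \mu \max_v d_v}$ respectively, and the strict inequalities for finite $\mu$ follow from the same coexistence of $d_u$ values together with positivity of $p_u$. The main technical obstacle is the symmetry-reduction step: one must carefully track that $\pi$ acts on instances but not on parameter indices, and check that the uniform average produces exactly the $1/n$ factor appearing in the final ODE.
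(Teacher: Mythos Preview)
Your proposal is correct and follows essentially the same route as the paper (explicit logits via orthogonality, the softmax-gradient identity, permutation averaging to collapse to a scalar ODE, then direct analysis of $F$), with the minor variation that you prove $F'>0$ via the clean variance identity $F'(\mu)=\lambda\,\Var_{p(\mu)}[d_u]$ rather than the paper's Cauchy--Schwarz computation. One caution on the symmetry step: the per-instance gradient satisfies $\nabla_{\mu_w}\ell^\coco_{\pi(\graph),\ldots}(\vmu)=\partial_{\mu_{\pi^{-1}(w)}}\ell^\coco_{\graph,\ldots}(\vmu^\pi)$, which does depend on $w$ only through $\pi^{-1}(w)$ for fixed $\pi$, but it also depends on $\pi$ through $\vmu^\pi$, so averaging over $\pi$ does \emph{not} by itself make $\nabla_{\mu_w}\loss^\coco$ independent of $w$ unless you first impose that $\vmu$ is symmetric (so $\vmu^\pi=\vmu$); the paper states this hypothesis explicitly before averaging, and your argument needs it in the same place rather than as a consequence.
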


The proof is deferred to \Cref{app:sec_proof_thought_gen}. Note that as long as $d_{p_{c+1}} \neq \max_{v\in\vertexSet} d_v$, $\mu(t)$ will converge to $\mu_\star < \infty$. In contrast, under \coconutBFS{} with loss $\loss^\bfs$, $\mu(t)$ will diverge to infinity. We formalize the comparison into the following theorem and defer the proof to \Cref{app:sec_proof_thought_gen}.

\begin{theorem}[Bounded vs.\ divergent attention logits under \coconut{} vs.\ \coconutBFS{}; informal version of \Cref{thm:mu-dynamics-coconut} \& \Cref{lem::dynamics_mu_BFS} in \Cref{app:sec_proof_thought_gen}]
\label{thm:main-informal-boundedness}
Let $d_\star:=d_{p_{c+1}}$ and $d_{\max}:=\max_{v} d_v$.
\begin{itemize}
\item[\textbf{(i)}] Under \coconutBFS{} \eqref{eq:main-bfs-loss}, $\mu(t)$ grows at least logarithmically in $t$, leading to unbounded attention logits.
\item[\textbf{(ii)}] Under \coconut{} \eqref{eq:main-coco-loss}, if $d_\star<d_{\max}$ then $\mu(t)\to\mu^\star<\infty$, so all attention logits remain uniformly bounded. If $d_\star=d_{\max}$, then $\mu(t)\to\infty$ at least in a \emph{logarithmic} rate.
\end{itemize}
\end{theorem}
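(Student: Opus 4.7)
Both parts reduce to analyzing a scalar gradient-flow ODE for the symmetric parameter $\mu(t)$. For part (ii), this ODE is already supplied by \Cref{lem:main-informal-ode}. For part (i), I first repeat the symmetry-and-reparameterization calculation behind \Cref{lem:main-informal-ode}, only with $\ell^{\coco}$ replaced by $\ell^{\bfs}$. Since the forward pass, and hence the logits $\xi_v(\mu) = \lambda(\one\{v \in \neighbor_c\} + \mu d_v)$ for $v \in \neighbor_{c+1}$ and $\xi_v(\mu) = 0$ otherwise, is loss-independent, only the gradient differs. Writing $B := n - |\neighbor_{c+1}|$, $Z_{c+1}(\mu) := \sum_{v \in \neighbor_{c+1}} e^{\xi_v(\mu)}$, and $Z(\mu) := Z_{c+1}(\mu) + B$, direct differentiation of $\ell^{\bfs} = \log Z - \log Z_{c+1}$ yields the BFS-ODE
\begin{equation*}
\dot\mu \;=\; \frac{c_0\, B\, G(\mu)}{Z(\mu)}, \qquad G(\mu) := \frac{1}{Z_{c+1}(\mu)}\sum_{v \in \neighbor_{c+1}} d_v\, e^{\xi_v(\mu)},
\end{equation*}
for a positive constant $c_0$ absorbing the learning rate and permutation-averaging factors.

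\textbf{Part (ii).} When $d_\star < d_{\max}$, \Cref{lem:main-informal-ode} asserts that $F$ is smooth, strictly increasing, and bijective from $\mathbb{R}$ onto $(0, d_{\max})$; so there is a unique $\mu^\star$ with $F(\mu^\star) = d_\star$, and strict monotonicity gives $(\mu - \mu^\star)(F(\mu) - d_\star) > 0$ whenever $\mu \neq \mu^\star$. This makes $\mu^\star$ a globally attracting equilibrium of $\dot\mu = c(d_\star - F(\mu))$ by a standard Lyapunov argument, so $\mu(t) \to \mu^\star < \infty$ and all logits remain bounded. When $d_\star = d_{\max}$, $F(\mu) < d_{\max}$ for every finite $\mu$, so $\dot\mu > 0$ always and $\mu(t)$ is monotone increasing; a finite limit $\mu_\infty$ would force $\dot\mu \to c(d_{\max} - F(\mu_\infty)) > 0$, a contradiction, so $\mu(t) \to \infty$. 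For the rate, I use the crude bound $F(\mu) \leq d_{\max} \cdot Z_{c+1}/Z$, which rearranges to $d_{\max} - F(\mu) \geq d_{\max}\, B/Z(\mu)$, together with the elementary upper bound $Z(\mu) \leq n\, e^{\lambda(1 + d_{\max}\mu)}$. These yield $\dot\mu \geq c_1\, e^{-\lambda d_{\max}\mu}$ for an explicit $c_1 > 0$; separation of variables then gives $e^{\lambda d_{\max}\mu(t)} \geq e^{\lambda d_{\max}\mu(0)} + c_2 t$, i.e.\ $\mu(t) = \Omega(\log t)$.

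\textbf{Part (i).} From the BFS-ODE above, $\dot\mu > 0$ for all $\mu$, so $\mu$ is monotone. It cannot converge to a finite limit either, because that would force $B G / Z \to 0$, impossible since $B > 0$ and $G$ stays bounded away from $0$: as $\mu \to \infty$ the terms $v \in S_\star := \{v \in \neighbor_{c+1} : d_v = d_{\max}\}$ dominate $Z_{c+1}$, so $G(\mu) \to d_{\max}$ and in particular $G(\mu) \geq d_{\max}/2$ for all sufficiently large $\mu$. Combined with the same upper bound $Z(\mu) \leq n\, e^{\lambda(1 + d_{\max}\mu)}$, this yields $\dot\mu \geq c_3\, e^{-\lambda d_{\max}\mu}$ for all large $\mu$, and hence $\mu(t) = \Omega(\log t) \to \infty$, so all attention logits diverge.

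\textbf{Main obstacle.} The delicate step is the asymptotic book-keeping that pins down the dominant exponential terms in $F$ and $G$ as $\mu \to \infty$ and extracts explicit constants $c_i$ in the lower bound $\dot\mu \geq c_i\, e^{-\lambda d_{\max}\mu}$. The strategy above bypasses the finer gap analysis between $d_{\max}$ and the second-largest $d_v$ by exploiting the always-positive constant $B$ in the denominator of $F$ (and of $Z$), which already suffices for the claimed logarithmic rate; a matching upper bound $\mu(t) = O(\log t)$ would require that sharper gap analysis, but is not needed for the "at least logarithmic" conclusion stated in the theorem.
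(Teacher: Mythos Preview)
Your proposal is correct and follows essentially the same route as the paper: reduce to a scalar ODE, use the monotonicity and range of $F$ for the \coconut{} case, and for the logarithmic lower bounds exploit the strictly positive constant $B=n-|\neighbor_{c+1}|$ in the denominator together with the crude upper bound $Z(\mu)\le n\,e^{\lambda(1+d_{\max}\mu)}$, then separate variables.

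The only tactical difference is in Part~(i): to lower-bound the BFS drift, you use the asymptotic $G(\mu)\to d_{\max}$ (hence $G\ge d_{\max}/2$ eventually), whereas the paper uses the pointwise inequality $\sum_{v}d_v e^{\xi_v}\ge e^{\xi_+}-e^{\xi_{\tokenStart}}$, which follows from $d_v\ge 1$ for every $v\in\neighbor_{c+1}\setminus\{\tokenStart\}$ and yields a bound valid from $\mu(0)=0$ with explicit constants $c_1=\tfrac{1}{2\sqrt{K}}$, $c_2=n^{-3}e^{-2}$. Your version gives a tighter exponent ($\lambda d_{\max}$ versus the paper's $2\sqrt K$) but only kicks in after a finite transient; since you already established $\mu(t)\to\infty$ beforehand, that transient is harmless and the $\Omega(\log t)$ conclusion follows as you claim.
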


\paragraph{Emergence of Superposition via Bounded Attention Logits.} By \Cref{thm:main-informal-boundedness}, as long as $F(0) < d_{p_{c+1}} < d_{\max}$, we have $\mu(t) \to \mu^* > 0$. Compared to many previous work~\citep{tian2023scan,nichani2024transformers,nguyen2025one} that analyze the dynamics of attention logits in ``discrete'' settings where the attention logits diverge to infinity, the \coconut{} training method in continuous setting usually result in bounded attention logits. The bounded attention logits lead to a more smooth probability distribution over next tokens, which is beneficial especially under uncertainty: when the model is uncertain about the next step, a more smooth probability distribution under continuous CoT mechanism results in a superposition of different plausible next steps, which implements an effective exploration; on the contrary, an unbounded logit will result in a one-hot-like distribution and thus the model will over-confidently commit to a plausible branch and is likely to discard the ground-truth branch even when the evidence is weak.

Finally, we show that with a positive value of $\mu$, the continuous thought $\thought{c+1}$ implements a one-step expansion from $\neighbor_c$ to $\neighbor_{c+1}$ for any graph $\graph$ and root node $\tokenStart$.

\begin{theorem}[One-step frontier expansion; informal version of \Cref{thm:thought-generation-one-hop} in \Cref{app:sec_proof_thought_gen}]
\label{thm:main-informal-onehop}
For any graph $\graph$ and root node $\tokenStart$, if the current thought is any positive superposition on $\neighbor_c^\graph(\tokenStart)$, i.e., $\thought{c} \;=\;\sum_{u\in\neighbor_c} \lambda_u\,\embd(u)$ with $\lambda_u>0$,  then the next thought $\thought{c+1}$ satisfies that its token-projected output
$\mTokenEmbd^\top\thought{c+1}$ is supported on the one-step expansion $\neighbor_{c+1}$ and has strictly positive mass on every node in $\neighbor_{c+1}$ if $\mu > 0$. In particular,
\[
\mTokenEmbd^\top \thought{c+1} = \sum_{v\in\neighbor_{c+1}} \beta_v\,\onehot_v
\]
with 
\[
\beta_v
\;=\;
\underbrace{\lambda_v\,\one\{v\in\neighbor_c\}}_{\text{carryover}}
\;+\;
\underbrace{\mu\sum_{u\in\neighbor_c}\lambda_u\,\one\{(u\to v)\in\edgeSet\}}_{\text{one-hop expansion}}
\;\;\ge\;0.
\]
\end{theorem}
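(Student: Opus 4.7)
The plan is a direct forward computation using the linear-attention forward pass~\eqref{eq:forward_pass_thought_gen} together with the orthonormality structure of \Cref{assump:embedding_orthonormal} and the reparameterization~\eqref{eq:reparam}. First I will substitute $\mW = \mu\sum_{v\in\vertexSet}\embd(v)\,\embd_\tokenSource(v)^\top$ (with $\mu_v\equiv\mu$ by the symmetry argument underlying \Cref{lem:main-informal-ode}) and $\vh_i=\embd_\tokenSource(\tokenSource_i)+\embd(\tokenTarget_i)$ into the bilinear attention score $\vh_{\thought{c}}^\top\mW\vh_i$. Using the orthonormality relations, the right factor simplifies to $\mW\vh_i=\mu\,\embd(\tokenSource_i)$, because the $\embd(\tokenTarget_i)$ piece of $\vh_i$ is annihilated by the trailing $\embd_\tokenSource(v)^\top$ while the $\embd_\tokenSource(\tokenSource_i)$ piece picks out the $v=\tokenSource_i$ summand. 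Expanding $\vh_{\thought{c}}=\sum_{u\in\neighbor_c}\lambda_u\,\embd(u)$ against $\embd(\tokenSource_i)$ then gives the attention score $\mu\,\lambda_{\tokenSource_i}$, with the convention that this is zero whenever $\tokenSource_i\notin\neighbor_c$.

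Next I will assemble the post-attention vector $\vh_{\thought{c}} + \sum_i(\vh_{\thought{c}}^\top\mW\vh_i)\,\vh_i$ and project it by $\mTokenEmbd^\top$. Under \Cref{assump:embedding_orthonormal}, every $\embd_\tokenSource(\cdot)$ term is killed by $\mTokenEmbd^\top$, while each $\embd(v)$ maps to $\onehot_v$. Reading off the coefficient of $\onehot_v$ gives the carryover $\lambda_v\,\one\{v\in\neighbor_c\}$ from the residual and $\mu\sum_{i:\tokenTarget_i=v}\lambda_{\tokenSource_i}$ from the attention output; reindexing the latter as $\mu\sum_{u\in\neighbor_c}\lambda_u\,\one\{(u\to v)\in\edgeSet\}$ (using that $\lambda_u=0$ for $u\notin\neighbor_c$) recovers exactly the stated formula for $\beta_v$.

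Finally, I will verify the support property. Since $v\in\neighbor_{c+1}$ is equivalent to either $v\in\neighbor_c$ or the existence of some $u\in\neighbor_c$ with $(u\to v)\in\edgeSet$, positivity of $\lambda_u$ on $\neighbor_c$ together with $\mu>0$ makes $\beta_v>0$ precisely on $\neighbor_{c+1}$, whereas both indicator terms vanish for $v\notin\neighbor_{c+1}$, giving $\beta_v=0$. I do not anticipate a genuine obstacle here — the argument is purely linear-algebraic bookkeeping. The only step requiring care is tracking which basis vectors live in the source subspace versus the token/target subspace so that cross terms collapse cleanly via the orthonormality assumption; once that is done, the second attention layer visibly implements the one-hop neighbor aggregation that advances the reachable frontier from $\neighbor_c$ to $\neighbor_{c+1}$.
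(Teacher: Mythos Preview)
Your proposal is correct and follows essentially the same route as the paper's proof: compute $\mW\vh_i=\mu\,\embd(\tokenSource_i)$ via orthonormality, obtain the attention score $\mu\,\lambda_{\tokenSource_i}\one\{\tokenSource_i\in\neighbor_c\}$, aggregate, project by $\mTokenEmbd^\top$ (killing the $\embd_\tokenSource$ components), and read off the carryover plus one-hop expansion coefficients. Your explicit verification that the support is exactly $\neighbor_{c+1}$ is a small addition the paper leaves implicit in its formal proof.
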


The proof is deferred to \Cref{app:sec_proof_thought_gen}. Note that at initialization where $\mu = 0$, we have $\beta_v = 0$ for $v \in \neighbor_{c+1}\backslash\neighbor_{c}$. This means every node outside $\neighbor_{c}$ has the same attention logits and thus the same next token probability. However, such an exploration is not an effective exploration since it blindly puts the same weight on almost every node in the graph without exploiting the graph structure. Therefore, an appropriate $\mu^* > 0$ effectively balances the exploration and exploitation: (1) it has a positive value so the model can exploit the graph structure and can distinguish nodes within the one-step expansion set; (2) it has a bounded value so it will not overconfidently commit to a plausible branch while discarding other branches merely relying on local structure (such as the indegree of the node) without global planning.

\section{Analysis of the Prediction Stage}
\label{sec:pred-main}

In this section, we study how the transformer learns to make the correct prediction $\tokenEnd_\star$ among $\{ \tokenEnd_1, \tokenEnd_2\}$ by utilizing the generated continuous thought. Note that according to \Cref{sec:thought-gen-main}, the model is able to generate $\thought{C} = \sum_{v\in\neighbor_C} \lambda_v \embd(v)$ with $\lambda_v \in (0, 1]$, a superposition of all reachable nodes within $C$ steps, via a balanced exploration and exploitation. We denote $\vlambda = \{ \lambda_v \}_{v\in\vertexSet}$. At the final stage, one appends a special answer token $\tokenAnswer$ at the end of the continuous CoT, i.e., $\vh_T = \vh_\tokenAnswer$, and make the final prediction $\widetilde{\transformer}_{\theta, C, \mTokenEmbd}(\vh_{[t_0]}) := \arg\max_{v\in \vocab} \mTokenEmbd^\top \transformer_\theta(\vh_{[T]})$.

\paragraph{The forward path and reparameterization.}
Similar to \eqref{eq:forward_pass_thought_gen}, we formulate the forward pass in the prediction stage as
\begin{equation}
\begin{aligned}
\label{eq:forward_pass_pred}
& \phi\!\left(\vh;\{\vh_i\}_i\right) = \sum_{i}\mValue \sigma(\vh^\top \mW \vh_i)\,\vh_i,\qquad
\\
& \vxi =  \mTokenEmbd^\top\!\big(\mu_\tokenAnswer \vh_{\PosIdx(\tokenAnswer)}+\phi(\vh_{\PosIdx(\tokenAnswer)};\{\vh_{\tokenReasoning}\})\big)\in\mathbb{R}^{M},
\end{aligned}
\end{equation}
where 
\[
\vh_{\PosIdx(\tokenReasoning)}=\embd(\tokenReasoning)+\embd(\tokenEnd_1)+\embd(\tokenEnd_2), \quad \vh_{\PosIdx(\tokenAnswer)}=\vh_{\thought C}+\embd(\tokenAnswer).
\]
Note that after the first transformer layer, the hidden state of $\tokenReasoning$ contains information of two candidate nodes $\tokenEnd_1$ and $\tokenEnd_2$ and the hidden state of $\tokenAnswer$ contains the representation of the last thought $\thought{C}$ both due to the copy mechanism in the first layer. Again, we adopt the linear attention $\sigma(\vh^\top \mW \vh_i)=\vh^\top \mW \vh_i$, fix $\mValue=\mI$ and use the reparameterization
\begin{equation}
\label{eq:reparam-pred}
\mW\;=\;\mu_{\tokenReasoning}\,\embd(\tokenAnswer)\embd(\tokenReasoning)^\top.
\end{equation}

\begin{remark}
    The scalar $\mu_\tokenReasoning$ denotes the attention logit strength from $\tokenAnswer$ to $\tokenReasoning$. The scalar $\mu_\tokenAnswer$ represents the signal strength of the residual stream from the first layer. Also, note that the reparameterization of $\mW$ in the prediction stage has a different form from \eqref{eq:reparam} in the thought generation stage. One can either view both   \eqref{eq:reparam} and \eqref{eq:reparam-pred} as special cases of a more general version \eqref{eq:reparam-general} in orthogonal subspaces, or view them as two different attention heads (a thought generation head and a prediction head).
\end{remark}

\paragraph{The loss function.}
In the prediction stage, the goal of the model is to predict the reachable candidate node $\tokenEnd_\star$, and thus the loss function can be written as 
\begin{equation}
\label{eq:main-pred-loss}
\ell^{\mathrm{pred}}_{\graph,\tokenStart, \tokenEnd_1, \tokenEnd_2, \vlambda}
\ :=\ -\log\frac{\exp\!\big(\xi_{c_\star}\big)}
{\sum_{v\in\vertexSet}\exp(\xi_{v})},
\qquad
\loss^{\mathrm{pred}}\ =\ \E_{(\graph,\tokenStart,\tokenEnd_1,\tokenEnd_2, \vlambda) \sim \dataset}[\ell^{\mathrm{pred}}_{\graph,\tokenStart,\tokenEnd_1,\tokenEnd_2, \vlambda}\big],
\end{equation}
where the $\dataset = \{ (\graph^{(i)}, \tokenStart^{(i)}, \tokenEnd_1^{(i)}, \tokenEnd_2^{(i)}, \vlambda^{(i)}) \}_{i}$ denote the training set. The following lemma provides closed-form logits for each vertex where the proof is deferred to \Cref{app:sec_proof_pred}.

\begin{lemma}[Closed-form logits; informal version of \Cref{lem:pred-logits-tied} in \Cref{app:sec_proof_pred}]
\label{lem:main-informal-pred-logits}
The logit of each vertex $v \in \vertexSet$ has the form
\[
\xi_{v}\ =\ \underbrace{\mu_{\tokenAnswer}\,\lambda_{v}\,\one\{v\in\neighbor_C\}}_{\text{residual carryover}}
\ +\ \underbrace{\mu_{\tokenReasoning}\,\one\{v\in\{\tokenEnd_1,\tokenEnd_2\}\}}_{\text{candidate lift}}.
\]
\end{lemma}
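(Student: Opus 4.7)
The plan is to derive the closed-form logits by direct computation, exploiting \Cref{assump:embedding_orthonormal} (orthonormal embeddings, $\embd_\tokenTarget\equiv\embd$) together with the single-key structure of the prediction-stage attention. First I would plug the reparameterization $\mW=\mu_{\tokenReasoning}\,\embd(\tokenAnswer)\embd(\tokenReasoning)^\top$ into the forward pass \eqref{eq:forward_pass_pred}. Because $\mValue=\mI$, $\sigma$ is the identity (linear attention), and there is only one key/value $\vh_{\PosIdx(\tokenReasoning)}$, the attention aggregation collapses to $\phi(\vh_{\PosIdx(\tokenAnswer)};\{\vh_{\PosIdx(\tokenReasoning)}\}) = \big(\vh_{\PosIdx(\tokenAnswer)}^\top \mW\,\vh_{\PosIdx(\tokenReasoning)}\big)\,\vh_{\PosIdx(\tokenReasoning)}$, a scalar times $\vh_{\PosIdx(\tokenReasoning)}$.

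Next I would evaluate that scalar coefficient. Using the rank-one form of $\mW$, the coefficient factorizes as $\mu_{\tokenReasoning}\,\big(\vh_{\PosIdx(\tokenAnswer)}^\top\embd(\tokenAnswer)\big)\,\big(\embd(\tokenReasoning)^\top\vh_{\PosIdx(\tokenReasoning)}\big)$. Substituting $\vh_{\PosIdx(\tokenAnswer)}=\vh_{\thought C}+\embd(\tokenAnswer)$ and $\vh_{\PosIdx(\tokenReasoning)}=\embd(\tokenReasoning)+\embd(\tokenEnd_1)+\embd(\tokenEnd_2)$ and invoking orthonormality (so that $\embd(\tokenAnswer),\embd(\tokenReasoning)$ are unit and orthogonal to all vertex embeddings, and pairwise orthogonal to each other), both inner products equal $1$. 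Hence the attention coefficient equals $\mu_{\tokenReasoning}$, and $\phi$ reduces to $\mu_{\tokenReasoning}\big(\embd(\tokenReasoning)+\embd(\tokenEnd_1)+\embd(\tokenEnd_2)\big)$.

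Then I would assemble the pre-readout vector
\[
\mu_{\tokenAnswer}\vh_{\PosIdx(\tokenAnswer)}+\phi
= \mu_{\tokenAnswer}\!\sum_{u\in\neighbor_C}\!\lambda_u\,\embd(u)
+ \mu_{\tokenAnswer}\,\embd(\tokenAnswer)
+ \mu_{\tokenReasoning}\,\embd(\tokenReasoning)
+ \mu_{\tokenReasoning}\big(\embd(\tokenEnd_1)+\embd(\tokenEnd_2)\big),
\]
and apply the readout $\mTokenEmbd^\top$. For any vertex $v\in\vertexSet$, the $v$-th logit $\xi_v=\embd(v)^\top(\mu_{\tokenAnswer}\vh_{\PosIdx(\tokenAnswer)}+\phi)$ picks off only the terms whose embedding equals $\embd(v)$: the $\embd(\tokenAnswer)$ and $\embd(\tokenReasoning)$ terms vanish (distinct tokens, orthonormality), the sum over $\neighbor_C$ contributes $\mu_{\tokenAnswer}\lambda_v\,\one\{v\in\neighbor_C\}$, and the two candidate embeddings contribute $\mu_{\tokenReasoning}\one\{v=\tokenEnd_1\}+\mu_{\tokenReasoning}\one\{v=\tokenEnd_2\}=\mu_{\tokenReasoning}\one\{v\in\{\tokenEnd_1,\tokenEnd_2\}\}$ (using $\tokenEnd_1\neq\tokenEnd_2$). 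Summing these yields the claimed decomposition into a ``residual carryover'' and a ``candidate lift.''

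Since every step is a direct identity once orthonormality is in hand, there is no real obstacle; the only place that requires care is bookkeeping, namely verifying that special tokens ($\tokenBOS,\tokenEdge,\tokenQUERY,\tokenReasoning,\tokenAnswer$) are treated as lying in $\vocab$ but distinct from all vertex tokens, so that their embeddings are orthogonal to each $\embd(v)$ and drop out of the readout. Assumption~\ref{assump:embedding_orthonormal} can be applied token-wise to justify this, and the identification $\embd_\tokenTarget\equiv\embd$ ensures the thought representation $\vh_{\thought C}=\sum_{u\in\neighbor_C}\lambda_u\embd(u)$ lives in the same subspace that the readout $\mTokenEmbd^\top$ probes, so no cross-subspace leakage occurs.
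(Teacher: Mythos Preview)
Your proposal is correct and follows essentially the same approach as the paper's proof of \Cref{lem:pred-logits-tied}: both compute the scalar attention coefficient $\vh_{\PosIdx(\tokenAnswer)}^\top \mW\,\vh_{\PosIdx(\tokenReasoning)}=\mu_{\tokenReasoning}$ via the rank-one structure of $\mW$ and orthonormality, then read off the vertex logits from $\mu_{\tokenAnswer}\vh_{\PosIdx(\tokenAnswer)}+\mu_{\tokenReasoning}\vh_{\PosIdx(\tokenReasoning)}$ by projecting onto $\embd(v)$. The paper's write-up is slightly more compact (it first computes $\mW\vh_{\PosIdx(\tokenReasoning)}=\mu_{\tokenReasoning}\embd(\tokenAnswer)$ and then applies $\mTokenEmbd^\top$ to each piece to get one-hot expansions), but the substance is identical.
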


According to \Cref{lem:main-informal-pred-logits}, only the candidate node $\tokenEnd_\star$ has both positive residual carryover and candidate lift, and an appropriate relative growth rate of $\mu_\tokenReasoning$ and $\mu_\tokenAnswer$ ensures that $\tokenEnd_\star$ has the largest logit. We formalize the result in the following theorem with proof in \Cref{app:sec_proof_pred}.

\begin{theorem}[Prediction of the reachable candidate node; informal version of \Cref{thm:generalization} in \Cref{app:sec_proof_pred}]
\label{thm:pred-generalization}
Denote $\mu_A = \mu_\tokenAnswer$ and $\mu_R = \mu_\tokenReasoning$. Let $(\mu_\tokenAnswer(t),\mu_\tokenReasoning(t))$ follow gradient flow on loss defined in \eqref{eq:main-pred-loss}. Suppose
\[
\lambda_\star\ :=\ \min_i\ \lambda_{\tokenEnd_\star}^{(i)}\ \in (0,1],\qquad
\Delta_{\mathrm{train}}\ :=\ {\max}_i\ {\max}_{v\in\Nc^{(i)}\setminus\{\tokenEnd_\star^{(i)}\}}\ \big(\lambda_v^{(i)}-\lambda_{\tokenEnd_\star}^{(i)}\big)_+\ \in[0,1].
\]
Then we have 
\[
\frac{(\mu_A(t),\mu_R(t))}{\|(\mu_A(t),\mu_R(t))\|}\ \to\ u^\star,
\qquad \|(\mu_A(t),\mu_R(t))\|\ \to\ \infty,
\]
with $u_R^\star/u_A^\star =  \lambda_\star+\Delta_{\mathrm{train}}$, and $u_R^\star, u_A^\star > 0$.
Consequently, for any unseen instance $(\graph,\tokenStart,\tokenEnd_1,\tokenEnd_2, \vlambda)$ satisfying $\lambda_v \in (0,1]$ on $\Nc$ and $0$ otherwise, and  $\max_v \lambda_v - \lambda_{\tokenEnd_\star} \leq \Delta_{\mathrm{train}}$, it holds that:
\begin{align*}
p_{c_\star}(t)\ :=\ 
\frac{\exp\big(\xi_{c_\star}(\mu_A(t),\mu_R(t))\big)}{\sum_{v}\exp\big(\xi_{v}(\mu_A(t),\mu_R(t))\big)}
\ \xrightarrow[t\to\infty]{}\ 1.
\end{align*}
\end{theorem}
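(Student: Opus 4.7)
The plan is to observe that after substituting the closed-form logits from \Cref{lem:main-informal-pred-logits}, the loss $\loss^{\mathrm{pred}}$ becomes a two-dimensional multi-class softmax classification problem in the parameters $(\mu_A,\mu_R)$. For each training instance $i$ and each distractor vertex $v\neq\tokenEnd_\star^{(i)}$, the logit gap $\xi_{\tokenEnd_\star^{(i)}}-\xi_v$ is a linear functional of $(\mu_A,\mu_R)$ taking one of three forms: against $\tokenEnd_\perp^{(i)}$ it equals $\mu_A\lambda_{\tokenEnd_\star^{(i)}}^{(i)}$; against $v\in\Nc^{(i)}\setminus\{\tokenEnd_\star^{(i)}\}$ it equals $\mu_A(\lambda_{\tokenEnd_\star^{(i)}}^{(i)}-\lambda_v^{(i)})+\mu_R$; and against any other vertex it equals $\mu_A\lambda_{\tokenEnd_\star^{(i)}}^{(i)}+\mu_R$. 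On the open positive cone $\{\mu_A,\mu_R>0\}$ the third form is strictly dominated by the first, so only the first two families of margins are ever binding.

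First I would set up and solve the associated max-margin program. Taking the infimum over training instances, the two binding families collapse to $\mu_A\lambda_\star$ and $\mu_R-\mu_A\Delta_{\mathrm{train}}$. Because $\lambda_\star>0$, any direction with $\mu_A>0$ and $\mu_R>\Delta_{\mathrm{train}}\mu_A$ strictly separates all training examples, so the data is linearly separable. The program $\max_{\|(\mu_A,\mu_R)\|=1}\min(\mu_A\lambda_\star,\ \mu_R-\mu_A\Delta_{\mathrm{train}})$ admits a unique maximizer, and by a standard KKT argument both constraints must be active at the optimum, yielding $\mu_R^\star/\mu_A^\star=\lambda_\star+\Delta_{\mathrm{train}}$ with $\mu_A^\star,\mu_R^\star>0$. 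This is the candidate limiting direction $u^\star$.

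Next I would invoke the implicit-bias theorem for gradient flow on separable softmax cross-entropy loss to conclude that $\|(\mu_A(t),\mu_R(t))\|\to\infty$ and $(\mu_A(t),\mu_R(t))/\|(\mu_A(t),\mu_R(t))\|\to u^\star$. Since the problem is only two-dimensional, one can alternatively give a self-contained argument by examining the planar phase portrait: gradient flow strictly decreases the loss, whose sublevel sets are bounded away from any half-plane that fails to separate, forcing the trajectory into the positive cone and along the unique max-margin ray. For the generalization claim, I would evaluate the limiting logit gaps on any unseen instance obeying $\lambda_v\in(0,1]$ on $\Nc$ and $\max_v\lambda_v-\lambda_{\tokenEnd_\star}\le\Delta_{\mathrm{train}}$. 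Along $u^\star$, the gap against $\tokenEnd_\perp$ is $u_A^\star\lambda_{\tokenEnd_\star}>0$; the gap against $v\in\Nc\setminus\{\tokenEnd_\star\}$ is $u_A^\star(\lambda_{\tokenEnd_\star}-\lambda_v)+u_R^\star\ge u_A^\star\lambda_\star>0$ (using $u_R^\star=u_A^\star(\lambda_\star+\Delta_{\mathrm{train}})$ and $\lambda_v-\lambda_{\tokenEnd_\star}\le\Delta_{\mathrm{train}}$); and the gap against all remaining vertices is $u_A^\star\lambda_{\tokenEnd_\star}+u_R^\star>0$. Since every margin is strictly positive in the limiting direction along which $\|(\mu_A(t),\mu_R(t))\|\to\infty$, all logit gaps tend to $+\infty$, and softmax monotonicity yields $p_{\tokenEnd_\star}(t)\to 1$.

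The main obstacle will be the rigorous invocation, or self-contained reproof, of the implicit-bias result in this tied-parameter multi-class setting. Specifically, one must verify that the gradient flow actually enters and remains in the positive cone from a reasonable initialization, so that the margin analysis above applies, and that the KKT optimum is indeed the balanced direction rather than a one-sided corner where only a single constraint is active. A careful sign analysis of $\nabla\loss^{\mathrm{pred}}$ along the boundary rays $\{\mu_A=0\}$ and $\{\mu_R=\Delta_{\mathrm{train}}\mu_A\}$, combined with the convexity of $\loss^{\mathrm{pred}}$ in $(\mu_A,\mu_R)$, should close both points.
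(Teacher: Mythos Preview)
Your proposal is correct and follows essentially the same route as the paper: compute the three families of logit gaps from \Cref{lem:main-informal-pred-logits}, observe that the third is dominated so the hard-margin envelope reduces to $\min\{u_A\lambda_\star,\ u_R-u_A\Delta_{\mathrm{train}}\}$, equalize to obtain $u_R^\star/u_A^\star=\lambda_\star+\Delta_{\mathrm{train}}$, invoke the implicit bias of gradient flow on separable cross-entropy data, and then check that the limiting direction yields strictly positive margins on any test instance satisfying the stated overshoot bound. The paper likewise treats the implicit-bias step by citation to \cite{soudry2018implicit} rather than giving a self-contained planar argument, so the obstacle you flag is handled in the paper in exactly the way you anticipate.
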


\section{Experiments}
\label{sec:exp-main}

In this section, we present experimental results validating the theoretical analysis. We first describe the setup and overall results, then analyze training dynamics in the thought generation and answer prediction stages.

\paragraph{Model.} 
We adopt a GPT-2 style decoder with two transformer layers ($d_\text{model}{=}768$, $n_\text{heads}{=}8$). The model is trained from scratch with AdamW ($\beta_1{=}0.9$, $\beta_2{=}0.95$, weight decay $10^{-2}$), a constant learning rate of $1\times10^{-4}$, and a global batch size of 256.

\paragraph{Dataset.} 
We follow the dataset from \citet{zhu2025reasoning}, which is a subset of ProsQA~\citep{hao2024training}. Different from \citet{zhu2025reasoning}, we randomly permute the vertex indices in both training and testing to avoid prediction bias and validate the symmetry assumption in \eqref{eq:reparam-general}. Dataset statistics are summarized in \autoref{tab:data‑stats}.

\paragraph{Training.} 
Following \citet{hao2024training, zhu2025reasoning}, we use a multi-stage training strategy with supervision from chain-of-thought demonstrations. At stage $c$, the model learns to use $c$ continuous thoughts before predicting the $c$-th node on the reasoning path (\textit{thought-generation} stage). If $c > l$ (the CoT length), the model predicts the final answer after $l$ continuous thoughts and the $\tokenAnswer$ token (\textit{prediction} stage). We train for 150 epochs at Stage~1 and 25 epochs for each subsequent stage, totaling 350 epochs. At each stage, data from earlier stages is mixed in with probability $0.1$, which prevents the model from forgetting abilities learned from previous stages. The final accuracy of this model on the test set is 96.2\%.

\subsection{Thought Generation}

To examine the training dynamics of $\mu_v$ under $\loss^{\coco}$, we track the second-layer attention logits. When generating the $c$-th continuous thought, $\mu_v$ corresponds to the logit on an edge token $\tokenEdge$ whose source lies in $\neighbor_c$. In practice, $\loss^{\coco}$ encourages the model to predict the current search frontier rather than revisiting explored nodes, so most attention concentrates on \emph{frontier edges}, i.e., edges with sources in $\neighbor_c \setminus \neighbor_{c-1}$. For theoretical simplicity, we assume $\mu_2=0$ in \eqref{eq:reparam-general}. In practice, however, the model does assign non-zero attention logits to other edges. Therefore, we report the logit difference between frontier and non-frontier edges on the test set, which more faithfully reflects the effective value of $\mu_v$.

\begin{figure}[htbp]
\centering
 \includegraphics[width=0.9\textwidth]{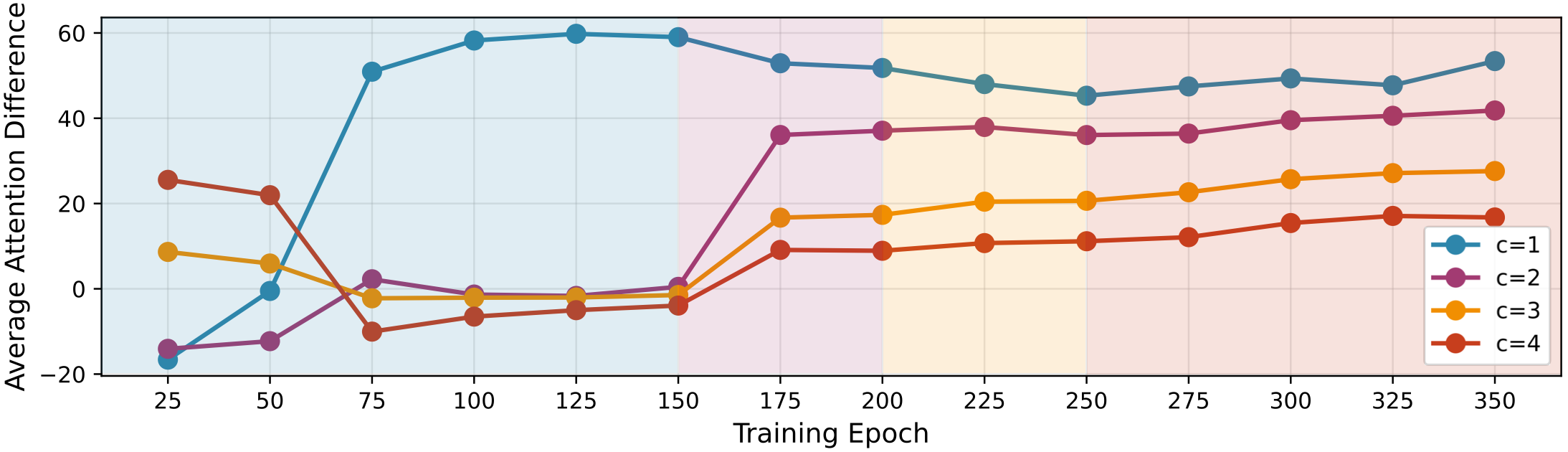}
    \caption{\small The attention logits difference between frontier edges and others under $\loss^{\coco}$ as a proxy for $\mu_v$. The background colors indicate different training stages.}
    \label{fig:attention_weight}
\end{figure}

Figure~\ref{fig:attention_weight} shows the results. In Stage~1 (blue background), the model gradually learns to attend to frontier edges when predicting the first continuous thought ($c=1$). The logit difference increases steadily and saturates around 60 after $\sim$125 epochs. This matches the theoretical prediction in \autoref{thm:main-informal-boundedness}: under $\loss^{\coco}$, $\mu_v$ first grows and then stabilizes at a bounded value. 

When switching to Stage~2 (purple background), the model requires far fewer epochs to establish a positive $\mu$ for $c=2$. Moreover, this pattern generalizes to $c=3$ and $c=4$, even though the model was never explicitly trained to generate more than two continuous thoughts. This ``length generalization'' indicates that once superposition emerges in earlier stages, later stages can quickly reuse it to expand the frontier further.

We also trained with a variant of $\loss^{\bfs}$. Compared to $\loss^{\coco}$, the attention logit difference when $c=1$ did not saturate but kept increasing to much higher values, consistent with the analysis in \autoref{thm:main-informal-boundedness}. Detailed experiments and plots are provided in Appendix~\ref{app:bfs-exp}.

\subsection{Answer Prediction}

We next analyze how the model predicts the final answer. According to \Cref{lem:main-informal-pred-logits}, the prediction relies on two signals. The first is the \emph{residual carryover}, which brings the explored nodes in the last thought $\thought{C}$ into the answer token with strength $\mu_A$. Concretely, this corresponds to the first-layer attention from \tokenAnswer\ to $\thought{C}$, which copies the superposition of reachable nodes. The second is the \emph{candidate lift}, which raises the logits of the two candidate nodes with strength $\mu_R$. Since \tokenReasoning\ copies the candidate nodes in the first layer, the second-layer attention from \tokenAnswer\ to \tokenReasoning\ serves as a proxy for $\mu_R$.\footnote{We observe that under different experimental settings and random seeds, the \emph{candidate lift} effect is not always mediated by the \tokenReasoning\ token; alternative attention routes are presented in the Appendix~\ref{app:subsec:cand_lift}.
}

Figure~\ref{fig:answer_prediction} shows the dynamics of these two proxies. Once training enters the \emph{prediction} stage, both $\mu_A$ and $\mu_R$ increase rapidly and stabilize after roughly 5 epochs. This observation is consistent with \autoref{thm:pred-generalization}, which states that $\mu_A$ and $\mu_R$ grow at comparable rates, ensuring that the reachable candidate $\tokenEnd^\star$ attains the highest logit. In contrast to the unbounded growth predicted in theory, we observe the logits plateau in practice. A possible reason is that, in practice, prediction-stage training also interacts with thought generation, whereas the theory assumes fixed thought distributions to focus on the relationship between $\mu_R$ and $\mu_A$. We leave a more detailed analysis to future work.%

\begin{figure}[htbp]
\centering
 \includegraphics[width=0.9\textwidth]{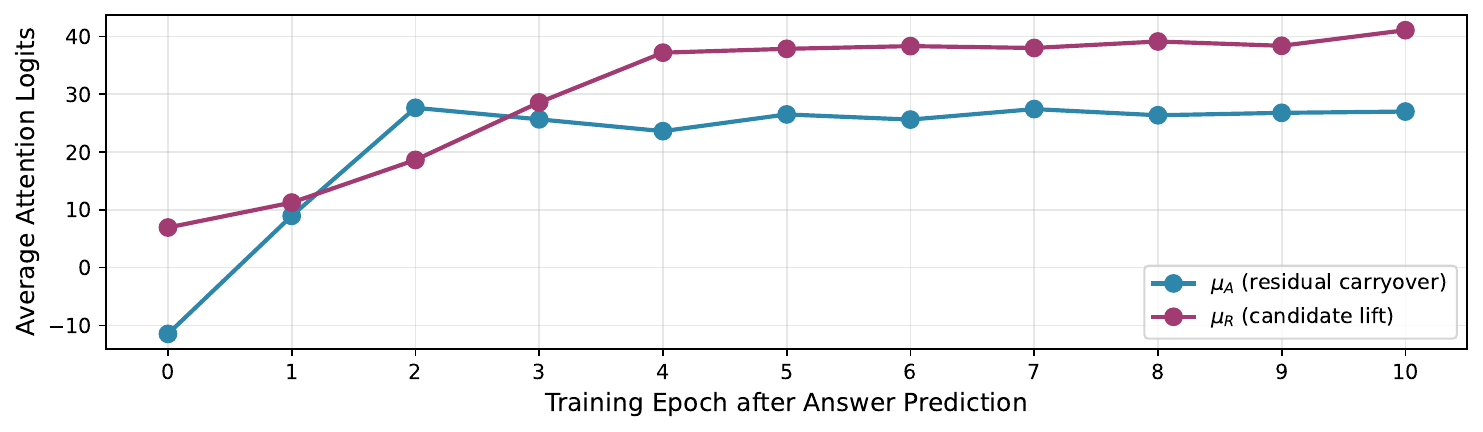}
 \caption{\small Training dynamics of the proxies for $\mu_A$ (residual carryover) and $\mu_R$ (candidate lift).}
 \label{fig:answer_prediction}
\end{figure}
\section{Conclusions}

In this paper, we study the emergence of superposition when training with continuous CoT. In particular, we theoretically analyze the training dynamics of a simplified two-layer transformer on the directed graph reachability problem. Our analysis shows that under mild assumptions, the index-matching logit, an important quantity showing the strength of the model's local search ability, remains bounded during training. A bounded index-matching logit effectively balances exploration and exploitation during the reasoning process and thus enables implicit parallel thinking, which naturally results in superposition. Our experimental results, which track the growth of logits, further validate our theory. We expect our theoretical analysis to bring new insights into a deeper understanding of the mechanism of continuous CoT and ultimately scaling up this promising paradigm more efficiently and reliably. 

\subsubsection*{Acknowledgments}

This work was partially supported by a gift from Open Philanthropy to the Center for Human-Compatible AI (CHAI) at UC Berkeley, Berkeley Existential Risk Initiative (BERI), and by NSF Grants IIS-1901252 and CCF-2211209. SH is grateful for the support provided by the Bloomberg Data Science Ph.D. Fellowship.

\bibliography{references}
\bibliographystyle{iclr2026_conference}

\newpage
\appendix
\section{Notation Details}
\label{app:sec:notation}

The notation and meaning of each token and the position index are the same as \citet{zhu2025reasoning}. For completeness, we provide detailed descriptions of different tokens in \Cref{tab:token_notation} (which is Table 2 in \citet{zhu2025reasoning}), and the position index of different tokens or continuous thoughts in \Cref{tab:pos_idx} (which is Table 3 in \cite{zhu2025reasoning}).

\begin{table}[ht]
    \centering
    \begin{tabular}{cc}
        \toprule
           \textbf{Tokens} & \textbf{Meanings} \\
        \midrule
          \tokenBOS & a special token denoting the beginning of the sentence \\
          $\tokenSource_i$ & the source node of edge $i$ \\ 
          $\tokenTarget_i$ & the target node of edge $i$ \\ 
          \tokenEdge & a special token marking the end of an edge \\ 
          \tokenQUERY & a special token followed by two candidate nodes \\
          $\tokenEnd_1, \tokenEnd_2$ & two candidate destination nodes  \\
          $\tokenReasoning$ & a special token marking the start of reasoning \\
          \tokenStart & the root node \\
          $\thought{i}$ & the $i$-th continuous thought (represented by a $d$-dimensional vector) \\ 
          $\tokenAnswer$ &  a special token driving the model to make the final prediction  \\
        \bottomrule
    \end{tabular}\\[0.5em]
    \caption{Meaning of each token (Table 2 in \citet{zhu2025reasoning}).}
    \label{tab:token_notation}
\end{table}

\begin{table}[ht]
    \centering
    \begin{tabular}{cc}
        \toprule
           \textbf{Notations} & \textbf{Position indices} \\
        \midrule
          $\PosIdx(\tokenBOS)$ & $1$ \\
          $\PosIdx(\tokenSource_i)$ & $3i-1$ \\
          $\PosIdx(\tokenTarget_i)$ & $3i$ \\ 
          $\PosIdx(\tokenEdge, i)$ & $3i+1$ \\
          $\PosIdx(\tokenQUERY)$ & $3m+2$ \\
          $\PosIdx(\tokenEnd_1)$ & $3m+3$ \\
          $\PosIdx(\tokenEnd_2)$ & $3m+4$ \\         $\PosIdx(\tokenReasoning)$ & $3m+5$ \\
          $\PosIdx(\tokenStart)$ & $3m+6 = t_0$ \\
          $\PosIdx(\thought{i})$ & $t_0+i$ \\
          $\PosIdx(\tokenAnswer)$ & $t_0+C+1 = T$ \\
        \bottomrule
    \end{tabular}\\[0.5em]
    \caption{Position indices of different tokens or continuous thoughts in the input sequence (Table 3 in \citet{zhu2025reasoning}).}
    \label{tab:pos_idx}
\end{table}
\section{Missing Proofs for \Cref{sec:thought-gen-main}}
\label{app:sec_proof_thought_gen}

In this section, we provide the full proof of theoretical results in \Cref{sec:thought-gen-main}. We first provide theoretical analysis of \coconut{}-BFS and \coconut{} in \Cref{app:subsec:analysis_coconut_BFS}, \Cref{app:subsec:analysis_coconut}, respectively, and provide results for continuous thought expansion in \Cref{app:subsec:thought_expansion}.

\subsection{Analysis of \coconut{}-BFS}
\label{app:subsec:analysis_coconut_BFS}

In this section, we analyze the training dynamics of \coconut{}-BFS. We first provide the closed-form formulation of the gradient $\nabla_{\mu_v} \ell^{\bfs}_{\graph,\tokenStart} = \nabla_{\mu_v} \ell^{\bfs}_{\graph,\tokenStart}(\vmu)$, where $\vmu = \{\mu_v\}_{v\in\vertexSet}$ is the set of parameters. We omit the superscript or subscript when the context is clear.  

\begin{lemma}[Per-instance gradient of $\mu_v$ for \coconut{}-BFS]
\label{lem:coconut_BFS_gradient_per_instance}
Under the loss function of \coconut{}-BFS as given in \eqref{eq:main-bfs-loss} and the forward pass as in \eqref{eq:forward_pass_thought_gen}, the per-instance gradient is 
\begin{align*}
    \nabla_{\mu_v} \ell(\vmu) = - \frac{\one\{ v \in \neighbor_{c} \}} {\sqrt{\left|\neighbor_{c}\right|}} \cdot
     \frac{ \sum_{v':(v\to v') \in \edgeSet}  \exp\left(\xi_{v'}\right)}{\exp\left(\xi_+\right)} \cdot \frac{n - \left|\neighbor_{c+1}\right|}{\exp\left(\xi_+\right) + n - \left|\neighbor_{c+1}\right|}
\end{align*}
for any $v \in \vertexSet$, where $\xi_+  = \log \left( \sum_{v\in\neighbor_{c+1}} \exp\left(\xi_v\right) \right)$.
\end{lemma}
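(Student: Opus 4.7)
My plan is to compute the logit vector $\vxi$ in closed form using Assumption~\ref{assump:embedding_orthonormal}, and then differentiate the log-softmax loss mechanically. First I would expand $\mW\vh_{\PosIdx(\tokenEdge,i)}$ under the index-matching reparameterization~\eqref{eq:reparam}: because $\vh_{\PosIdx(\tokenEdge,i)}=\embd_\tokenSource(\tokenSource_i)+\embd_\tokenTarget(\tokenTarget_i)$ and the families $\{\embd_\tokenSource(\cdot)\}$, $\{\embd_\tokenTarget(\cdot)\}$ are mutually orthogonal and orthonormal within each, the inner product $\embd_\tokenSource(v)^\top\vh_{\PosIdx(\tokenEdge,i)}$ isolates the source, so $\mW\vh_{\PosIdx(\tokenEdge,i)}=\mu_{\tokenSource_i}\embd(\tokenSource_i)$. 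Pairing with $\vh_{\PosIdx(\thought{c})}=\tfrac{1}{\sqrt{K}}\sum_{u\in\neighbor_c}\embd(u)$ and using $\embd_\tokenTarget\equiv\embd$ then produces the linear attention score $\vh_{\PosIdx(\thought{c})}^\top\mW\vh_{\PosIdx(\tokenEdge,i)}=\tfrac{\mu_{\tokenSource_i}}{\sqrt{K}}\one\{\tokenSource_i\in\neighbor_c\}$.

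Next I would read off the logit vector by projecting $\vh_{\PosIdx(\thought{c})}+\phi(\vh_{\PosIdx(\thought{c})};\{\vh_{\PosIdx(\tokenEdge,i)}\})$ onto $\embd(v)=\embd_\tokenTarget(v)$. Only edges whose target is $v$ and whose source lies in $\neighbor_c$ survive, giving the closed form
\[\xi_v=\tfrac{1}{\sqrt{K}}\Big(\one\{v\in\neighbor_c\}+\sum_{u\in\neighbor_c:\,(u\to v)\in\edgeSet}\mu_u\Big).\]
A crucial observation is that $\xi_v=0$ whenever $v\notin\neighbor_{c+1}$, since such $v$ lies neither in $\neighbor_c$ nor is a one-hop successor of any node in $\neighbor_c$. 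Differentiating gives the bookkeeping identity
\[\frac{\partial\xi_{v'}}{\partial\mu_v}=\tfrac{1}{\sqrt{K}}\,\one\{v\in\neighbor_c\}\,\one\{(v\to v')\in\edgeSet\}.\]

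The payoff is that whenever $v\in\neighbor_c$ and $(v\to v')\in\edgeSet$, automatically $v'\in\neighbor_{c+1}$; hence the gradients of the numerator $S_1:=\sum_{u\in\neighbor_{c+1}}e^{\xi_u}=e^{\xi_+}$ and of the full softmax denominator $S_2:=\sum_{u\in\vertexSet}e^{\xi_u}$ \emph{coincide}, both equal to $\tfrac{\one\{v\in\neighbor_c\}}{\sqrt{K}}\sum_{v':(v\to v')\in\edgeSet}e^{\xi_{v'}}$. Writing $\ell=-\log S_1+\log S_2$, the gradient telescopes to $\nabla_{\mu_v}\ell=\nabla_{\mu_v}S_1\cdot(S_1-S_2)/(S_1 S_2)$, and since $\xi_u\equiv 0$ on $\vertexSet\setminus\neighbor_{c+1}$ we have $S_2-S_1=n-|\neighbor_{c+1}|$. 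Substituting and rearranging yields the claimed product of the three factors with $\exp(\xi_+)$ in the middle.

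The main obstacle I anticipate is not analytic but symbolic: cleanly disentangling the three orthogonal embedding subspaces spanned by $\embd_\tokenSource$, $\embd_\tokenTarget$, and the output projection so that only the source and target of the attended edge $i$ contribute to the score, and verifying the structural equality of the sums ``$v':\,(v\to v')\in\edgeSet$'' and ``$v'\in\neighbor_{c+1}$ with $(v\to v')\in\edgeSet$'' when $v\in\neighbor_c$, which is precisely what forces $\nabla S_1=\nabla S_2$ and drives the telescoping. Once these two structural facts are pinned down, the remainder is a routine log-derivative manipulation.
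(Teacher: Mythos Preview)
Your proposal is correct and follows essentially the same approach as the paper: compute the closed-form logit $\xi_v$ via the orthonormal-embedding assumptions, observe that $\xi_v=0$ for $v\notin\neighbor_{c+1}$ so that the loss depends only on $\xi_+$, and then take a routine log-derivative. The only cosmetic difference is that the paper rewrites the loss as $-\log\bigl(1-\tfrac{n-|\neighbor_{c+1}|}{\exp(\xi_+)+n-|\neighbor_{c+1}|}\bigr)$ before differentiating, whereas you keep $\ell=-\log S_1+\log S_2$ and invoke $\nabla_{\mu_v}S_1=\nabla_{\mu_v}S_2$ directly; both encode the same structural fact and lead to the identical final expression.
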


\begin{proof}
    First, note that for any $v \in \vertexSet$, according to \eqref{eq:forward_pass_thought_gen}, the logit can be calculated as
    \begin{align*}
        \xi_v =& \embd(v)^\top \left( \vh_{\thought{c}} + \phi\left(\vh_{\thought{c}}; \{\vh_i\}_{i=1}^m\right) \right) \\
        =& \embd(v)^\top \left( \vh_{\thought{c}} + \mValue\sum_{i=1}^m \sigma\left( \vh_{\thought{c}}^\top \mW \vh_i \right)\vh_{i} \right) \\
        =& \embd(v)^\top \left( \vh_{\thought{c}} + \sum_{i=1}^m  (\vh_{\thought{c}}^\top \mW \vh_{i}) \vh_{i} \right) \\
        =& \embd(v)^\top \left( \vh_{\thought{c}} + \sum_{i=1}^m  \left(\left(\sum_{v' \in \neighbor_{c} } 
   \lambda \embd(v')\right)^\top \sum_{v' \in \vertexSet} \mu_{v'} \embd(v')\embd_\tokenSource(v')^\top (\embd_\tokenSource(\tokenSource_i) + \embd_\tokenTarget(\tokenTarget_i))\right) \vh_{i} \right)
   \\ =& \embd(v)^\top \left( \vh_{\thought{c}} + \sum_{i=1}^m  \left(\left(\sum_{v' \in \neighbor_{c} } 
   \lambda \embd(v')\right)^\top \mu_{\tokenSource_i} \embd(\tokenSource_i)\right) \vh_{i} \right)
   \\ =& \embd(v)^\top \left( \vh_{\thought{c}} + \sum_{i=1}^m 
     \lambda \mu_{\tokenSource_i} \one\{\tokenSource_i \in \neighbor_{c} \}  
   \vh_{i} \right)
   \\ =& \lambda  \cdot \one\{v \in \neighbor_{c}\} + \sum_{i=1}^m 
     \lambda \mu_{\tokenSource_i} \one\{\tokenSource_i \in \neighbor_{c} \} \cdot \one\{v = \tokenTarget_i\},
    \end{align*}
    where $\lambda = \frac{1}{\sqrt{\left|\neighbor_{c}\right|}}$. Note that by the definition of $\neighbor_{c}$, we have $\xi_v = 0$ if $v \notin \neighbor_{c+1}$. Therefore,
    \begin{align*}
        \ell(\vmu) =& -\log \frac{ \sum_{v\in\neighbor_{c+1}} \exp\left(\xi_v\right)}{\sum_{v\in\vertexSet} \exp\left(\xi_v\right)} \\
        =& -\log \frac{ \sum_{v\in\neighbor_{c+1}} \exp\left( \lambda  \cdot \one\{v \in \neighbor_{c}\} + \sum_{i=1}^m 
     \lambda \mu_{\tokenSource_i} \one\{\tokenSource_i \in \neighbor_{c} \} \cdot \one\{v = \tokenTarget_i\}\right)}{\sum_{v\in\vertexSet} \exp\left( \lambda  \cdot \one\{v \in \neighbor_{c}\} + \sum_{i=1}^m 
     \lambda \mu_{\tokenSource_i} \one\{\tokenSource_i \in \neighbor_{c} \} \cdot \one\{v = \tokenTarget_i\}\right)} 
     \\
     =& -\log \left( 1 - \frac{ n - \left|\neighbor_{c+1}\right| }{\sum_{v\in\vertexSet} \exp\left( \lambda  \cdot \one\{v \in \neighbor_{c}\} + \sum_{i=1}^m 
     \lambda \mu_{\tokenSource_i} \one\{\tokenSource_i \in \neighbor_{c} \} \cdot \one\{v = \tokenTarget_i\}\right)} \right).
    \end{align*}
    For simplicity, we define $\exp\left(\xi_+\right)  = \sum_{v\in\neighbor_{c+1}} \exp\left(\xi_v\right)$ and thus
    \begin{align*}
        \ell(\vmu) = -\log \left( 1 - \frac{ n - \left|\neighbor_{c+1}\right| }{\exp\left(\xi_+\right) + n - \left|\neighbor_{c+1}\right|  } \right).
    \end{align*}
    Then the per-instance gradient can be calculated as
    \begin{align*}
        &\nabla_{\mu_v} \ell(\vmu) \\ =& - \frac{\exp\left(\xi_+\right) + n - \left|\neighbor_{c+1}\right|}{\exp\left(\xi_+\right)} \cdot \frac{n - \left|\neighbor_{c+1}\right|}{\left(\exp\left(\xi_+\right) + n - \left|\neighbor_{c+1}\right|\right)^2} \cdot \nabla_{\mu_v} \exp\left(\xi_+\right) \\
        =&  - \frac{ \sum_{v'\in\neighbor_{c+1}}  \exp\left(\xi_{v'}\right)  \one\{ v \in \neighbor_{c} \} \sum_{i=1}^m 
     \lambda \one\{\tokenSource_i = v, \tokenTarget_i = v'\} }{\exp\left(\xi_+\right)} \! \cdot \! \frac{n - \left|\neighbor_{c+1}\right|}{\exp\left(\xi_+\right) + n - \left|\neighbor_{c+1}\right|} 
     \\  =&  -\lambda \cdot \one\{ v \in \neighbor_{c} \}
     \frac{ \sum_{v':(v\to v') \in \edgeSet}  \exp\left(\xi_{v'}\right)}{\exp\left(\xi_+\right)} \cdot \frac{n - \left|\neighbor_{c+1}\right|}{\exp\left(\xi_+\right) + n - \left|\neighbor_{c+1}\right|}. 
    \end{align*}
\end{proof}

Now we calculate the gradient of $\mu_v$ over the whole dataset, where the nodes of the graphs are randomly shuffled. We also write $\loss^{\bfs} = \loss^{\bfs}(\vmu)$ and omit the superscript when the context is clear.

\begin{lemma}[Whole-dataset gradient of $\mu_v$ for \coconut{}-BFS]
\label{lem:coconut_BFS_gradient_whole_dataset}
Under the loss function of \coconut{}-BFS as given in \eqref{eq:main-bfs-loss} and the forward pass as in \eqref{eq:forward_pass_thought_gen} and assuming all $\mu_v$ have the same value, the gradient w.r.t. the whole dataset is 
\begin{align*}
    \nabla_{\mu_v} \loss(\vmu) = - \frac{\exp\left(-\xi_{+}\right)}{n\cdot \sqrt{\left|\neighbor_{c}\right|}} \cdot  \frac{n - \left|\neighbor_{c+1}\right|}{\exp\left(\xi_{+}\right) + n - \left|\neighbor_{c+1}\right|} \sum_{v \in \vertexSet } d_v \exp\left(\xi_{v}\right)
\end{align*}
for any $v \in \vertexSet$ which is independent of $v$, where  $\xi_+  = \log \left( \sum_{v\in\neighbor_{c+1}} \exp\left(\xi_v\right) \right)$.
\end{lemma}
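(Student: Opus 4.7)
The plan is to derive the whole-dataset gradient by invoking the per-instance gradient from \Cref{lem:coconut_BFS_gradient_per_instance} and then exploiting the label symmetry built into the permutation-averaged loss. The starting point is the observation that, once all coordinates of $\vmu$ coincide at value $\mu$, the per-node logit admits the closed form $\xi_v = \lambda(\one\{v\in\neighbor_c\} + \mu\, d_v)$, where $\lambda=1/\sqrt{|\neighbor_c|}$ and $d_v=\indeg_{\graph,\neighbor_c}(v)$. Consequently the scalar quantities $\xi_+$, $|\neighbor_c|$, and $|\neighbor_{c+1}|$ are combinatorial invariants of $(\graph,\tokenStart)$, and under the relabelling $(\graph,\tokenStart)\mapsto(\pi(\graph),\pi(\tokenStart))$ the logit at a fixed node satisfies $\xi_v^{\pi(\graph)} = \xi_{\pi^{-1}(v)}^{\graph}$.

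Next, I would substitute the per-instance expression into $\loss^{\bfs} = \E_\pi[\ell^{\bfs}_{\pi(\graph),\pi(\tokenStart)}]$ and factor out every permutation-invariant piece to leave only
\[
\E_\pi\!\Big[\one\{v\in\neighbor_c^{\pi(\graph)}(\pi(\tokenStart))\}\!\!\sum_{v':(v\to v')\in\pi(\edgeSet)}\!\!\exp(\xi_{v'}^{\pi(\graph)})\Big]
\]
inside the expectation. Performing the change of variables $u=\pi^{-1}(v)$ and $u'=\pi^{-1}(v')$ turns this into $\E_{u\sim\Unif(\vertexSet)}\big[\one\{u\in\neighbor_c\}\sum_{u':(u\to u')\in\edgeSet}\exp(\xi_{u'})\big]$, using that $\pi^{-1}(v)$ is uniform on $\vertexSet$ when $\pi$ is, together with the bijections $\pi(\edgeSet)\leftrightarrow\edgeSet$ and $\neighbor_c^{\pi(\graph)}(\pi(\tokenStart))=\pi(\neighbor_c^{\graph}(\tokenStart))$.

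Finally, I would swap the order of summation to rewrite the resulting double sum as
\[
\frac{1}{n}\sum_{u'\in\vertexSet}\exp(\xi_{u'})\,\big|\{u\in\neighbor_c:(u\to u')\in\edgeSet\}\big|\;=\;\frac{1}{n}\sum_{u'\in\vertexSet} d_{u'}\exp(\xi_{u'}),
\]
which is precisely the sum appearing in the target formula. Reattaching the factored prefactor $-\frac{1}{\sqrt{|\neighbor_c|}}\cdot\frac{n-|\neighbor_{c+1}|}{\exp(\xi_+)\,[\exp(\xi_+)+n-|\neighbor_{c+1}|]}$ and collecting $\exp(-\xi_+)$ reproduces the claimed identity; moreover, since the label $v$ has disappeared from the right-hand side, the gradient is automatically uniform across $v\in\vertexSet$, sustaining the symmetric-initialization ansatz along the flow.

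The main obstacle I expect is the bookkeeping around how the edge set, the $c$-ball, the restricted indegrees, and the per-node logits transform jointly under $\pi$; a single miscounting could scramble the conclusion that everything collapses into $\sum_{v} d_v\exp(\xi_v)$. I would handle this by first isolating the auxiliary identity $\xi_v^{\pi(\graph)}=\xi_{\pi^{-1}(v)}^{\graph}$ as a short observation justified by the permutation equivariance of $\indeg_{\graph,\neighbor_c}$, after which all remaining manipulations reduce to a routine double-counting of directed edges emanating from $\neighbor_c$.
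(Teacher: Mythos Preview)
Your proposal is correct and follows essentially the same route as the paper: both use the per-instance gradient from \Cref{lem:coconut_BFS_gradient_per_instance}, establish the permutation equivariance $\xi_{\pi(v)}^{(\pi(\graph),\pi(\tokenStart))}=\xi_v^{(\graph,\tokenStart)}$ (together with invariance of $\xi_+,|\neighbor_c|,|\neighbor_{c+1}|$), reduce the permutation average to a uniform average over $\vertexSet$, and then swap the double sum over edges from $\neighbor_c$ to obtain $\sum_v d_v\exp(\xi_v)$. The only cosmetic difference is that the paper first records the intermediate identity $\nabla_{\mu_{\pi(v)}}\ell_{\pi(\graph),\pi(\tokenStart)}=\nabla_{\mu_v}\ell_{\graph,\tokenStart}$ before averaging, whereas you factor and change variables in one pass.
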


\begin{proof}
    Denote $\xi_{+}^{(\graph, \tokenStart)}  = \log \left( \sum_{v\in\neighbor_{c+1}^{\graph}(\tokenStart)} \exp\left(\xi_{v}^{(\graph, \tokenStart)}\right) \right)$, where $\xi_{v}^{(\graph, \tokenStart)}$ is the logit of $v$ when the graph in the prompt is $\graph$ and the start node is $\tokenStart$.
    
    According to \Cref{lem:coconut_BFS_gradient_per_instance} and the condition that all $\mu_v$ have the same value, for any permutation $\pi \in S_\vertexSet$, we have
    \begin{align*}
        &\xi_{\pi(v)}^{(\pi(\graph), \pi(\tokenStart))} \\ =& \frac{ \one\{\pi(v) \in \neighbor_{c}^{\pi(\graph)}(\pi(\tokenStart))\} + \sum_{i=1}^m 
      \mu_{\pi(\tokenSource_i)} \one\{\pi(\tokenSource_i) \in \neighbor_{c}^{\pi(\graph)}(\pi(\tokenStart)) \} \cdot \one\{\pi(v) = \pi(\tokenTarget_i)\}}{\sqrt{\left|\neighbor_{c}^{\pi(\graph)}(\pi(\tokenStart))\right|}}  
      \\ =& \frac{1}{\sqrt{\left|\neighbor_{c}^{\graph}(\tokenStart)\right|}} \left( \one\{v \in \neighbor_{c}^{\graph}(\tokenStart)\} + \sum_{i=1}^m 
      \mu_{\tokenSource_i} \one\{\tokenSource_i \in \neighbor_{c}^{\graph}(\tokenStart) \} \cdot \one\{v = \tokenTarget_i\} \right) \\
      =& \xi_{v}^{(\graph, \tokenStart)}.
    \end{align*}
    This also implies 
    \begin{align*}
        \exp\left(\xi_{+}^{(\pi(\graph), \pi(\tokenStart))}\right)  =& \sum_{\pi(v)\in\neighbor_{c+1}^{\pi(\graph)}(\pi(\tokenStart))} \exp\left(\xi_{\pi(v)}^{(\pi(\graph), \pi(\tokenStart))}\right) \\ = & \sum_{v\in\neighbor_{c+1}^{\graph}(\tokenStart)} \exp\left(\xi_{v}^{(\graph, \tokenStart)}\right)  \\ =&  \exp\left(\xi_{+}^{(\graph, \tokenStart)}\right).
    \end{align*}
    Therefore, by \Cref{lem:coconut_BFS_gradient_per_instance}, we can obtain that 
    \begin{align*}
        &\nabla_{\mu_{\pi(v)}} \ell_{\pi(\graph), \pi(\tokenStart)}(\vmu) \\ =& - \frac{\one\{ \pi(v) \in \neighbor_{c}^{\pi(\graph)}(\pi(\tokenStart)) \}} 
     {\sqrt{\left|\neighbor_{c}^{\pi(\graph)}(\pi(\tokenStart))\right|}} \cdot
     \frac{ \sum_{\pi(v'):(\pi(v)\to \pi(v')) \in \pi(\edgeSet)}  
            \exp\left(\xi_{\pi(v')}^{(\pi(\graph), \pi(\tokenStart))}\right)}
          {\exp\left(\xi_{+}^{(\pi(\graph), \pi(\tokenStart))}\right)} \\ & \quad \cdot 
     \frac{n - \left|\neighbor_{c+1}^{\pi(\graph)}(\pi(\tokenStart))\right|}
          {\exp\left(\xi_{+}^{(\pi(\graph), \pi(\tokenStart))}\right) + n - \left|\neighbor_{c+1}^{\pi(\graph)}(\pi(\tokenStart))\right|}
     \\ =& - \frac{\one\{ v \in \neighbor_{c}^{\graph}(\tokenStart) \}} {\sqrt{\left|\neighbor_{c}^{\graph}(\tokenStart)\right|}} \cdot
     \frac{ \sum_{v':(v\to v') \in \edgeSet}  \exp\left(\xi_{v'}^{(\graph, \tokenStart)}\right)}{\exp\left(\xi_{+}^{(\graph, \tokenStart)}\right)} \cdot \frac{n - \left|\neighbor_{c+1}^{\graph}(\tokenStart)\right|}{\exp\left(\xi_{+}^{(\graph, \tokenStart)}\right) + n - \left|\neighbor_{c+1}^{\graph}(\tokenStart)\right|}
     \\ =& \nabla_{\mu_{v}} \ell_{\graph, \tokenStart}(\vmu).
    \end{align*}
    Therefore, we can calculate the gradient with respect to the whole dataset as 
    \begin{align*}
          \nabla_{\mu_v} \loss(\vmu) =& \E_{\pi \sim \Unif(S_\vertexSet)} [\nabla_{\mu_{v}} \ell_{\pi(\graph), \pi(\tokenStart)}(\vmu)] \\ 
          =&  \E_{\pi \sim \Unif(S_\vertexSet)} [\nabla_{\mu_{\pi^{-1}(v)}} \ell_{\graph, \tokenStart}(\vmu)] \\
          =&  \E_{v' \sim \Unif(\vertexSet)} [\nabla_{\mu_{v'}} \ell_{\graph, \tokenStart}(\vmu)]
    \end{align*}
    which is independent of $v$ and thus the gradients for $\mu_v$ are equal for all $v \in \vertexSet$. Furthermore, we can calculate that
    \begin{align*}
        &\nabla_{\mu_v} \loss(\vmu) \\ =& \frac{1}{n}\sum_{v\in \vertexSet}\left(- \frac{\one\{ v \in \neighbor_{c} \}} {\sqrt{\left|\neighbor_{c}\right|}} \cdot
     \frac{ \sum_{v':(v\to v') \in \edgeSet}  \exp\left(\xi_{v'}\right)}{\exp\left(\xi_{+}\right)} \cdot \frac{n - \left|\neighbor_{c+1}\right|}{\exp\left(\xi_{+}\right) + n - \left|\neighbor_{c+1}\right|}\right)
     \\ =& - \frac{1}{n\cdot \sqrt{\left|\neighbor_{c}\right|}} \cdot  \frac{n - \left|\neighbor_{c+1}\right|}{\exp\left(\xi_{+}\right) + n - \left|\neighbor_{c+1}\right|} \sum_{v \in \neighbor_{c}}
     \frac{ \sum_{v':(v\to v') \in \edgeSet}  \exp\left(\xi_{v'}\right)}{\exp\left(\xi_{+}\right)} 
     \\ =&  - \frac{\exp\left(-\xi_{+}\right)}{n\cdot \sqrt{\left|\neighbor_{c}\right|}} \cdot  \frac{n - \left|\neighbor_{c+1}\right|}{\exp\left(\xi_{+}\right) + n - \left|\neighbor_{c+1}\right|} \sum_{v \in \neighbor_{c}}
      \sum_{v':(v\to v') \in \edgeSet}  \exp\left(\xi_{v'}\right) 
      \\ =&  - \frac{\exp\left(-\xi_{+}\right)}{n\cdot \sqrt{\left|\neighbor_{c}\right|}} \cdot  \frac{n - \left|\neighbor_{c+1}\right|}{\exp\left(\xi_{+}\right) + n - \left|\neighbor_{c+1}\right|} \sum_{v \in \vertexSet } d_v \exp\left(\xi_{v}\right).
    \end{align*}
\end{proof}

According to the gradient of $\mu_v$, we finally show that $\mu_v$ diverges to infinity at least logarithmically in $t$.

\begin{lemma}[Dynamics of $\mu_v$ for \coconut{}-BFS]
\label{lem::dynamics_mu_BFS}
Let $\mu_v(t)$ be the value of $\mu_v$ at time $t$. Assume zero-initialization, i.e., $\mu_v(0) = 0$ for all $v \in \vertexSet$. Under gradient flow 
\begin{equation}
\label{eq:gradient_flow_coconut_bfs}
\dot \mu_v =  -\alpha \cdot \nabla_{\mu_v} \loss^\bfs(\vmu)
\end{equation}
where $\alpha > 0$ is the learning rate, we have
\begin{align*}
    \mu_v(t) \geq c_1\ln\left(1+\alpha c_2t\right)
\end{align*}
for all $v \in \vertexSet$ where $c_1 =  \frac{1}{2\sqrt{\left|\neighbor_{c}\right|}}$, $c_2 =  n^{-3} e^{-2}$.
\end{lemma}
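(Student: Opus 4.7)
The plan is to combine the closed-form gradient from \Cref{lem:coconut_BFS_gradient_whole_dataset} with an ODE comparison argument. Specifically, I will show that $\dot\mu\geq \alpha C e^{-A\mu}$ for constants $A=2\sqrt{K}$ and $C=e^{-2}/(2n^3\sqrt{K})$; the comparison ODE $\dot{\tilde\mu}=\alpha C e^{-A\tilde\mu}$ with $\tilde\mu(0)=0$ integrates to $\tilde\mu(t)=\tfrac{1}{A}\ln(1+AC\alpha t)$, and substituting gives exactly $c_1=1/(2\sqrt K)$ and $AC=n^{-3}e^{-2}=c_2$, matching the target bound.

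First I reduce the vector gradient flow to a scalar ODE. Because $\mu_v(0)=0$ for every $v$, and because \Cref{lem:coconut_BFS_gradient_whole_dataset} guarantees that whenever $\vmu$ is a constant vector the gradient $\nabla_{\mu_v}\loss^\bfs$ is also $v$-independent, the diagonal subspace $\{\vmu=\mu\vone\}$ is preserved by \eqref{eq:gradient_flow_coconut_bfs}; hence I may set $\mu(t):=\mu_v(t)$ for all $v$. Plugging $\vmu=\mu\vone$ into the closed-form logit derived in the proof of \Cref{lem:coconut_BFS_gradient_per_instance} gives $\xi_v=\lambda(\one\{v\in\neighbor_c\}+\mu d_v)$ for $v\in\neighbor_{c+1}$ and $\xi_v=0$ otherwise, where $\lambda=1/\sqrt K$.

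The core technical step is to lower-bound $|\nabla_\mu\loss^\bfs|$ by $Ce^{-A\mu}$. For the numerator $\sum_v d_v e^{\xi_v}$, I pick any $v^\star\in\neighbor_{c+1}\setminus\neighbor_c$ (such a $v^\star$ has $d_{v^\star}\geq 1$ by definition of the frontier), which gives $\sum_v d_v e^{\xi_v}\geq e^{\lambda\mu}$. For the denominator $e^{\xi_+}(e^{\xi_+}+n-N)$, I use the uniform cap $e^{\xi_+}\leq N e^{\lambda(1+\mu d_{\max})}$ and a short case split on whether $e^{\xi_+}\gtrless n$ to obtain $e^{\xi_+}(e^{\xi_+}+n-N)\leq 2n^2 e^{2\lambda(1+\mu d_{\max})}$. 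Combining these (and using $n-N\geq 1$ and $\lambda\leq 1$, the latter because $K\geq 1$) produces an exponent of $-2\lambda-\lambda\mu(2d_{\max}-1)$ on the right-hand side, which I will absorb into the target form $-2\sqrt K\mu$ as described below.

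The main obstacle is that the desired coefficient $A=2\sqrt K$ in the exponent is tight and has to come out exactly so that $c_1=1/(2\sqrt K)$. This works only because of the sharp combinatorial fact $d_{\max}\leq |\neighbor_c|=K$, which converts $\lambda(2d_{\max}-1)\leq 2\sqrt K$ (since $\lambda=1/\sqrt K$); once this observation is in place, the remaining calculations are routine and the ODE comparison delivers the claim. The only implicit assumption required along the way is that the BFS has not terminated at step $c$, i.e.\ $\neighbor_{c+1}\setminus\neighbor_c\neq\emptyset$, which ensures both the existence of the $v^\star$ used in the numerator bound and that $n-N\geq 1$ in the denominator; in the degenerate terminated case the conclusion is either vacuous or trivial.
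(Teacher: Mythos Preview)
Your proposal is correct and follows essentially the same route as the paper: scalar reduction via the symmetry in \Cref{lem:coconut_BFS_gradient_whole_dataset}, a pointwise lower bound $\dot\mu\ge \tfrac{\alpha e^{-2}}{2n^3\sqrt K}\,e^{-2\sqrt K\mu}$ obtained by separately bounding numerator and denominator of the gradient, and finally an ODE comparison (the paper invokes its \Cref{lem::ODE_bounds}). The bookkeeping differs only cosmetically: the paper uses the monotonicity of $x\mapsto x/(e^{\xi_+}+x)$ to replace $n-N$ by $1$ and picks the path node $p_1$ for the numerator, whereas you keep $n-N$ and case-split, and pick a frontier node $v^\star$; both arrive at the same $e^{\lambda\mu}/\bigl(2n^2 e^{2\lambda(1+\mu d_{\max})}\bigr)$-type estimate and the same invocation of $d_{\max}\le K$.

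One small slip to fix: the assumption $\neighbor_{c+1}\setminus\neighbor_c\neq\emptyset$ does \emph{not} by itself imply $n-|\neighbor_{c+1}|\ge 1$. The latter follows instead from the problem's standing hypothesis that one candidate $\tokenEnd_\perp$ is unreachable from $\tokenStart$, hence $\tokenEnd_\perp\notin\neighbor_{c+1}$ for every $c$, which is exactly how the paper argues it. (Your frontier assumption is also automatically satisfied in the setup since $p_{c+1}\in\neighbor_{c+1}\setminus\neighbor_c$ for any $c<C$ by shortest-path minimality, so you do not need to treat it as an extra hypothesis.)
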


\begin{proof}
    First, by \Cref{lem:coconut_BFS_gradient_whole_dataset}, all $\dot \mu_v$ have the same value if all $\mu_v$ have the same value.
    Given that $\mu_{v}(0) = 0$ for all $v \in \vertexSet$, we can obtain that for any time $t$, $\mu_v(t)$ have the same value for all $v \in \vertexSet$ using similar argument as in Lemma 15 of \citet{huang2025generalization}.

    Now, given any fixed time $t \geq 0$, it holds that $\mu_v(t)$ has the same value for all $v \in \vertexSet$. We omit $t$ for notation convenience, i.e., using $\mu_v$ to represent $\mu_v(t)$. Below, we first provide a lower bound of the gradient $\dot\mu_v$.

    Since we are guaranteed that one of $\tokenEnd_1$ and $\tokenEnd_2$ cannot be reached from $\tokenStart$, $\neighbor_{c+1}$ cannot contain all the vertices in $\vertexSet$ for any $c$, and thus $n - \left| \neighbor_{c+1} \right| \geq 1$. Also, since one of $\tokenEnd_1$ and $\tokenEnd_2$ is guaranteed to be reachable from $\tokenStart$, there exists $v \in \vertexSet$ such that $ d_v \geq 1$ for any $c$. This is because the start node $\tokenStart \in \neighbor_{0} \subseteq \neighbor_{c}$ for any $c \geq 0$, and we can take $v = p_1$ which is on the shortest path from $\tokenStart$ to $\tokenEnd^\star$. Therefore, we can obtain that $\dot\mu_v > 0$. Moreover, we have
    \begin{align*}
        \dot\mu_v =& \alpha \cdot \frac{\exp\left(-\xi_{+}\right)}{n\cdot \sqrt{\left|\neighbor_{c}\right|}} \cdot  \frac{n - \left|\neighbor_{c+1}\right|}{\exp\left(\xi_{+}\right) + n - \left|\neighbor_{c+1}\right|} \sum_{v \in \vertexSet } d_v \exp\left(\xi_{v}\right) \\
        \geq& \alpha \cdot \frac{\exp\left(-\xi_{+}\right)}{n\cdot \sqrt{\left|\neighbor_{c}\right|}} \cdot  \frac{1}{\exp\left(\xi_{+}\right) + 1} \sum_{v \in \vertexSet } d_v \exp\left(\xi_{v}\right)
        \\ \geq& \frac{\alpha}{n\cdot \sqrt{\left|\neighbor_{c}\right|}} \cdot \frac{\sum_{v \in \vertexSet } d_v \exp\left(\xi_{v}\right)}{\left(\exp\left(\xi_{+}\right) + 1\right) \cdot \exp\left(\xi_{+}\right)}.
    \end{align*}
    Note that by definition, for any vertex $v \in \neighbor_{c+1} \backslash \{\tokenStart\}$, there must exists another vertex $v' \in \neighbor_{c}$ such that $(v'\to v) \in \edgeSet$, which implies that $d_v \geq 1$. Therefore,
    \begin{align*}
        \sum_{v \in \vertexSet } d_v \exp\left(\xi_{v}\right) \geq& \sum_{v \in \neighbor_{c+1} \backslash \{\tokenStart\}} d_v \exp\left(\xi_{v}\right) \\ \geq& \sum_{v \in \neighbor_{c+1} \backslash \{\tokenStart\}} \exp\left(\xi_{v}\right)  \\ =&  \exp\left(\xi_{+}\right) - \exp\left(\xi_{\tokenStart}\right),
    \end{align*}
    which further implies that 
    \begin{align*}
         \dot\mu_v \geq& \frac{\alpha}{n\cdot \sqrt{\left|\neighbor_{c}\right|}} \cdot \frac{\sum_{v \in \vertexSet } d_v \exp\left(\xi_{v}\right)}{\left(\exp\left(\xi_{+}\right) + 1\right) \cdot \exp\left(\xi_{+}\right)} 
         \\ \geq& \frac{\alpha}{n\cdot \sqrt{\left|\neighbor_{c}\right|}} \cdot \frac{\exp\left(\xi_{+}\right) - \exp\left(\xi_{\tokenStart}\right)}{\left(\exp\left(\xi_{+}\right) + 1\right) \cdot \exp\left(\xi_{+}\right)} .
    \end{align*}
    Now recall from \Cref{lem:coconut_BFS_gradient_per_instance} that 
    \begin{align*}
        \xi_{v} =& \frac{1}{\sqrt{\left|\neighbor_{c}\right|}} \left( \one\{v \in \neighbor_{c}\} + \sum_{i=1}^m 
      \mu_{\tokenSource_i} \one\{\tokenSource_i \in \neighbor_{c} \} \cdot \one\{v = \tokenTarget_i\} \right) \\
      =& \frac{1}{\sqrt{\left|\neighbor_{c}\right|}} \left( \one\{v \in \neighbor_{c}\} + d_v \cdot \mu_v \right) \\
      \leq& \frac{1}{\sqrt{\left|\neighbor_{c}\right|}} \left( 1 + \left|\neighbor_{c}\right| \cdot \mu_v \right).
    \end{align*}
    Therefore,
    \begin{align*}
        \exp\left(\xi_{+}\right)  =& \sum_{v\in\neighbor_{c+1}} \exp\left(\xi_{v}\right) \\ \leq& \left| \neighbor_{c+1} \right| \exp\left(\frac{1}{\sqrt{\left|\neighbor_{c}\right|}} \left( 1 + \left|\neighbor_{c}\right| \cdot \mu_v \right)\right) \\ 
        \leq& n \cdot \exp\left( 1 + \sqrt{\left|\neighbor_{c}\right|} \cdot \mu_v \right) .
    \end{align*}
    Also, since $p_1 \in \neighbor_{c+1}$ and $\indeg_{\graph,\neighbor_{c}}(p_1) \geq 1$, we can obtain that
    \begin{align*}
        \exp\left(\xi_{+}\right) - \exp\left(\xi_{\tokenStart}\right) \geq& \exp\left(\xi_{ p_1}^{(\graph, \tokenStart)}\right) \\ =& \exp\left( \frac{1}{\sqrt{\left|\neighbor_{c}\right|}} \left( \one\{p_1 \in \neighbor_{c}\} + d_{p_1} \cdot \mu_v \right) \right) \\ 
        \geq& \exp\left(\frac{\mu_v}{\sqrt{\left|\neighbor_{c}\right|}}\right).
    \end{align*}
    Combining the above two inequalities, we can obtain that
    \begin{align*}
        \dot\mu_v \geq& \frac{\alpha}{n\cdot \sqrt{\left|\neighbor_{c}\right|}} \cdot \frac{\exp\left(\xi_{+}\right) - \exp\left(\xi_{\tokenStart}\right)}{\left(\exp\left(\xi_{+}\right) + 1\right) \cdot \exp\left(\xi_{+}\right)} \\
        \geq& \frac{\alpha}{n\cdot \sqrt{\left|\neighbor_{c}\right|}} \cdot \frac{ \exp\left(\frac{\mu_v}{\sqrt{\left|\neighbor_{c}\right|}}\right)}{2n^2 \cdot \exp\left( 2 + 2\sqrt{\left|\neighbor_{c}\right|} \cdot \mu_v \right)}
        \\
        \geq& \frac{\alpha}{2n^3e^2\cdot \sqrt{\left|\neighbor_{c}\right|}} \cdot  \exp\left(  -2\sqrt{\left|\neighbor_{c}\right|} \cdot \mu_v \right).
    \end{align*}
    Finally, by applying \Cref{lem::ODE_bounds}, we can obtain that 
    \begin{align*}
        \mu_v(t) \geq \frac{1}{2\sqrt{\left|\neighbor_{c}\right|}} \ln\left(1+\alpha n^{-3} e^{-2}t\right).
    \end{align*}
\end{proof}

\subsection{Analysis of \coconut{}}
\label{app:subsec:analysis_coconut}

In this section, we analyze the training dynamics of \coconut{}. Similarly, we first provide the closed-form formulation of the gradient $\nabla_{\mu_v} \ell^{\coco}_{\graph,\tokenStart,p} = \nabla_{\mu_v} \ell^{\coco}_{\graph,\tokenStart,p}(\vmu)$, where $\vmu = \{\mu_v\}_{v\in\vertexSet}$ is the set of parameters. We omit the superscript or subscript when the context is clear.

\begin{lemma}[Per-instance gradient of $\mu_v$ for \coconut{}]
\label{lem:coconut_gradient_per_instance}
Under the loss function of \coconut{} as given in \eqref{eq:main-coco-loss} and the forward pass as in \eqref{eq:forward_pass_thought_gen}, the per-instance gradient is 
\begin{align*}
        \nabla_{\mu_v} \ell(\vmu) =  \frac{\one\{ v \in \neighbor_{c} \}}{\sqrt{\left|\neighbor_{c}\right|}} \left( - \one\{ (v \to p_{c+1}) \in \edgeSet\} + \frac{\sum_{v': (v\to v')\in \edgeSet}  \exp\left(\xi_{v'}\right)}{\exp\left(\xi_{+}\right) + n - \left|\neighbor_{c+1}\right|} \right)
    \end{align*}
for any $v \in \vertexSet$, where $\xi_{+}  = \log \left( \sum_{v\in\neighbor_{c+1}} \exp\left(\xi_{v}\right) \right)$.
\end{lemma}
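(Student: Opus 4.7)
The plan is to follow the template set by Lemma~\ref{lem:coconut_BFS_gradient_per_instance} and reuse the closed-form expression for $\xi_v$ derived there, then differentiate the \coconut{} loss directly. The \coconut{} loss is a standard softmax cross-entropy with target token $p_{c+1}$, so
\[
\ell(\vmu) \;=\; -\xi_{p_{c+1}} \;+\; \log\sum_{w\in\vertexSet} \exp(\xi_w),
\]
and the gradient reduces to $\nabla_{\mu_v}\ell = -\nabla_{\mu_v}\xi_{p_{c+1}} + \sum_w q_w\,\nabla_{\mu_v}\xi_w$ with $q_w = \exp(\xi_w)/\sum_{w'}\exp(\xi_{w'})$. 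So the only two ingredients I need are (i) the per-logit sensitivity $\nabla_{\mu_v}\xi_w$, and (ii) a clean expression for the partition function.

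For (i), I would re-derive from the linear-attention/reparameterization expansion exactly as in Lemma~\ref{lem:coconut_BFS_gradient_per_instance} that
\[
\xi_w \;=\; \lambda\,\one\{w\in\neighbor_c\} \;+\; \lambda\sum_{i=1}^m \mu_{\tokenSource_i}\,\one\{\tokenSource_i\in\neighbor_c\}\,\one\{w=\tokenTarget_i\},
\]
where $\lambda = 1/\sqrt{|\neighbor_c|}$. Differentiating the explicit sum in $\{\mu_{v'}\}$ gives $\nabla_{\mu_v}\xi_w = \lambda\,\one\{v\in\neighbor_c\}\,\one\{(v\to w)\in\edgeSet\}$; this in particular yields $\nabla_{\mu_v}\xi_{p_{c+1}} = \lambda\,\one\{v\in\neighbor_c\}\,\one\{(v\to p_{c+1})\in\edgeSet\}$, accounting for the first (negative) term in the claim.

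For (ii), the key observation is that $\xi_w = 0$ whenever $w\notin\neighbor_{c+1}$: the first term vanishes because $\neighbor_c\subseteq\neighbor_{c+1}$, and the second vanishes because any $w$ receiving a nonzero contribution must be the target of some edge whose source lies in $\neighbor_c$, which forces $w\in\neighbor_{c+1}$. Therefore $\sum_{w\in\vertexSet}\exp(\xi_w) = \exp(\xi_+) + (n - |\neighbor_{c+1}|)$, matching the denominator in the stated formula. For the numerator of the expectation term, any $w$ with $(v\to w)\in\edgeSet$ and $v\in\neighbor_c$ automatically lies in $\neighbor_{c+1}$, so $\sum_w q_w\,\nabla_{\mu_v}\xi_w$ simplifies to $\lambda\,\one\{v\in\neighbor_c\}\sum_{v':(v\to v')\in\edgeSet}\exp(\xi_{v'})$ divided by the same denominator.

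Combining (i) and (ii) and factoring $\lambda\,\one\{v\in\neighbor_c\}$ gives the claimed identity immediately. I do not foresee a genuine obstacle — the proof is mostly a careful bookkeeping exercise; the one place where care is needed is the double use of the indicator $\one\{v\in\neighbor_c\}$, which must be pulled out of both the target term and the normalizer-weighted term without swallowing the $\one\{(v\to p_{c+1})\in\edgeSet\}$ factor, and the identification that nodes outside $\neighbor_{c+1}$ contribute exactly $1$ each (not a generic $\exp(\xi_w)$) to the partition sum.
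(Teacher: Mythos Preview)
Your proposal is correct and follows essentially the same route as the paper: both reuse the closed-form logit $\xi_w$ from Lemma~\ref{lem:coconut_BFS_gradient_per_instance}, differentiate to obtain $\nabla_{\mu_v}\xi_w = \lambda\,\one\{v\in\neighbor_c\}\,\one\{(v\to w)\in\edgeSet\}$, and use $\xi_w=0$ for $w\notin\neighbor_{c+1}$ to split the partition function as $\exp(\xi_+)+(n-|\neighbor_{c+1}|)$. The only cosmetic difference is that you invoke the standard softmax cross-entropy gradient identity $-\nabla_{\mu_v}\xi_{p_{c+1}}+\sum_w q_w\nabla_{\mu_v}\xi_w$ directly, whereas the paper differentiates the log-quotient explicitly before arriving at the same intermediate expression.
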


\begin{proof}
    First, according to the proof of \Cref{lem:coconut_BFS_gradient_per_instance}, we have 
    \begin{align*}
        \xi_{v} = \lambda  \cdot \one\{v \in \neighbor_{c}\} + \sum_{i=1}^m 
     \lambda \mu_{\tokenSource_i} \one\{\tokenSource_i \in \neighbor_{c} \} \cdot \one\{v = \tokenTarget_i\},
    \end{align*}
    where $\lambda = \frac{1}{\sqrt{\left|\neighbor_{c}\right|}}$. Note that by the definition of $\neighbor_{c}$, we have $\xi_{v} = 0$ if $v \notin \neighbor_{c+1}$. Therefore,
    \begin{align*}
        \ell(\vmu) = -\log \frac{  \exp\left(\xi_{p_{c+1}}\right)}{\sum_{v\in\vertexSet} \exp\left(\xi_{v}\right)} =
        -\log \left(  \frac{ \exp\left(\xi_{p_{c+1}}\right) }{\exp\left(\xi_{+}\right) + n - \left|\neighbor_{c+1}\right|  } \right),
    \end{align*}
    where $\exp\left(\xi_{+}\right)  = \sum_{v\in\neighbor_{c+1}} \exp\left(\xi_{v}\right)$. 
    
    Then the per-instance gradient can be calculated as
    \begin{align*}
        &\nabla_{\mu_v} \ell(\vmu) \\ =& - \frac{\exp\left(\xi_{+}\right) + n - \left|\neighbor_{c+1}\right|}{\exp\left(\xi_{p_{c+1}}\right)} 
        \\ & \cdot \frac{\nabla_{\mu_v} \exp\left(\xi_{p_{c+1}}\right)\cdot \left(\exp\left(\xi_{+}\right) + n - \left|\neighbor_{c+1}\right|\right) - \exp\left(\xi_{p_{c+1}}\right)  \nabla_{\mu_v} \exp\left(\xi_{+}\right) }{\left(\exp\left(\xi_{+}\right) + n - \left|\neighbor_{c+1}\right|\right)^2}   \\
        =& -\frac{\nabla_{\mu_v} \exp\left(\xi_{p_{c+1}}\right)}{\exp\left(\xi_{p_{c+1}}\right)} + \frac{\nabla_{\mu_v} \exp\left(\xi_{+}\right)}{\exp\left(\xi_{+}\right) + n - \left|\neighbor_{c+1}\right|} \\
        \\
        =& -\nabla_{\mu_v} \xi_{p_{c+1}} + \frac{\nabla_{\mu_v} \exp\left(\xi_{+}\right)}{\exp\left(\xi_{+}\right) + n - \left|\neighbor_{c+1}\right|}.
    \end{align*}
    Since
    \begin{align*}
        \nabla_{\mu_v} \exp\left(\xi_{+}\right) =& \sum_{v'\in\neighbor_{c+1}}  \exp\left(\xi_{v'}\right)  \one\{ v \in \neighbor_{c} \} \sum_{i=1}^m 
     \lambda \one\{\tokenSource_i = v, \tokenTarget_i = v'\} \\
     =& \lambda \cdot  \one\{ v \in \neighbor_{c} \} \sum_{v': (v\to v')\in \edgeSet}  \exp\left(\xi_{v'}\right)
    \end{align*}
    and 
    \begin{align*}
        \nabla_{\mu_v} \xi_{p_{c+1}} =  \lambda \cdot  \one\{ v \in \neighbor_{c} \} \cdot \one\{ (v \to p_{c+1}) \in \edgeSet\},
    \end{align*}
    we can finally obtain that
    \begin{align*}
        \nabla_{\mu_v} \ell(\vmu) =  \frac{\one\{ v \in \neighbor_{c} \}}{\sqrt{\left|\neighbor_{c}\right|}} \left( - \one\{ (v \to p_{c+1}) \in \edgeSet\} + \frac{\sum_{v': (v\to v')\in \edgeSet}  \exp\left(\xi_{v'}\right)}{\exp\left(\xi_{+}\right) + n - \left|\neighbor_{c+1}\right|} \right).
    \end{align*}
\end{proof}

Now we calculate the gradient of $\mu_v$ over the whole dataset, where the nodes of the graphs are randomly shuffled. We also write $\loss^{\coco} = \loss^{\coco}(\vmu)$ and omit the superscript when the context is clear.

\begin{lemma}[Whole-dataset gradient of $\mu_v$ for \coconut{}]
\label{lem:coconut_gradient_whole_dataset}
Under the loss function of \coconut{} as given in \eqref{eq:main-coco-loss} and the forward pass as in \eqref{eq:forward_pass_thought_gen} and assuming all $\mu_v$ have the same value, the gradient w.r.t. the whole dataset is 
\begin{align*}
    \nabla_{\mu_v} \loss(\vmu) =  \frac{1}{n\cdot \sqrt{\left|\neighbor_{c}\right|}} \left( -d_{p_{c+1}}  + \frac{\sum_{v \in \neighbor_{c+1} } d_{v} \exp\left(\xi_{v}\right)}{\exp\left(\xi_{+}\right) + n - \left|\neighbor_{c+1}\right|} \right)
\end{align*}
for any $v \in \vertexSet$ which is independent of $v$, where $\xi_{+}  = \log \left( \sum_{v\in\neighbor_{c+1}} \exp\left(\xi_{v}\right) \right)$.
\end{lemma}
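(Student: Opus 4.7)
The plan is to mirror the symmetry argument used for \Cref{lem:coconut_BFS_gradient_whole_dataset}, with the extra wrinkle that the \coconut{} loss depends on the demonstration path $p$ and thus requires relabeling $p$ along with the graph and root. The input is the per-instance gradient provided by \Cref{lem:coconut_gradient_per_instance}.

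First I would establish permutation equivariance of the per-instance gradient when all $\mu_v$ share the common value $\mu$. From the closed form derived in the proof of \Cref{lem:coconut_gradient_per_instance},
\[
\xi_v \;=\; \tfrac{1}{\sqrt{|\neighbor_c|}}\bigl(\one\{v\in\neighbor_c\}+d_v\mu\bigr),
\]
which, together with $\neighbor_c^{\pi(\graph)}(\pi(\tokenStart))=\pi(\neighbor_c^{\graph}(\tokenStart))$ and $\indeg_{\pi(\graph),\pi(\neighbor_c)}(\pi(v))=d_v$, yields $\xi_{\pi(v)}^{(\pi(\graph),\pi(\tokenStart),\pi(p))}=\xi_v^{(\graph,\tokenStart,p)}$. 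The edge indicator $\one\{(\pi(v)\to\pi(p_{c+1}))\in\pi(\edgeSet)\}$ also coincides with $\one\{(v\to p_{c+1})\in\edgeSet\}$ because $p$ is permuted jointly. Combining these gives
\[
\nabla_{\mu_{\pi(v)}}\ell_{\pi(\graph),\pi(\tokenStart),\pi(p)}(\vmu)=\nabla_{\mu_v}\ell_{\graph,\tokenStart,p}(\vmu).
\]

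Then, taking the expectation over $\pi\sim\Unif(S_\vertexSet)$ and reparametrizing by $v'=\pi^{-1}(v)$ reduces the permutation-averaged gradient to a vertex average,
\[
\nabla_{\mu_v}\loss(\vmu)=\tfrac{1}{n}\sum_{v'\in\vertexSet}\nabla_{\mu_{v'}}\ell_{\graph,\tokenStart,p}(\vmu),
\]
which is automatically independent of $v$. Plugging in the per-instance formula and using $\one\{v'\in\neighbor_c\}$ to restrict the outer sum, the first term collapses to $\sum_{v'\in\neighbor_c}\one\{(v'\to p_{c+1})\in\edgeSet\}=d_{p_{c+1}}$. For the second term I would swap summation order; since any edge out of $\neighbor_c$ lands in $\neighbor_{c+1}$, we get $\sum_{v'\in\neighbor_c}\sum_{u:(v'\to u)\in\edgeSet}\exp(\xi_u)=\sum_{u\in\neighbor_{c+1}}d_u\exp(\xi_u)$, which assembles into the stated expression after factoring out $1/(n\sqrt{|\neighbor_c|})$.

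The main obstacle is bookkeeping rather than depth: one must verify that permutation equivariance holds \emph{after} the demonstration path is relabeled together with the graph (so that $\pi(p_{c+1})$ is the target node in the permuted instance) and that the restricted in-degree $d_v=\indeg_{\graph,\neighbor_c}(v)$ is preserved by $\pi$. Once these invariants are in place, the combinatorial swap of summations and the extraction of the common prefactor are routine.
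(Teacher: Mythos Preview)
Your proposal is correct and follows essentially the same approach as the paper: establish permutation equivariance of the per-instance gradient (relabeling $p$ along with $\graph$ and $\tokenStart$), reduce the permutation average to a vertex average $\tfrac{1}{n}\sum_{v'}\nabla_{\mu_{v'}}\ell_{\graph,\tokenStart,p}$, and then collapse the two resulting sums exactly as you describe. If anything, your explanation of the final simplification (why the first term yields $d_{p_{c+1}}$ and the summation swap for the second term) is slightly more explicit than the paper's.
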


\begin{proof}
    Similar to \Cref{lem:coconut_BFS_gradient_whole_dataset}, we denote $\xi_{+}^{(\graph, \tokenStart)}  = \log \left( \sum_{v\in\neighbor_{c+1}^{\graph}(\tokenStart)} \exp\left(\xi_{v}^{(\graph, \tokenStart)}\right) \right)$, where $\xi_{v}^{(\graph, \tokenStart)}$ is the logit of $v$ when the graph in the prompt is $\graph$ and the start node is $\tokenStart$.
    According to the proof of \Cref{lem:coconut_BFS_gradient_whole_dataset}, for any permutation $\pi \in S_\vertexSet$, we have
    \begin{align*}
        \xi_{\pi(v)}^{(\pi(\graph), \pi(\tokenStart))} 
      = \xi_{v}^{(\graph, \tokenStart)}
    \end{align*}
    for any $v \in \vertexSet$ and 
    \begin{align*}
        \exp\left(\xi_{+}^{(\pi(\graph), \pi(\tokenStart))}\right)  = \exp\left(\xi_{+}^{(\graph, \tokenStart)}\right).
    \end{align*}
    Therefore, by \Cref{lem:coconut_gradient_per_instance}, we can obtain that 
    \begin{align*}
        &\nabla_{\mu_{\pi(v)}} \ell_{\pi(\graph), \pi(\tokenStart), \pi(p)}(\vmu) \\ =& \frac{\one\{ \pi(v) \in \neighbor_{c}^{\pi(\graph)}(\pi(\tokenStart)) \}}{\sqrt{\left|\neighbor_{c}^{\pi(\graph)}(\pi(\tokenStart))\right|}} \left( - \one\{ (\pi(v) \to \pi(p_{c+1})) \in \pi(\edgeSet)\} \right. \\ 
        & \qquad\qquad \qquad \qquad \qquad \qquad + \left. \frac{\sum_{\pi(v'): (\pi(v)\to \pi(v'))\in \pi(\edgeSet)}\exp\left(\xi_{\pi(v')}^{(\pi(\graph), \pi(\tokenStart))}\right)}{\exp\left(\xi_{+}^{(\pi(\graph), \pi(\tokenStart))}\right) + n - \left|\neighbor_{c+1}^{\pi(\graph)}(\pi(\tokenStart))\right|} \right)
     \\ =& \frac{\one\{ v \in \neighbor_{c}^{\graph}(\tokenStart) \}}{\sqrt{\left|\neighbor_{c}^{\graph}(\tokenStart)\right|}} \left( - \one\{ (v \to p_{c+1}) \in \edgeSet\} + \frac{\sum_{v': (v\to v')\in \edgeSet}  \exp\left(\xi_{v'}^{(\graph, \tokenStart)}\right)}{\exp\left(\xi_{+}^{(\graph, \tokenStart)}\right) + n - \left|\neighbor_{c+1}^{\graph}(\tokenStart)\right|} \right)
     \\ =& \nabla_{\mu_{v}} \ell_{\graph, \tokenStart, p}(\vmu).
    \end{align*}
    Therefore, we can calculate the gradient with respect to the whole dataset as 
    \begin{align*}
          \nabla_{\mu_v} \loss(\vmu) =& \E_{\pi \sim \Unif(S_\vertexSet)} [\nabla_{\mu_{v}} \ell_{\pi(\graph), \pi(\tokenStart), \pi(p)}(\vmu)] \\ 
          =&  \E_{\pi \sim \Unif(S_\vertexSet)} [\nabla_{\mu_{\pi^{-1}(v)}} \ell_{\graph, \tokenStart,  p, c}(\vmu)] \\
          =&  \E_{v' \sim \Unif(\vertexSet)} [\nabla_{\mu_{v'}} \ell_{\graph, \tokenStart, p, c}(\vmu)]
    \end{align*}
    which is independent of $v$ and thus the gradients for $\mu_v$ are equal for all $v \in \vertexSet$. Furthermore, similar to \Cref{lem:coconut_BFS_gradient_whole_dataset}, we can calculate that
    \begin{align*}
        &\nabla_{\mu_v} \loss(\vmu) \\ =& \frac{1}{n}\sum_{v\in \vertexSet} \frac{\one\{ v \in \neighbor_{c} \}}{\sqrt{\left|\neighbor_{c}\right|}} \left( - \one\{ (v \to p_{c+1}) \in \edgeSet\} + \frac{\sum_{v': (v\to v')\in \edgeSet}  \exp\left(\xi_{v'}\right)}{\exp\left(\xi_{+}^{(\graph, \tokenStart)}\right) + n - \left|\neighbor_{c+1}\right|} \right) \\
        =&  \frac{1}{n\cdot \sqrt{\left|\neighbor_{c}\right|}} \left( -d_{p_{c+1}}  + \frac{\sum_{v \in \neighbor_{c+1} } d_{v} \exp\left(\xi_{v}\right)}{\exp\left(\xi_{+}^{(\graph, \tokenStart)}\right) + n - \left|\neighbor_{c+1}\right|} \right).
    \end{align*}
\end{proof}

Finally, we derive the dynamics of $\mu_v$. Recall that we denote
 $K = |\neighbor_c|, \lambda=\frac{1}{\sqrt K}$. We also make the following notation for \Cref{thm:mu-dynamics-coconut}. 
Let $d_\star:=d_{p_{c+1}}$ and $d_{{\max}}:={\max}_{u\in\vertexSet} d_u$. Moreover, Let $c_0:=n-|\neighbor_{c+1}| \ge 1$ and denote
\begin{align*}
\xi_{u}(\mu) \;:=\; \lambda\left(\one\{u\in\neighbor_c\}+\mu\,d_u\right),
\qquad 
E_+(\mu) \;:=\; \sum_{u\in\neighbor_{c+1}} e^{\xi_{u}(\mu)},
\\
S(\mu) \;:=\; \sum_{u\in\neighbor_{c+1}} d_u\,e^{\xi_{u}(\mu)},
\qquad
F(\mu) \;:=\; \frac{S(\mu)}{E_+(\mu)+c_0}.
\end{align*}

\begin{theorem}[Dynamics of $\mu_v$ for \coconut{}]
\label{thm:mu-dynamics-coconut}

Let $\mu_v(t)$ be the value of $\mu_v$ at time $t$. Assume zero-initialization, i.e., $\mu_v(0) = 0$ for all $v \in \vertexSet$. Under gradient flow 
\begin{equation}
\label{eq:gradient_flow_coconut}
\dot \mu_v =  -\alpha \cdot \nabla_{\mu_v} \loss^\coco(\vmu)
\end{equation}
where $\alpha > 0$ is the learning rate, suppose the initialization satisfies $\mu_v(0)=0$ for all $v$. Then:

\begin{enumerate}
\item \textbf{Scalar reduction.} For all $t\ge 0$, $\mu_v(t)\equiv \mu(t)$ is shared across $v$, and $\mu(t)$ satisfies
\begin{equation}
\label{eq:mu-ode}
\boxed{\quad
\dot\mu(t) \;=\; \frac{\alpha}{n\sqrt K}\,\left(d_\star - F(\mu(t))\right).
\quad}
\end{equation}

\item \textbf{Regularity of $F$.} The function $F:\R\to\R$ is $C^\infty$, strictly increasing, and satisfies
\[
\lim_{\mu\to-\infty}F(\mu)=0,
\qquad
\lim_{\mu\to+\infty}F(\mu)=d_{{\max}},
\qquad
0 < F(\mu)<d_{{\max}}\quad\text{for all finite }\mu.
\]

\item \textbf{Finite fixed point when $d_\star<d_{{\max}}$.} If $d_\star<d_{{\max}}$, there exists a unique $\mu^\star\in\R$ such that $F(\mu^\star)=d_\star$. The solution $\mu(t)$ of \eqref{eq:mu-ode} with $\mu(0)=0$ converges monotonically to $\mu^\star$:
\[
\mu(t)\nearrow\mu^\star \quad\text{if } F(0)\le d_\star, 
\qquad 
\mu(t)\searrow\mu^\star \quad\text{if } F(0)>d_\star,
\]
and the equilibrium $\mu^\star$ is locally exponentially stable, i.e., there exists $\gamma>0$ such that for all large enought $t$, it holds that
\[
|\mu(t)-\mu^\star|\;\le\; e^{-\gamma t}\,|\mu(0)-\mu^\star|.
\]

\item \textbf{Logarithmic divergence when $d_\star=d_{{\max}}$.} If $d_\star=d_{{\max}}$, then $\dot\mu(t)>0$ for all $t$ and $\mu(t)\to+\infty$. Moreover, for all $t\ge 0$,
\begin{equation}
\label{eq:mu-log-lower-bound}
\boxed{\quad
\mu(t)\;\ge\; \frac{1}{\lambda d_{{\max}}}\,
\ln\!\left(1 + \frac{\alpha\,\lambda\,d_{{\max}}^2\,c_0\,e^{-\lambda}}{2\,n^2\sqrt K}\,t\right).
\quad}
\end{equation}
\end{enumerate}
\end{theorem}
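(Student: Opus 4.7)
The plan is to prove the four parts sequentially, leaning on the explicit gradient formula already established in \Cref{lem:coconut_gradient_whole_dataset} throughout.

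For the scalar reduction (Part 1), I would observe that \Cref{lem:coconut_gradient_whole_dataset} shows $\nabla_{\mu_v}\loss^{\coco}(\vmu)$ depends only on the common value whenever all $\mu_v$ coincide. Starting from the symmetric initialization $\mu_v(0)=0$, uniqueness of ODE solutions restricted to the symmetric sector (or a Grönwall argument applied to the pairwise differences $\mu_v-\mu_{v'}$) forces $\mu_v(t)\equiv \mu(t)$ for all $t$, and substituting the common value into the gradient formula and negating yields the scalar ODE $\dot\mu=\frac{\alpha}{n\sqrt K}(d_\star-F(\mu))$ directly.

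For the regularity of $F$ (Part 2), my plan is to interpret $F$ probabilistically. Define a measure $\nu_\mu$ on $\neighbor_{c+1}\cup\{\ast\}$ with weights $e^{\xi_u(\mu)}$ on $u\in\neighbor_{c+1}$ and weight $c_0$ on an auxiliary atom $\ast$, and extend $d$ by $d_\ast:=0$. Then $F(\mu)=\mathbb{E}_{\nu_\mu}[d]$ after normalization, smoothness follows from smooth composition of the ratio, and a short calculation yields $F'(\mu)=\lambda\,\mathrm{Var}_{\nu_\mu}(d)>0$, which is strictly positive because $d_\ast=0$ while $d_{p_{c+1}}\ge 1$. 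The limits follow by identifying the dominant exponentials: as $\mu\to-\infty$ only the $d_u=0$ atoms survive, so $F\to 0$; as $\mu\to+\infty$ only the atoms with $d_u=d_{\max}$ dominate, so $F\to d_{\max}$; and the strict bound $F(\mu)<d_{\max}$ at finite $\mu$ is forced by the nonzero mass at $\ast$.

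For the finite fixed point case (Part 3), the intermediate value theorem applied to the strictly increasing continuous $F$ with range $(0,d_{\max})$ produces a unique $\mu^\star$ with $F(\mu^\star)=d_\star$ whenever $d_\star<d_{\max}$ (here $d_\star\ge 1$ since $(p_c\to p_{c+1})\in\edgeSet$ with $p_c\in\neighbor_c$). Strict monotonicity of $F$ makes the scalar vector field $\dot\mu=c(d_\star-F(\mu))$ strictly dissipative toward $\mu^\star$, yielding monotone convergence $\mu(t)\to\mu^\star$. For local exponential stability, I would linearize: the derivative of the right-hand side at $\mu^\star$ equals $-\frac{\alpha F'(\mu^\star)}{n\sqrt K}<0$, so the standard exponential bound applies with rate $\gamma=\frac{\alpha F'(\mu^\star)}{n\sqrt K}$ once $\mu(t)$ enters a small neighborhood of $\mu^\star$, which happens in finite time by the monotone convergence.

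The logarithmic divergence in Part 4 is the main obstacle, since it requires a quantitative lower bound on $d_\star-F(\mu)$ as $\mu\to\infty$. My plan is to rewrite
\[
d_\star-F(\mu)=\frac{d_\star c_0+\sum_{u\in\neighbor_{c+1}}(d_\star-d_u)\,e^{\xi_u(\mu)}}{E_+(\mu)+c_0}\;\ge\;\frac{d_{\max}c_0}{E_+(\mu)+c_0},
\]
using $d_\star=d_{\max}$ so every summand in the numerator is nonnegative, then apply the crude bound $E_+(\mu)\le n\,e^{\lambda(1+\mu d_{\max})}$ (valid for $\mu\geq 0$) together with $n\,e^{\lambda(1+\mu d_{\max})}\geq c_0$ to obtain $d_\star-F(\mu)\ge \frac{d_{\max}c_0\,e^{-\lambda}}{2n}\,e^{-\lambda d_{\max}\mu}$. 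Substituting into the ODE gives $\dot\mu\ge \frac{\alpha d_{\max}c_0 e^{-\lambda}}{2n^2\sqrt K}\,e^{-\lambda d_{\max}\mu}$, and the same comparison lemma used in the \coconutBFS{} analysis (\Cref{lem::ODE_bounds}) then yields the logarithmic lower bound. The delicate part is tracking the prefactors of $\lambda$ and $d_{\max}$ so the final expression matches the stated $\frac{1}{\lambda d_{\max}}\ln\!\big(1+\frac{\alpha\lambda d_{\max}^2 c_0 e^{-\lambda}}{2n^2\sqrt K}t\big)$; a secondary point is verifying $\mu(t)\geq 0$ throughout, which follows since $\dot\mu(0)>0$ at $\mu(0)=0$ under $d_\star=d_{\max}$ and $\mu$ is monotone increasing.
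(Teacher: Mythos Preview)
Your proposal is correct and follows essentially the same route as the paper's proof: the scalar reduction via \Cref{lem:coconut_gradient_whole_dataset}, the monotonicity and limits of $F$, the IVT/monotone-convergence/linearization argument for Part~3, and the lower bound $d_{\max}-F(\mu)\ge \frac{d_{\max}c_0}{E_+(\mu)+c_0}$ combined with $E_+(\mu)\le n e^{\lambda(1+\mu d_{\max})}$ and \Cref{lem::ODE_bounds} for Part~4 are all exactly what the paper does. The one cosmetic difference is your probabilistic reading of $F'(\mu)=\lambda\,\mathrm{Var}_{\nu_\mu}(d)$ in Part~2, which is a slightly cleaner packaging of the paper's quotient-rule-plus-Cauchy--Schwarz computation (the two are equivalent, since nonnegativity of variance \emph{is} Cauchy--Schwarz), and your explicit check that $\mu(t)\ge 0$ in Part~4 is a detail the paper leaves implicit.
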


\begin{proof}

\textbf{(1) Scalar reduction.}
By \Cref{lem:coconut_gradient_whole_dataset} and the similar argument as in the proof of \Cref{lem::dynamics_mu_BFS},  we have $\mu_v(t)\equiv\mu(t)$ for all $t \geq 0$. Therefore, the gradient $\nabla_{\mu_v}\loss(\vmu)$ is independent of $v$ and equals
\[
\nabla_{\mu_v}\loss(\vmu)
= \frac{1}{n\sqrt K}\left(
-\,d_\star
+ \frac{\sum_{u\in\neighbor_{c+1}} d_u\,e^{\xi_{u}(\mu_u)}}{ \sum_{u\in\neighbor_{c+1}} e^{\xi_{u}(\mu_u)} + n-|\neighbor_{c+1}|}
\right) =
 \frac{1}{n\sqrt K}\left(-d_\star + F(\mu_v)\right).
\]
Thus, we have
\begin{align*}
\dot\mu(t) =  -\nabla_{\mu_v}\loss(\vmu(t))  = \frac{\alpha}{n\sqrt K}\,\left(d_\star - F(\mu(t))\right).
\end{align*}

\textbf{(2) Regularity and limits of $F$.}
By the proof of \Cref{lem:coconut_gradient_per_instance} and the condition that $\mu_v(t) \equiv \mu(t)$ for all $v \in \vertexSet$ and $t \geq 0$, we have
\[
E_+(\mu)=\sum_{u\in\neighbor_{c+1}} \exp\!\left(\lambda(\one\{u\in\neighbor_c\}+\mu d_u)\right),
S(\mu)=\sum_{u\in\neighbor_{c+1}} d_u\,\exp\!\left(\lambda(\one\{u\in\neighbor_c\}+\mu d_u)\right).
\]
Both functions are finite sums of $C^\infty$ functions of $\mu$, hence $F(\mu)=S(\mu)/(E_+(\mu)+c_0)$ is also $C^\infty$.

Now we show the strict monotonicity of $F(\cdot)$ on $\mu$ by differentiation. We further write $\xi_u:=\xi_{u}(\mu)$ for brevity. Then
\[
E_+'(\mu)=\lambda\sum_{u\in\neighbor_{c+1}} d_u e^{\xi_u},
\qquad
S'(\mu)=\lambda\sum_{u\in\neighbor_{c+1}} d_u^2 e^{\xi_u}.
\]
Therefore, we can obtain that 
\begin{align*}
F'(\mu)
&= \frac{S'(\mu)\left(E_+(\mu)+c_0\right) - S(\mu)E_+'(\mu)}{\left(E_+(\mu)+c_0\right)^2}
\\
&= \frac{\lambda}{\left(E_+(\mu)+c_0\right)^2}
\left[
\left(\sum_{u\in\neighbor_{c+1}} d_u^2 e^{\xi_u}\right)\left(E_+(\mu)+c_0\right)
- \left(\sum_{u\in\neighbor_{c+1}} d_u e^{\xi_u}\right)^2
\right].
\end{align*}
Note that by the Cauchy-Schwarz inequality, we have 
\[
\sum_{u\in\neighbor_{c+1}} d_u^2 e^{\xi_u}\cdot E_+(\mu) - \left(\sum_{u\in\neighbor_{c+1}} d_u e^{\xi_u}\right)^2
\geq 0,
\]
and hence
\[
F'(\mu) \geq \frac{\lambda}{\left(E_+(\mu)+c_0\right)^2} c_0\,E_+(\mu)\left(\sum_{u\in\neighbor_{c+1}} d_u^2 e^{\xi_u}\right) \;>\; 0
\]
since $c_0>0$ and there exists at least one node $u \in \neighbor_{c+1}$ (e.g., $p_{c+1}$) such that $d_u\ge 1$ by definition. Thus, $F(\cdot)$ is strictly increasing.

Now we consider the limits of $F(\cdot)$. First, note that 
\begin{align*}
    S(\mu)  =\sum_{u\in\neighbor_{c+1}} d_u\,e^{\xi_{u}(\mu)},
\end{align*}
and for each $u \in \neighbor_{c+1}$, either $d_u = 0$ or $d_u > 0$ and thus 
\begin{align*}
    \lim_{\mu\to -\infty} d_u e^{\xi_{u}(\mu)} = \lim_{\mu\to -\infty} d_u \exp(\lambda(\one\{u\in\neighbor_c\}+\mu d_u)) = 0.
\end{align*}
Therefore, we have $\lim_{\mu\to -\infty}S(\mu) = 0$. Moreover, since  $E_+(\mu)+c_0 \geq c_0 > 0$, we have \[\lim_{\mu\to-\infty} F(\mu) = 0.\] 

Now we consider the case when $\mu\to+\infty$. Since 
\begin{align*}
    \frac{e^{\xi_{u}(\mu)}}{E_+(\mu)+c_0} =& \frac{\exp\left(\lambda(\one\{u\in\neighbor_c\}+\mu d_u)\right)}{\sum_{v\in\neighbor_{c+1}} \exp\!\left(\lambda(\one\{v\in\neighbor_c\}+\mu d_v)\right) + c_0} \\ =& \frac{1}{\sum_{v\in\neighbor_{c+1}} \exp\!\left(\lambda(\one\{v\in\neighbor_c\} - \one\{u\in\neighbor_c\} + \mu (d_v - d_u))\right) + c_0}.
\end{align*}
As $\mu\to +\infty$, we can obtain that if $d_u < d_{{\max}}$, then
\begin{align*}
    \lim_{\mu\to+\infty} \frac{e^{\xi_{u}(\mu)}}{E_+(\mu)+c_0} \leq \lim_{\mu\to+\infty} \frac{1}{ \exp\!\left(\lambda(-1 + \mu (d_{\max} - d_u))\right) + c_0} = 0.
\end{align*}
If $d_u = d_{{\max}}$, then
\begin{align*}
    &\lim_{\mu\to+\infty} \frac{e^{\xi_{u}(\mu)}}{E_+(\mu)+c_0} \\ =&  \lim_{\mu\to+\infty} \frac{\exp\left(\lambda\cdot\one\{u\in\neighbor_c\}\right)}{\sum_{v\in D_{\max}}\exp\!\left(\lambda\cdot\one\{v\in\neighbor_c\}\right) + \sum_{v\in\neighbor_{c+1}\backslash D_{\max}}\exp\!\left(\lambda(\one\{v\in\neighbor_c\}+\mu (d_v-d_{\max}))\right) + \frac{c_0}{e^{\lambda \mu d_{\max}}}} \\ =& \frac{\exp\left(\lambda\cdot\one\{u\in\neighbor_c\}\right)}{\sum_{v\in D_{\max}}\exp\!\left(\lambda\cdot\one\{v\in\neighbor_c\}\right) },
\end{align*}
where $D_{{\max}}:=\{u\in\neighbor_{c+1}: d_u=d_{{\max}}\}$. Therefore,
\begin{align*}
    \lim_{\mu\to+\infty} F(\mu) =  \sum_{u\in D_{\max}} d_u \frac{\exp\left(\lambda\cdot\one\{u\in\neighbor_c\}\right)}{\sum_{v\in D_{\max}}\exp\!\left(\lambda\cdot\one\{v\in\neighbor_c\}\right) }   = d_{\max}.
\end{align*}

Finally, the inequality $F(\mu)<d_{{\max}}$ for finite $\mu$ follows from $S(\mu)\le d_{{\max}}E_+(\mu)$ and $c_0>0$:
\[
F(\mu)=\frac{S(\mu)}{E_+(\mu)+c_0}\le \frac{d_{{\max}}E_+(\mu)}{E_+(\mu)+c_0}<d_{{\max}}.
\]

\textbf{(3) Finite fixed point and monotone convergence for $d_\star<d_{{\max}}$.}
By (2), $F(\cdot)$ is continuous, strictly increasing, with range $(0,d_{{\max}})$. Therefore, there exists a unique $\mu^\star\in\R$ such that $F(\mu^\star)=d_\star$. Now we first argue $\mu(t) \to \mu^\star$. 

Consider the ODE $\dot\mu = c\,(d_\star - F(\mu))$ with $c =\alpha/(n\sqrt K)>0$. If $\mu(0)=0\le \mu^\star$ and (thus) $F(\mu(0))\le F(\mu^\star)=d_\star$, then $\dot\mu(t)\ge 0$ as long as $\mu(t)\le\mu^\star$, hence $\mu$ is non-decreasing and bounded above by $\mu^\star$; monotone convergence implies $\mu(t)\to\bar\mu\le\mu^\star$ for some $\bar \mu$. By the continuity of $F(\cdot)$ and the fact that $\dot\mu\to 0$, we can obtain that $F(\bar\mu)=d_\star$, which implies $\bar\mu=\mu^\star$. The case $\mu(0)>\mu^\star$ is analogous with a non-increasing trajectory.

For local exponential stability, we can set $\tilde\mu =\mu-\mu^\star$ and write
\[
\dot{\tilde\mu}(t) = -c\left(F(\mu^\star+\tilde\mu(t))-F(\mu^\star)\right).
\]
By the mean value theorem, $F(\mu^\star+\tilde\mu)-F(\mu^\star)=F'(\xi)\,\tilde\mu$ for some $\xi$ between  $\mu^\star$ and $\mu^\star+\tilde\mu$. Since $F'(\mu^\star)>0$ and $F'$ is continuous, there exists $\eta>0$ and $m>0$ such that $F'(\xi)\ge m$ whenever $|\xi-\mu^\star|\le\eta$. Hence, as long as $|\tilde\mu(t)|\le\eta$, we have
\[
\frac{d}{dt}\,|\tilde\mu(t)| \;=\; \frac{\tilde\mu(t)}{|\tilde\mu(t)|}\,\dot{\tilde\mu}(t)
\;=\; -c\,F'(\xi(t))\,|\tilde\mu(t)| \;\le\; -c\,m\,|\tilde\mu(t)|.
\]
Applying Gronwall’s inequality, we have $|\tilde\mu(t)|\le e^{-cm t}|\tilde\mu(0)|$ in this neighborhood, which establishes local exponential convergence.

\textbf{(4) Divergence and logarithmic lower bound for $d_\star=d_{{\max}}$.}
When $d_\star=d_{{\max}}$, since $F(\mu)<d_{{\max}}$ for all finite $\mu$, we have $\dot\mu(t)=c\,(d_{{\max}}-F(\mu(t)))>0$ where $c = \frac{\alpha}{n\sqrt{K}}$ and thus $\mu(t)$ is strictly increasing. We now lower bound the growth rate similar to \Cref{lem::dynamics_mu_BFS}.

Since $S(\mu)\le d_{{\max}}E_+(\mu)$, we have
\[
d_{{\max}}-F(\mu)
= d_{{\max}}-\frac{S(\mu)}{E_+(\mu)+c_0}
\;\ge\; d_{{\max}}\left(1-\frac{E_+(\mu)}{E_+(\mu)+c_0}\right)
= \frac{d_{{\max}}\,c_0}{E_+(\mu)+c_0}.
\]
Moreover, for each $u\in\neighbor_{c+1}$, it holds that
\[
e^{\xi_{u}(\mu)} \;\le\; \exp\!\left(\lambda(1+\mu d_{{\max}})\right).
\]
Therefore, $E_+(\mu)\le |\neighbor_{c+1}|\,e^{\lambda(1+\mu d_{{\max}})}\le n\,e^{\lambda(1+\mu d_{{\max}})}$ and thus we can obtain that
\[
E_+(\mu)+c_0 \;\le\; n\,e^{\lambda(1+\mu d_{{\max}})}+c_0
\;\le\; n\left(e^{\lambda(1+\mu d_{{\max}})}+1\right)
\;\le\; 2n\,e^{\lambda(1+\mu d_{{\max}})},
\]
where we used $e^x\ge 1$ for $x\ge 0$. Combining the above derivation, we can obtain that
\[
d_{{\max}}-F(\mu)\;\ge\; \frac{d_{{\max}}\,c_0}{2n}\,e^{-\lambda}\,e^{-\lambda d_{{\max}}\mu}.
\]
We can then plug this into $\dot\mu=c\,(d_{{\max}}-F(\mu))$ with $c=\alpha/(n\sqrt K)$ to get
\[
\dot\mu(t)\;\ge\; \frac{\alpha}{n\sqrt K}\cdot \frac{d_{{\max}}\,c_0}{2n}\,e^{-\lambda}\,e^{-\lambda d_{{\max}}\mu(t)}
\;=\; c_1\,e^{-c_2\mu(t)},
\]
where $c_1 =\dfrac{\alpha\,d_{{\max}}\,c_0\,e^{-\lambda}}{2\,n^2\sqrt K}$ and $ c_2 =\lambda d_{{\max}}>0$. 

Applying \Cref{lem::ODE_bounds}, we can obtain exactly \eqref{eq:mu-log-lower-bound}. This shows $\mu(t)\to+\infty$ at least logarithmically fast.
\end{proof}

\subsection{Thought Expansion}
\label{app:subsec:thought_expansion}

Finally, we provide results for continuous thought expansion. Note that the following results hold for any directed graph that differs from the graphs in the training set.

\begin{theorem}[One-hop expansion of continuous thoughts]
\label{thm:thought-generation-one-hop}
Let $\graph=(\vertexSet,\edgeSet)$ be any directed graph (which can differ from the graphs in the training set) and $\tokenStart\in\vertexSet$ be a root node. 
Assume the current thought is any positive superposition on $\neighbor_c^{\graph}(\tokenStart)$:
\[
\vh_{\thought c}\;=\;\sum_{u\in\neighbor_c} \lambda_u\,\embd(u),\qquad \lambda_u>0.
\]
Then the next continuous thought $\thought{c+1} = \vh_{\thought{c+1}} $ generated by the forward pass \eqref{eq:forward_pass_thought_gen} satisfies
\begin{align*}
    \vxi = \mTokenEmbd^\top\vh_{\thought{c+1}} = \sum_{v\in\neighbor_{c+1}} \beta_v\,\onehot_v,
\end{align*}
with coefficients
\begin{equation}
\label{eq:beta-weights}
\beta_v
\;=\;
\underbrace{\lambda_v\,\one\{v\in\neighbor_c\}}_{\text{carryover}}
\;+\;
\underbrace{\mu\sum_{u\in\neighbor_c}\lambda_u\,\one\{(u\to v)\in\edgeSet\}}_{\text{one-hop expansion}}
\;\;\ge\;0.
\end{equation}
where we assume the model has been trained until time $t$ and the trained model satisfies $\mu_v(t) \equiv \mu > 0$ in \eqref{eq:reparam}. In particular, $\beta_v>0$ for every $v\in\neighbor_{c+1}$ if $\mu > 0$, so the support of $\vxi$ is \emph{exactly} $\neighbor_{c+1}$, and the output is a superposition of $\neighbor_{c+1}$.
\end{theorem}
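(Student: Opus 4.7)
The plan is to carry out a direct computation of $\vh_{\thought{c+1}}$ by substituting the hypothesized form of $\vh_{\thought c}$ into the forward pass \eqref{eq:forward_pass_thought_gen}, using Assumption~\ref{assump:embedding_orthonormal} repeatedly to collapse inner products. First I would evaluate the attention score $\vh_{\thought c}^\top \mW \vh_i$ for each edge index $i$. Plugging in the reparameterization \eqref{eq:reparam} with $\mu_v\equiv\mu$ and the first-layer copy identity $\vh_i=\embd_\tokenSource(\tokenSource_i)+\embd_\tokenTarget(\tokenTarget_i)$, the cross terms against $\embd_\tokenTarget(\tokenTarget_i)$ vanish since $\embd_\tokenSource\perp\embd_\tokenTarget$, and the orthonormality of $\{\embd_\tokenSource(v)\}_v$ collapses the double sum to a single indicator, giving $\vh_{\thought c}^\top \mW \vh_i = \mu\,\lambda_{\tokenSource_i}\,\one\{\tokenSource_i\in\neighbor_c\}$.

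Next I would assemble the attention output with $\mValue=\mI$ and linear $\sigma$, obtaining
\[
\phi\bigl(\vh_{\thought c};\{\vh_i\}\bigr)
\;=\;\mu\sum_{i:\,\tokenSource_i\in\neighbor_c}\lambda_{\tokenSource_i}\,\bigl(\embd_\tokenSource(\tokenSource_i)+\embd_\tokenTarget(\tokenTarget_i)\bigr),
\]
and add the residual $\vh_{\thought c}$ to form $\vh_{\thought{c+1}}$ as an explicit linear combination of $\embd$ vectors on $\neighbor_c$, of $\embd_\tokenTarget$ vectors on the one-step targets of $\neighbor_c$, and of $\embd_\tokenSource$ components on those sources. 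Projecting by $\mTokenEmbd^\top$ annihilates every $\embd_\tokenSource$-component (again by orthogonality) and turns each $\embd_\tokenTarget(v)=\embd(v)$ into $\onehot_v$. Regrouping by target vertex $v$ then produces exactly the coefficient formula in \eqref{eq:beta-weights}.

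Finally I would verify the support claim. For $v\in\neighbor_c$, the carryover term contributes $\lambda_v>0$, so $\beta_v>0$. For $v\in\neighbor_{c+1}\setminus\neighbor_c$, by definition there exists at least one $u\in\neighbor_c$ with $(u\to v)\in\edgeSet$, and since $\lambda_u>0$ and $\mu>0$ the one-hop expansion term is strictly positive, so $\beta_v>0$. For $v\notin\neighbor_{c+1}$, no $u\in\neighbor_c$ can be adjacent to $v$ (else $v$ would lie in $\neighbor_{c+1}$) and $v\notin\neighbor_c$, so both terms vanish and $\beta_v=0$. This pins down the support as exactly $\neighbor_{c+1}$.

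The argument is essentially bookkeeping and contains no dynamical content: all the training-time information is absorbed into the single scalar $\mu>0$. The only point where care is needed is keeping $\embd_\tokenSource$ and $\embd\equiv\embd_\tokenTarget$ in their orthogonal subspaces throughout, so that the $\mTokenEmbd^\top$ projection correctly strips the source-side components; otherwise one might spuriously attribute mass to vertices whose indices merely appear as sources in $\edgeSet$ but are not in $\neighbor_{c+1}$.
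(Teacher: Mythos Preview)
Your proposal is correct and follows essentially the same direct computation as the paper: evaluate the attention score via the reparameterization and orthonormality to get $\mu\,\lambda_{\tokenSource_i}\,\one\{\tokenSource_i\in\neighbor_c\}$, assemble $\phi$, add the residual, and project by $\mTokenEmbd^\top$ to read off the $\beta_v$ coefficients. Your explicit three-case verification of the support claim is a welcome addition that the paper leaves implicit.
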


\begin{proof}
Note that  $\vh_i=\embd_\tokenSource(\tokenSource_i)+\embd_\tokenTarget(\tokenTarget_i)$ and $\mW = \sum_{v \in \vertexSet} \mu_v(t)\embd(v)\embd_\tokenSource(v)^\top = \sum_{v \in \vertexSet} \mu  \embd(v)\embd_\tokenSource(v)^\top$. We can calculate that 
\[
\mW\vh_i
=\sum_{v\in\vertexSet}\mu\,\embd(v)\embd_\tokenSource(v)^\top
\big(\embd_\tokenSource(\tokenSource_i)+\embd_\tokenTarget(\tokenTarget_i)\big)
=\mu\,\embd(\tokenSource_i),
\]
where we used $\embd_\tokenSource(v)^\top \embd_\tokenSource(\tokenSource_i)=\one\{v=\tokenSource_i\}$ and
$\embd_\tokenSource(v)^\top \embd_\tokenTarget(\tokenTarget_i)=0$ according to Assumption~\ref{assump:embedding_orthonormal}. Therefore, with $\vh_{\thought c}=\sum_{u\in\neighbor_c}\lambda_u\embd(u)$, we can calculate
\[
\alpha_i\;:=\;\vh_{\thought c}^\top \mW \vh_i
= \mu\,\sum_{u\in\neighbor_c}\lambda_u\,\embd(u)^\top \embd(\tokenSource_i)
= \mu\,\lambda_{\tokenSource_i}\,\one\{\tokenSource_i\in\neighbor_c\}.
\]
The value aggregation becomes
\[
\phi(\vh_{\thought c};\{\vh_i\})
=\sum_{i=1}^m \alpha_i \vh_i
=\mu\sum_{i:\ \tokenSource_i\in\neighbor_c}\lambda_{\tokenSource_i}\,
\big(\embd_\tokenSource(\tokenSource_i)+\embd_\tokenTarget(\tokenTarget_i)\big).
\]
Furthermore, we have
\[
\mTokenEmbd^\top \phi(\vh_{\thought c};\{\vh_i\})
=\mu\sum_{i:\ \tokenSource_i\in\neighbor_c}\lambda_{\tokenSource_i}\,\onehot_{\tokenTarget_i}
=\mu\sum_{v\in\vertexSet}\Big(\sum_{u\in\neighbor_c}\lambda_u\,\one\{(u\to v)\in\edgeSet\}\Big)\onehot_v.
\]
Similarly,
\[
\mTokenEmbd^\top \vh_{\thought c}
=\sum_{v\in\neighbor_c} \lambda_v\,\onehot_u
=\sum_{v\in\vertexSet}\lambda_v\,\one\{v\in\neighbor_c\}\,\onehot_v.
\]
Adding the two parts yields $\vxi=\sum_{v}\beta_v\,\onehot_v$ with $\beta_v$ as in \eqref{eq:beta-weights}, since by definition we have $\vh_{\thought{c+1}} =  \vh_{\thought c} + \phi(\vh_{\thought c};\{\vh_i\})$ and $\vxi = \mTokenEmbd^\top\vh_{\thought{c+1}}$.
\end{proof}

\section{Missing Proofs for \Cref{sec:pred-main}}
\label{app:sec_proof_pred}

In this section, we analyze the training dynamics of the prediction stage, i.e., after thought generation, how the model extracts the information from the generated continuous thought to make the final prediction.

Recall that the $i$-th training sample consists of $(\graph^{(i)}, \tokenStart^{(i)}, \tokenEnd_1^{(i)}, \tokenEnd_2^{(i)}, \vlambda^{(i)})$, and we denote $\tokenEnd_\star^{(i)}$ as the reachable candidate and $\tokenEnd_\perp^{(i)}$ as the unreachable candidate. We also use  $\Nc^{(i)} =  \neighbor_C^{\graph^{(i)}}(\tokenStart^{(i)})$ to denote the $C$-ball for the $i$-th training sample. We assume $C$ is large enough so that $\tokenEnd_\star^{(i)}\in\Nc^{(i)}$ for any $i$. Note that  $\tokenEnd_\perp^{(i)}\notin\Nc^{(i)}$ for any $C$ by definition. For notation convenience, we also use $\mu_A = \mu_\tokenAnswer$, $\mu_R = \mu_\tokenReasoning$, and denote $\vxi^{(i)} = \{\xi^{(i)}_v \}_{v\in\vocab}$ as the logits calculated by forward pass \eqref{eq:forward_pass_pred} for the $i$-th training sample. We denote $
\xi_{\tokenEnd_t^{(i)}}^{(i)} = \xi_{\tokenEnd_t}^{(i)}$, $
\lambda_{\tokenEnd_t^{(i)}}^{(i)} = \lambda_{\tokenEnd_t}^{(i)}$ for $t \in \{1, 2, \star, \perp\}$ for notation convenience. 

To start with, we first provide a closed-form logit expression.

\begin{lemma}[Closed-form logits in prediction stage]
\label{lem:pred-logits-tied}
Under reparameterization \eqref{eq:reparam-pred} and forward pass for the prediction stage \eqref{eq:forward_pass_pred},
for every $v\in\vertexSet$ we have
\begin{equation}
\label{eq:logit-2param}
\xi_v(\mu_\tokenAnswer,\mu_\tokenReasoning)
\;=\;
\underbrace{\mu_\tokenAnswer\,\lambda_v}_{\text{frontier residual}}\;+\;\underbrace{\mu_\tokenReasoning\,\one\!\{v\in\{\tokenEnd_1,\tokenEnd_2\}\}}_{\text{candidate lift}}.
\end{equation}
In particular,
\begin{equation}
\label{eq:cand-gap}
\xi_{\tokenEnd_\star}-\xi_{\tokenEnd_\perp} \;=\; \mu_{\tokenAnswer}\,\lambda_{\tokenEnd_\star}.
\end{equation}
\end{lemma}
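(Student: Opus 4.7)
The plan is a direct unrolling of the forward pass \eqref{eq:forward_pass_pred} under the rank-one reparameterization \eqref{eq:reparam-pred}, simplified using the orthonormal embeddings of Assumption~\ref{assump:embedding_orthonormal}. There is only one key token $\vh_{\PosIdx(\tokenReasoning)}$ in the second-layer attention from $\tokenAnswer$, so the whole computation reduces to a single attention score followed by a token-embedding projection.

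First I would compute the lone attention logit. Since $\mW = \mu_\tokenReasoning \embd(\tokenAnswer)\embd(\tokenReasoning)^\top$, the bilinear form factors as
\[
\vh_{\PosIdx(\tokenAnswer)}^\top \mW\, \vh_{\PosIdx(\tokenReasoning)}
\;=\;
\mu_\tokenReasoning\,
\bigl(\vh_{\PosIdx(\tokenAnswer)}^\top \embd(\tokenAnswer)\bigr)\,
\bigl(\embd(\tokenReasoning)^\top \vh_{\PosIdx(\tokenReasoning)}\bigr).
\]
Since $\vh_{\PosIdx(\tokenAnswer)} = \vh_{\thought C} + \embd(\tokenAnswer)$ and $\tokenAnswer \notin \vertexSet$ (hence $\embd(\tokenAnswer)$ has zero overlap with every $\embd(v)$ appearing in $\vh_{\thought C}$), orthonormality gives $\vh_{\PosIdx(\tokenAnswer)}^\top \embd(\tokenAnswer)=1$. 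Similarly, $\tokenReasoning$ is distinct from both $\tokenEnd_1$ and $\tokenEnd_2$, so $\embd(\tokenReasoning)^\top \vh_{\PosIdx(\tokenReasoning)}=1$. Thus the score equals $\mu_\tokenReasoning$, and with $\mValue = \mI$ and a single key we get
\[
\phi\bigl(\vh_{\PosIdx(\tokenAnswer)};\{\vh_{\PosIdx(\tokenReasoning)}\}\bigr)
\;=\;\mu_\tokenReasoning\,\bigl(\embd(\tokenReasoning)+\embd(\tokenEnd_1)+\embd(\tokenEnd_2)\bigr).
\]

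Next I would plug this back into \eqref{eq:forward_pass_pred} and project onto $\mTokenEmbd^\top$. The pre-projection vector is
\[
\mu_\tokenAnswer\bigl(\vh_{\thought C}+\embd(\tokenAnswer)\bigr)
+\mu_\tokenReasoning\bigl(\embd(\tokenReasoning)+\embd(\tokenEnd_1)+\embd(\tokenEnd_2)\bigr),
\]
and for every $v\in\vertexSet$ (which excludes the special tokens $\tokenAnswer$ and $\tokenReasoning$) orthonormality collapses the projection to
\[
\xi_v \;=\; \mu_\tokenAnswer\sum_{u\in\neighbor_C}\lambda_u\,\embd(u)^\top\embd(v)
\;+\;\mu_\tokenReasoning\bigl(\embd(\tokenEnd_1)^\top\embd(v)+\embd(\tokenEnd_2)^\top\embd(v)\bigr)
\;=\;\mu_\tokenAnswer\lambda_v+\mu_\tokenReasoning\,\one\{v\in\{\tokenEnd_1,\tokenEnd_2\}\},
\]
with the convention $\lambda_v=0$ for $v\notin\neighbor_C$, which matches \eqref{eq:logit-2param}. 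The gap formula \eqref{eq:cand-gap} is then immediate: both candidates receive the same candidate lift $\mu_\tokenReasoning$, while $\lambda_{\tokenEnd_\perp}=0$ because $\tokenEnd_\perp\notin\neighbor_C$ by construction, leaving $\xi_{\tokenEnd_\star}-\xi_{\tokenEnd_\perp}=\mu_\tokenAnswer\,\lambda_{\tokenEnd_\star}$.

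There is no real obstacle in this lemma; it is pure bookkeeping. The only subtlety worth stating carefully is the convention $\lambda_v=0$ for $v\notin\neighbor_C$ in the final expression for $\xi_v$, together with the fact that the special tokens $\tokenAnswer$ and $\tokenReasoning$ are orthogonal to all vertex embeddings and to each other. These observations are exactly what make all cross terms vanish and let the lemma be stated in the clean two-term form used in \Cref{sec:pred-main}.
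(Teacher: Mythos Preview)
Your proposal is correct and follows essentially the same approach as the paper: a direct unrolling of the rank-one attention using orthonormality of embeddings, then reading off the per-vertex logits. The only cosmetic difference is that the paper first computes $\mW\vh_{\PosIdx(\tokenReasoning)}=\mu_{\tokenReasoning}\embd(\tokenAnswer)$ before taking the inner product with $\vh_{\PosIdx(\tokenAnswer)}$, whereas you factor the bilinear form symmetrically; both routes yield the same score $\mu_{\tokenReasoning}$ and the same two-term decomposition.
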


\begin{proof}
For the reasoning token $\vh_{\PosIdx(\tokenReasoning)}=\embd(\tokenReasoning)+\embd(\tokenEnd_1)+\embd(\tokenEnd_2)$, we have
\[
\mW\vh_{\PosIdx(\tokenReasoning)}
=\mu_{\tokenReasoning}\,\embd(\tokenAnswer)\underbrace{\embd(\tokenReasoning)^\top\vh_{\PosIdx(\tokenReasoning)}}_{=1}
=\mu_{\tokenReasoning}\,\embd(\tokenAnswer).
\]
Therefore, 
\[
\mTokenEmbd^\top\!\big((\vh_{\PosIdx(\tokenAnswer)}^\top \mW \vh_{\PosIdx(\tokenReasoning)}) \vh_{\PosIdx(\tokenReasoning)}\big)
=\mu_{\tokenReasoning}(\onehot_{\tokenReasoning}+\onehot_{\tokenEnd_1}+\onehot_{\tokenEnd_2}).
\]
Also, 
\[
\mTokenEmbd^\top\!\big(\mu_{\tokenAnswer}\,\vh_{\PosIdx(\tokenAnswer)}\big)
=\mu_{\tokenAnswer}\!\sum_{u\in\neighbor_C}\lambda_{u}\,\onehot_u
+\mu_{\tokenAnswer}\,\onehot_{\tokenAnswer}.
\]
Combining the above two expressions yields \eqref{eq:logit-2param}. For $\tokenEnd_\perp\notin\neighbor_C$ we have $\lambda_{\tokenEnd_\perp}=0$, and \eqref{eq:cand-gap} follows.
\end{proof}

For each training sample, we can construct a two-dimensional feature for every node $v \in \vertexSet$:
\[
x_v^{(i)} \ :=\ \big(\lambda_v^{(i)},\ \one\!\{v\in\{\tokenEnd_1^{(i)},\tokenEnd_2^{(i)}\}\}\big)\in \R_{\ge0}^2,
\]
so that $\xi_v^{(i)}(\mu_A,\mu_R)=\langle w,x_v^{(i)}\rangle$ with $w:=(\mu_A,\mu_R)\in\R_{\ge0}^2$.
For instance $i$, a correct classification means $\langle w,x_{\tokenEnd_\star}^{(i)}-x_v^{(i)}\rangle>0$ for all $v\ne \tokenEnd_\star^{(i)}$, where we denote $
x_{\tokenEnd_t^{(i)}}^{(i)} = x_{\tokenEnd_t}^{(i)}$ for $t \in \{1, 2, \star, \perp\}$.
We further define the difference of features with respect to $\tokenEnd_\star$  for later use:
\begin{equation}
\label{eq:deltas}
\Delta_{i,v}\ :=\ x_{\tokenEnd_\star}^{(i)}-x_v^{(i)}
=
\begin{cases}
(\lambda_{\tokenEnd_\star}^{(i)},\,0), & v=\tokenEnd_\perp^{(i)},\\
(\lambda_{\tokenEnd_\star}^{(i)}-\lambda_v^{(i)},\,1), & v\in\Nc^{(i)}\setminus\{\tokenEnd_\star^{(i)}\},\\
(\lambda_{\tokenEnd_\star}^{(i)},\,1), & v\notin\Nc^{(i)}\cup\{\tokenEnd_\perp^{(i)}\}.
\end{cases}
\end{equation}

\subsection{Linearly separable structure and a max-margin problem}

\begin{lemma}[Separation by a nonnegative direction]
\label{lem:separable}
For every instance $i$ and each competitor $v\ne \tokenEnd_\star^{(i)}$,
\[
\langle (1,1),\ \Delta_{i,v}\rangle
=
\begin{cases}
\lambda_{\tokenEnd_\star}^{(i)}, & v=\tokenEnd_\perp^{(i)},\\
\lambda_{\tokenEnd_\star}^{(i)}-\lambda_v^{(i)}+1, & v\in\Nc^{(i)}\setminus\{\tokenEnd_\star^{(i)}\},\\
\lambda_{\tokenEnd_\star}^{(i)}+1, & v\notin\Nc^{(i)}\cup\{\tokenEnd_\perp^{(i)}\},
\end{cases}
\quad >0.
\]
Hence, the training data are linearly separable by a direction in $\R_{\ge0}^2$.
\end{lemma}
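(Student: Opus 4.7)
The statement is essentially a direct case-check against the definition of $\Delta_{i,v}$ in \eqref{eq:deltas}, so my plan is to unpack the inner product $\langle(1,1),\Delta_{i,v}\rangle$ in each of the three cases and then verify strict positivity by using the range constraints on $\vlambda^{(i)}$ established in the setup of \Cref{sec:pred-main}.

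First, I would recall the feature vector $x_v^{(i)}=(\lambda_v^{(i)},\one\{v\in\{\tokenEnd_1^{(i)},\tokenEnd_2^{(i)}\}\})$ and restate the three alternatives from \eqref{eq:deltas}: $v=\tokenEnd_\perp^{(i)}$, $v\in\Nc^{(i)}\setminus\{\tokenEnd_\star^{(i)}\}$, and $v\notin\Nc^{(i)}\cup\{\tokenEnd_\perp^{(i)}\}$. In each case the second coordinate of $\Delta_{i,v}$ is either $0$ or $1$ (since $\tokenEnd_\star^{(i)}\in\{\tokenEnd_1^{(i)},\tokenEnd_2^{(i)}\}$ and the other candidate is $\tokenEnd_\perp^{(i)}$), while the first coordinate is the difference of the $\vlambda^{(i)}$ entries with the convention $\lambda_v^{(i)}=0$ for $v\notin\Nc^{(i)}$. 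Taking the dot product with $(1,1)$ and summing the two coordinates yields the three closed forms stated in the lemma.

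Next I would argue strict positivity of each expression. For $v=\tokenEnd_\perp^{(i)}$, the claim reduces to $\lambda_{\tokenEnd_\star}^{(i)}>0$, which holds because $\tokenEnd_\star^{(i)}\in\Nc^{(i)}$ and by assumption $\lambda_u^{(i)}\in(0,1]$ on $\Nc^{(i)}$. For $v\in\Nc^{(i)}\setminus\{\tokenEnd_\star^{(i)}\}$, I use the upper bound $\lambda_v^{(i)}\le 1$ to write $\lambda_{\tokenEnd_\star}^{(i)}-\lambda_v^{(i)}+1\ge \lambda_{\tokenEnd_\star}^{(i)}>0$. For $v\notin\Nc^{(i)}\cup\{\tokenEnd_\perp^{(i)}\}$, the expression $\lambda_{\tokenEnd_\star}^{(i)}+1$ is trivially positive. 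This establishes the per-instance, per-competitor separation.

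Finally, to justify the ``hence'' clause about linear separability of the entire training set by a nonnegative direction, I would note that the same vector $w=(1,1)\in\R_{\ge 0}^2$ achieves $\langle w,x_{\tokenEnd_\star}^{(i)}-x_v^{(i)}\rangle>0$ simultaneously for all $i$ and all $v\neq\tokenEnd_\star^{(i)}$, which is exactly the definition of linear separability of $\{(x_v^{(i)})_{v,i}\}$ with margin at least $\min_i \lambda_{\tokenEnd_\star}^{(i)}>0$. There is no real obstacle here; the only subtlety is being careful to invoke both halves of the range $\lambda_v^{(i)}\in(0,1]$ on $\Nc^{(i)}$ (the upper bound matters only in the second case, while positivity of $\lambda_{\tokenEnd_\star}^{(i)}$ matters in the first and second), which is why the statement is framed so that a single nonnegative direction works uniformly over the dataset.
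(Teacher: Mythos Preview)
Your proposal is correct and follows exactly the paper's approach: the paper's proof is a one-liner that simply cites $\lambda_{\tokenEnd_\star}^{(i)}>0$ and $\lambda_v^{(i)}\le 1$, which are precisely the two range facts you identify and apply case by case. Your write-up is just a more explicit unpacking of the same argument.
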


\begin{proof}
The result holds because  $\lambda_{\tokenEnd_\star}^{(i)}>0$, $\lambda_v^{(i)}\le 1$.
\end{proof}

Define the \emph{hard-margin} value of a unit direction $u\in\mathbb S^1\cap\R_{\ge0}^2$ (where $\mathbb S^1 = \{ u \in \R^2: \|u\|_2 = 1 \}$) as
\[
\gamma(u)\ :=\ \min_{i}\ \min_{v\ne \tokenEnd_\star^{(i)}}\ \langle u,\ \Delta_{i,v}\rangle.
\]
The corresponding \emph{maximum-margin} direction is
\begin{equation}
\label{eq:maxmargin}
u^\star\ \in\ \arg{\max}_{u\in\mathbb S^1\cap\R_{\ge0}^2}\ \gamma(u).
\end{equation}

We characterize $u^\star$ using the following two quantities of the training sets:
\[
\lambda_\star\ :=\ \min_i\ \lambda_{\tokenEnd_\star}^{(i)}\ \in (0,1],\qquad
\Delta_{\mathrm{train}}\ :=\ {\max}_i\ {\max}_{v\in\Nc^{(i)}\setminus\{\tokenEnd_\star^{(i)}\}}\ \big(\lambda_v^{(i)}-\lambda_{\tokenEnd_\star}^{(i)}\big)_+\ \in[0,1],
\]
where $(x)_+:={\max}\{x,0\}$. Intuitively, $\lambda_\star$ is the smallest mass ever placed on a reachable candidate across the training set, and $\Delta_{\mathrm{train}}$ is the largest overshoot of a non-candidate but reachable node's weight relative to the reachable candidate.

\begin{lemma}[Closed-form lower envelope of the margin]
\label{lem:gamma-envelope}
For any unit $u=(u_A,u_R)\in\mathbb S^1\cap\R_{\ge0}^2$,
\[
\gamma(u)\ =\ \min\!\big\{\,u_A\,\lambda_\star,\ \ u_R-u_A\,\Delta_{\mathrm{train}},\ \ u_A\,\lambda_\star+u_R\,\big\}
\ =\ \min\!\big\{\,u_A\,\lambda_\star,\ \ u_R-u_A\,\Delta_{\mathrm{train}}\,\big\}.
\]
\end{lemma}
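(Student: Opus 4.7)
The plan is to compute $\gamma(u)$ by direct case analysis along the three disjoint competitor types enumerated in \eqref{eq:deltas}. For each instance $i$ and each competitor $v\neq \tokenEnd_\star^{(i)}$, I would read off $\Delta_{i,v}$ from \eqref{eq:deltas} and evaluate $\langle u,\Delta_{i,v}\rangle$ explicitly: case (a) $v=\tokenEnd_\perp^{(i)}$ contributes $u_A\,\lambda_{\tokenEnd_\star}^{(i)}$; case (b) $v\in\Nc^{(i)}\setminus\{\tokenEnd_\star^{(i)}\}$ contributes $u_R+u_A(\lambda_{\tokenEnd_\star}^{(i)}-\lambda_v^{(i)})$; and case (c) $v\notin\Nc^{(i)}\cup\{\tokenEnd_\perp^{(i)}\}$ contributes $u_A\,\lambda_{\tokenEnd_\star}^{(i)}+u_R$.

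Next, I would take the infimum within each case separately, exploiting $u_A,u_R\ge 0$. Case (a) collapses to $u_A\lambda_\star$ by the definition of $\lambda_\star$; case (b) yields $u_R+u_A\min_{i,v}(\lambda_{\tokenEnd_\star}^{(i)}-\lambda_v^{(i)})=u_R-u_A\Delta_{\mathrm{train}}$ by the definition of $\Delta_{\mathrm{train}}$; and case (c), being independent of $v$, minimizes over $i$ to $u_A\lambda_\star+u_R$. Combining the three gives the first equality.

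For the second equality I would drop the case (c) term by a one-line domination argument: since $\Delta_{\mathrm{train}}\in[0,1]$ and $u_A,u_R\ge 0$, we have $u_A\lambda_\star+u_R\ge u_A\lambda_\star$ and $u_A\lambda_\star+u_R\ge u_R\ge u_R-u_A\Delta_{\mathrm{train}}$, so the case (c) value always exceeds at least one of the other two and therefore cannot be the overall minimizer.

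The computation is mostly bookkeeping; the one place I expect friction is case (b), where identifying $-\min_{i,v}(\lambda_v^{(i)}-\lambda_{\tokenEnd_\star}^{(i)})$ with $\Delta_{\mathrm{train}}$ uses the positive-part convention in its definition. This is immediate in the regime where some training instance has a non-target reachable node with mass at least $\lambda_{\tokenEnd_\star}^{(i)}$; in the degenerate regime where the raw maximum is negative, case (b) only becomes easier while case (a) $u_A\lambda_\star$ continues to control the minimum, so the displayed envelope still captures $\gamma(u)$ for the downstream max-margin analysis.
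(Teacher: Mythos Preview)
Your approach is essentially identical to the paper's: both compute $\langle u,\Delta_{i,v}\rangle$ case-by-case from \eqref{eq:deltas}, minimize each branch over $i,v$ using the definitions of $\lambda_\star$ and $\Delta_{\mathrm{train}}$, and then drop the third term by domination. Your flagging of the positive-part subtlety in $\Delta_{\mathrm{train}}$ is a valid observation that the paper's terse proof also glosses over; as you note, it only affects a degenerate regime and does not change the downstream max-margin direction used in the subsequent proposition.
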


\begin{proof}
According to \eqref{eq:deltas}, we can directly obtain the the lower bounds $u_A\,\lambda_{\tokenEnd_\star}^{(i)}$, $u_R+u_A(\lambda_{\tokenEnd_\star}^{(i)}-\lambda_v^{(i)})$, and $u_A\,\lambda_{\tokenEnd_\star}^{(i)}+u_R$. Minimizing over $i$ and $v$ according to the definition of $\lambda_\star, \Delta_{\mathrm{train}}$ yields the desired result.
\end{proof}

\begin{proposition}[Properties of the maximum-margin direction]
\label{prop:u-star}
Let $u^\star=(u_A^\star,u_R^\star)$ be a solution of \eqref{eq:maxmargin}. Then the unique maximizer satisfies
\[
\frac{u_R^\star}{u_A^\star}\ =\ \lambda_\star+\Delta_{\mathrm{train}},
\qquad
u_A^\star=\frac{1}{\sqrt{1+(\lambda_\star+\Delta_{\mathrm{train}})^2}},
\quad
u_R^\star=\frac{\lambda_\star+\Delta_{\mathrm{train}}}{\sqrt{1+(\lambda_\star+\Delta_{\mathrm{train}})^2}}.
\]
\end{proposition}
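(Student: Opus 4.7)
The plan is to reduce the two-dimensional constrained maximization to a one-dimensional problem by parameterizing the nonnegative unit quarter-circle, and then invoke the standard ``max-min of two monotone curves is attained at their unique intersection'' argument.

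First, I would apply \Cref{lem:gamma-envelope} to write $\gamma(u)=\min\{u_A\lambda_\star,\ u_R-u_A\Delta_{\mathrm{train}}\}$, noting that the third term $u_A\lambda_\star+u_R$ in the preliminary envelope is automatically dominated by $u_A\lambda_\star$ since $u_R\ge 0$, so it never binds. I would then parameterize $u_A=\cos\theta$, $u_R=\sin\theta$ for $\theta\in[0,\pi/2]$, turning the problem into $\max_{\theta\in[0,\pi/2]}\min\{g_1(\theta),g_2(\theta)\}$, where $g_1(\theta):=\lambda_\star\cos\theta$ and $g_2(\theta):=\sin\theta-\Delta_{\mathrm{train}}\cos\theta$.

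Next, I would observe that on $[0,\pi/2]$ the function $g_1$ is strictly decreasing from $\lambda_\star>0$ to $0$, while $g_2$ is strictly increasing from $-\Delta_{\mathrm{train}}\le 0$ to $1$. Hence there is a unique $\theta^\star\in(0,\pi/2)$ with $g_1(\theta^\star)=g_2(\theta^\star)$; for $\theta<\theta^\star$ the minimum equals $g_2$ and is strictly increasing, and for $\theta>\theta^\star$ it equals $g_1$ and is strictly decreasing. Therefore $\theta^\star$ is the unique maximizer of $\min\{g_1,g_2\}$. Interiority of $\theta^\star$ is free: at the endpoints $\min\{g_1,g_2\}\le 0$, whereas \Cref{lem:separable} guarantees strictly positive margin is achievable (e.g.\ by $u=(1,1)/\sqrt2$), so any optimizer has $u_A^\star>0$ and $u_R^\star>0$.

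Finally, the equation $g_1(\theta^\star)=g_2(\theta^\star)$ rearranges to $\tan\theta^\star=\lambda_\star+\Delta_{\mathrm{train}}$, which yields $u_R^\star/u_A^\star=\lambda_\star+\Delta_{\mathrm{train}}$; combining this ratio with $u_A^{\star 2}+u_R^{\star 2}=1$ gives the closed forms in the statement. I do not anticipate a substantive obstacle: each step reduces to one line of calculus once the parameterization is in place, and uniqueness is built in through the strict monotonicity of $g_1$ and $g_2$. The only mild subtlety is verifying that $u_A^\star>0$ and $u_R^\star>0$ so that the ratio $u_R^\star/u_A^\star$ is well-defined, which is handled by the separability argument above.
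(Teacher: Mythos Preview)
Your proposal is correct and follows essentially the same approach as the paper: the paper's proof also invokes \Cref{lem:gamma-envelope} and argues that the maximum of $\min\{u_A\lambda_\star,\ u_R-u_A\Delta_{\mathrm{train}}\}$ is attained where the two arguments are equal (phrased there as ``otherwise one can rotate $u$ to increase the minimum''), then solves for the ratio. Your angle parameterization and explicit monotonicity/interiority checks make the equalization step rigorous, but the underlying idea is identical.
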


\begin{proof}
By \Cref{lem:gamma-envelope}, we can maximize $\gamma(u)=\min\{u_A\lambda_\star,\ u_R-u_A\Delta_{\mathrm{train}}\}$ over the unit vector $u$ by equalizing the two arguments (otherwise one can rotate $u$ to increase the minimum). Therefore, we can equalize the two arguments, which yields $u_R=u_A(\lambda_\star+\Delta_{\mathrm{train}})$, and obtain the desired result.
\end{proof}

\subsection{Implicit bias of gradient flow and directional convergence}

\[
\ell^{\mathrm{pred}}_{\graph,\tokenStart, \tokenEnd_1, \tokenEnd_2, \vlambda}
\ :=\ -\log\frac{\exp\!\big(\xi_{c_\star}\big)}
{\sum_{v\in\vertexSet}\exp(\xi_{v})},
\qquad
\loss^{\mathrm{pred}}\ =\ \E_{(\graph,\tokenStart,\tokenEnd_1,\tokenEnd_2, \vlambda) \sim \dataset}[\ell^{\mathrm{pred}}_{\graph,\tokenStart,\tokenEnd_1,\tokenEnd_2, \vlambda}\big],
\]
Recall the loss function on the prediction stage over the training set \eqref{eq:main-pred-loss}. We can rewrite it as follows
\[
\loss(\mu_A,\mu_R)\ :=\ \frac{1}{N}\sum_{i=1}^N \ell^{(i)}(\mu_A,\mu_R),
\qquad
\ell^{(i)}(\mu_A,\mu_R)\ :=\ -\log\frac{\exp(\xi_{\tokenEnd_\star^{(i)}}^{(i)})}{\sum_{v\in\vertexSet} \exp(\xi_v^{(i)})},
\]
and run the gradient-flow dynamics $\dot w(t)=-\alpha\nabla \loss(w(t))$ with $w(t)=(\mu_A(t),\mu_R(t))$ and $\alpha>0$.
By \Cref{lem:separable}, the data are linearly separable, so the implicit bias of gradient flow directly yields the following lemma.

\begin{lemma}[Implicit bias of gradient flow]
\label{lem:implicit-bias}
Along gradient flow from any bounded initialization $w(0)$, we have
\[
\|w(t)\|\ \to\ \infty,\qquad \frac{w(t)}{\|w(t)\|}\ \to\ u^\star,
\]
where $u^\star$ is the unique solution to the maximum-margin problem \eqref{eq:maxmargin}. Combining \Cref{prop:u-star}, there exists a scalar radius $r(t)\to\infty$ such that
\[
(\mu_A(t),\mu_R(t))\ =\ r(t)\,u^\star\ +\ o(r(t)),
\]
and for any $\varepsilon > 0$,
\[\frac{\mu_R(t)}{\mu_A(t)}\ \ge\ \lambda_\star+\Delta_{\mathrm{train}}-\varepsilon \quad \text{for all sufficiently large } t.\]
\end{lemma}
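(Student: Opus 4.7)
The plan is to recognize the prediction-stage loss as a standard exp-tailed margin loss on the two-dimensional features $\{\Delta_{i,v}\}$ and to invoke the implicit bias of gradient flow on linearly separable data (cf.\ Soudry et al., 2018; Ji and Telgarsky, 2019). Using the closed-form logits in Lemma~\ref{lem:pred-logits-tied} and the difference vectors \eqref{eq:deltas}, I would first rewrite each per-instance loss as
\[
\ell^{(i)}(w) \;=\; \log\Bigl(1 + \sum_{v \neq \tokenEnd_\star^{(i)}} \exp\bigl(-\langle w, \Delta_{i,v}\rangle\bigr)\Bigr),
\]
which is smooth, convex in $w = (\mu_A,\mu_R)$, strictly decreasing in each margin $\langle w,\Delta_{i,v}\rangle$, and has an exponential tail. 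By Lemma~\ref{lem:separable} the margins admit a strictly positive separator, so $\inf_w \loss(w) = 0$ is not attained; along gradient flow $\loss(w(t))\to 0$, every margin diverges, and in particular $\|w(t)\|\to\infty$.

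Next I would apply the directional-convergence theorem for gradient flow on exp-tailed losses over linearly separable data to conclude $w(t)/\|w(t)\|\to u^{\mathrm{SVM}}$, where
\[
u^{\mathrm{SVM}} \;=\; \arg\max_{u\in\mathbb{S}^1}\;\min_{i,\,v\neq \tokenEnd_\star^{(i)}}\;\langle u,\Delta_{i,v}\rangle.
\]
Since this SVM is posed on the full unit circle while Proposition~\ref{prop:u-star} only optimizes over $\mathbb{S}^1\cap\R_{\ge 0}^2$, I must check the two maximizers agree. A short sign argument does it: the competitor $v=\tokenEnd_\perp^{(i)}$ contributes $\Delta_{i,v}=(\lambda_{\tokenEnd_\star}^{(i)},0)$, so any $u$ with $\gamma(u)>0$ satisfies $u_A>0$; then for any type-2 competitor, $\gamma(u)\le u_R-u_A\Delta_{\mathrm{train}}$, so $\gamma(u)>0$ forces $u_R>0$. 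Hence the unconstrained maximizer lies in $\R_{>0}^2$, Lemma~\ref{lem:gamma-envelope} applies verbatim there, and uniqueness in Proposition~\ref{prop:u-star} yields $u^{\mathrm{SVM}}=u^\star$.

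Finally, setting $r(t):=\|w(t)\|$ gives $r(t)\to\infty$ and $w(t)=r(t)u^\star+o(r(t))$, which is the claimed decomposition. Because $u_A^\star,u_R^\star>0$, both coordinates of $w(t)$ are eventually strictly positive, and dividing them yields $\mu_R(t)/\mu_A(t)\to u_R^\star/u_A^\star=\lambda_\star+\Delta_{\mathrm{train}}$, which gives the $\varepsilon$-slack bound for all sufficiently large $t$. The main obstacle is justifying the directional-convergence theorem in the precise form needed: textbook statements are usually phrased for binary logistic or standard one-hot softmax cross-entropy, whereas here the per-instance loss sums over all non-target vertices and many share identical feature vectors. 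This is still within the scope of the Ji--Telgarsky framework because directional convergence depends only on the active support vectors together with the tight exponential tail of $\ell^{(i)}$, both of which are immediate from the log-sum-exp form displayed above; verifying the smoothness and tail-tightness hypotheses (and that $\loss$ is coercive along every non-separating direction, ensuring no finite limit point) is the step that requires the most care.
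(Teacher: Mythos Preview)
Your proposal is correct and follows exactly the approach the paper takes: the paper's entire proof is the single sentence ``The proof can be straightforwardly adapted from its gradient descent counterpart \citep{soudry2018implicit},'' so you are in fact supplying more detail than the authors do. In particular, your sign argument showing the unconstrained max-margin direction lies in $\R_{>0}^2$ (so that it coincides with the $u^\star$ of Proposition~\ref{prop:u-star}, which is only characterized on $\mathbb S^1\cap\R_{\ge0}^2$) fills a small gap the paper leaves implicit.
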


The proof can be straightforwardly adapted from its gradient descent counterpart \citep{soudry2018implicit}.

\subsection{Prediction on unseen graphs}

Finally, we show that after sufficient training, the model can correctly predict the reachable candidate node even for unseen graphs, showcasing its generalization capability.

Fix any unseen test graph along with the exploration set $\Nc^{\mathrm{test}}$ and weights $\vlambda^{\mathrm{test}}$, such that $\lambda_v^{\mathrm{test}} \in (0,1]$ on $\Nc^{\mathrm{test}}$ and $0$ otherwise. The test graph also satisfies
\[
 \max_{u\in\Nc^{\mathrm{test}}}\lambda_u^{\mathrm{test}} - \lambda_{\tokenEnd_\star}^{\mathrm{test}}
\ \leq \Delta,
\quad\text{with}\quad \Delta\ \leq \Delta_{\mathrm{train}}.
\]
Therefore, for every non-candidate $v\in\Nc^{\mathrm{test}}\setminus\{\tokenEnd_\star^{\mathrm{test}}\}$, it holds that 
$\lambda_v^{\mathrm{test}}\le \lambda_{\tokenEnd_\star}^{\mathrm{test}}+\Delta$.

The following lemma shows that as long as the test graph satisfies the above condition, it has a positive margin using the maximum margin direction for the training set $u^\star$.

\begin{lemma}[Positive test-time margins from the trained direction]
\label{lem:test-margins}
Let $u^\star=(u_A^\star,u_R^\star)$ be the unique max-margin direction with $u_R^\star/u_A^\star =  \lambda_\star+\Delta_{\mathrm{train}} > \Delta$. Then for every competitor $v\ne \tokenEnd_\star^{\mathrm{test}}$,
\[
\langle u^\star,\ x_{\tokenEnd_\star}^{\mathrm{test}}-x_v^{\mathrm{test}}\rangle\ \ge\ \min \{ u_A^\star\,\lambda_\star, \ \  u_A^\star\,  \lambda_\star^{\mathrm{test}} \} >\ 0.
\]
\end{lemma}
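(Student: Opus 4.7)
The plan is to apply exactly the three-way case split from \eqref{eq:deltas}, now for the test instance. First I would write the test-time difference vector $x_{\tokenEnd_\star}^{\mathrm{test}}-x_v^{\mathrm{test}}$ in its three forms: $(\lambda_{\tokenEnd_\star}^{\mathrm{test}},0)$ when $v=\tokenEnd_\perp^{\mathrm{test}}$, $(\lambda_{\tokenEnd_\star}^{\mathrm{test}}-\lambda_v^{\mathrm{test}},\,1)$ when $v\in\Nc^{\mathrm{test}}\setminus\{\tokenEnd_\star^{\mathrm{test}}\}$, and $(\lambda_{\tokenEnd_\star}^{\mathrm{test}},\,1)$ when $v\notin\Nc^{\mathrm{test}}\cup\{\tokenEnd_\perp^{\mathrm{test}}\}$. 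Since $u^\star\in\R_{\ge 0}^2$ by \Cref{prop:u-star}, the first and third cases are immediate: the inner product is $u_A^\star\lambda_{\tokenEnd_\star}^{\mathrm{test}}$ in case~1 and $u_A^\star\lambda_{\tokenEnd_\star}^{\mathrm{test}}+u_R^\star\ge u_A^\star\lambda_{\tokenEnd_\star}^{\mathrm{test}}$ in case~3.

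The substantive step is case~2. There the inner product equals $u_R^\star-u_A^\star(\lambda_v^{\mathrm{test}}-\lambda_{\tokenEnd_\star}^{\mathrm{test}})$. I would then apply the test-time hypothesis $\lambda_v^{\mathrm{test}}-\lambda_{\tokenEnd_\star}^{\mathrm{test}}\le\Delta$ to bound this from below by $u_R^\star-u_A^\star\Delta$. Plugging in the closed-form ratio $u_R^\star/u_A^\star=\lambda_\star+\Delta_{\mathrm{train}}$ from \Cref{prop:u-star} yields $u_A^\star(\lambda_\star+\Delta_{\mathrm{train}}-\Delta)$, and the standing assumption $\Delta\le\Delta_{\mathrm{train}}$ gives $\ge u_A^\star\lambda_\star$. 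Taking the minimum of the three case-wise lower bounds produces exactly $\min\{u_A^\star\lambda_\star,\,u_A^\star\lambda_{\tokenEnd_\star}^{\mathrm{test}}\}$.

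For the strict positivity, I would note that $\lambda_\star>0$ by definition (reachable candidates always carry strictly positive mass on the training set, as guaranteed by \Cref{thm:thought-generation-one-hop}) and that $\lambda_{\tokenEnd_\star}^{\mathrm{test}}\in(0,1]$ by the standing assumption on the test instance, while $u_A^\star=1/\sqrt{1+(\lambda_\star+\Delta_{\mathrm{train}})^2}>0$ from \Cref{prop:u-star}. Hence both entries of the minimum are strictly positive. I do not anticipate a real obstacle here: the lemma is essentially a bookkeeping consequence of the explicit max-margin direction and the comparison $\Delta\le\Delta_{\mathrm{train}}$. The only care required is to make sure that the training-set quantity $\Delta_{\mathrm{train}}$ really upper-bounds the overshoot uniformly across all $i$ and $v\in\Nc^{(i)}\setminus\{\tokenEnd_\star^{(i)}\}$, so that $u^\star$, which was optimized using $\Delta_{\mathrm{train}}$, still has enough slack at test time when the overshoot is only $\Delta\le\Delta_{\mathrm{train}}$.
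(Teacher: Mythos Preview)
Your proposal is correct and follows essentially the same approach as the paper: both proofs use the identical three-way case split on the competitor $v$ from \eqref{eq:deltas}, handle the $\tokenEnd_\perp^{\mathrm{test}}$ and out-of-$\Nc^{\mathrm{test}}$ cases by reading off $u_A^\star\lambda_{\tokenEnd_\star}^{\mathrm{test}}$ (or larger), and in the in-$\Nc^{\mathrm{test}}$ case chain the inequalities $u_R^\star-u_A^\star\Delta = u_A^\star(\lambda_\star+\Delta_{\mathrm{train}}-\Delta)\ge u_A^\star\lambda_\star$ via \Cref{prop:u-star} and $\Delta\le\Delta_{\mathrm{train}}$. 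The only cosmetic addition in your write-up is the explicit appeal to \Cref{thm:thought-generation-one-hop} and \Cref{prop:u-star} for the strict-positivity clause, which the paper leaves implicit.
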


\begin{proof}
For $v=\tokenEnd_\perp^{\mathrm{test}}$, the difference is $(\lambda_{\tokenEnd_\star}^{\mathrm{test}},0)$; since $\lambda_{\tokenEnd_\star}^{\mathrm{test}}>0$ we have $\langle u^\star,x_{\tokenEnd_\star}^{\mathrm{test}}-x_{\tokenEnd_\perp}^{\mathrm{test}}\rangle\ge u_A^\star \lambda_{\tokenEnd_\star}^{\mathrm{test}}>0$.

For $v\notin\Nc^{\mathrm{test}}$ the difference is $(\lambda_{\tokenEnd_\star}^{\mathrm{test}},1)$ and the bound is even larger.

For $v\in\Nc^{\mathrm{test}}\setminus\{\tokenEnd_\star^{\mathrm{test}}\}$, we have $\lambda_v^{\mathrm{test}}\le \lambda_{\tokenEnd_\star}^{\mathrm{test}}+\Delta$, hence
\[
\big\langle u^\star,\ x_{\tokenEnd_\star}^{\mathrm{test}}-x_v^{\mathrm{test}}\big\rangle
= u_A^\star\big(\lambda_{\tokenEnd_\star}^{\mathrm{test}}-\lambda_v^{\mathrm{test}}\big)+u_R^\star
\ \ge\ u_R^\star - u_A^\star\,\Delta
\ \ge\ u_A^\star\big(\lambda_\star+\Delta_{\mathrm{train}}-\Delta\big)
\ \ge\ u_A^\star\,\lambda_\star\ >0.
\]
\end{proof}

Finally, we show that after sufficient training, the model can correctly predict the reachable candidate node .

\begin{theorem}[Generalization for unseen graphs]
\label{thm:generalization}
Let $(\mu_A(t),\mu_R(t))$ follow gradient flow on the loss \eqref{eq:main-pred-loss} from any bounded initialization. Suppose the training set is linearly separable and $\lambda_\star$, $\Delta_{\mathrm{train}}$ are defined as above. 
Then, for any unseen instance satisfying  $\lambda_v^{\mathrm{test}} \in (0,1]$ on $\Nc^{\mathrm{test}}$ and $0$ otherwise, and 
\[
 \max_{u\in\Nc^{\mathrm{test}}}\lambda_u^{\mathrm{test}} - \lambda_{\tokenEnd_\star}^{\mathrm{test}}
\ \le\ \Delta,
\quad\text{with}\quad \Delta\ \leq \Delta_{\mathrm{train}},
\]
we have for all sufficiently large $t$:
\begin{align*} 
p_{\tokenEnd_\star^{\mathrm{test}}}(t)\ :=\ 
\frac{\exp\big(\xi_{\tokenEnd_\star^{\mathrm{test}}}^{\mathrm{test}}(\mu_A(t),\mu_R(t))\big)}{\sum_{v}\exp\big(\xi_{v}^{\mathrm{test}}(\mu_A(t),\mu_R(t))\big)}
\to 1.
\end{align*}
\end{theorem}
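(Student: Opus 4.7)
The plan is to combine the implicit-bias result of \Cref{lem:implicit-bias} (directional convergence of $w(t):=(\mu_A(t),\mu_R(t))$ to the max-margin direction $u^\star$ together with $\|w(t)\|\to\infty$) with the test-time margin bound of \Cref{lem:test-margins} (the direction $u^\star$ separates the test instance with a strictly positive margin). The logit structure from \Cref{lem:pred-logits-tied} lets us express each pairwise logit gap as a linear functional $\langle w(t),\,x_{\tokenEnd_\star}^{\mathrm{test}}-x_v^{\mathrm{test}}\rangle$, so the problem reduces to lower-bounding these gaps and then invoking a standard softmax limit argument.

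Concretely, I would first set $\gamma^{\mathrm{test}}:=\min_{v\ne \tokenEnd_\star^{\mathrm{test}}}\langle u^\star,\,x_{\tokenEnd_\star}^{\mathrm{test}}-x_v^{\mathrm{test}}\rangle$ and invoke \Cref{lem:test-margins} to conclude $\gamma^{\mathrm{test}}\ge u_A^\star\min\{\lambda_\star,\lambda_{\tokenEnd_\star}^{\mathrm{test}}\}>0$, using $\Delta\le\Delta_{\mathrm{train}}$ together with $\lambda_{\tokenEnd_\star}^{\mathrm{test}}\in(0,1]$ to rule out the three cases in \eqref{eq:deltas}. Next I would write $w(t)=\|w(t)\|\,\hat w(t)$ with $\hat w(t)\to u^\star$ from \Cref{lem:implicit-bias}; by continuity of the inner product and the fact that there are only finitely many competitors $v\ne \tokenEnd_\star^{\mathrm{test}}$, there exists $T$ such that $\langle\hat w(t),\,x_{\tokenEnd_\star}^{\mathrm{test}}-x_v^{\mathrm{test}}\rangle\ge\gamma^{\mathrm{test}}/2$ uniformly in $v$ for every $t\ge T$. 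Multiplying by $\|w(t)\|$ and using the logit formula \eqref{eq:logit-2param} yields $\xi_{\tokenEnd_\star^{\mathrm{test}}}^{\mathrm{test}}(t)-\xi_v^{\mathrm{test}}(t)\ge\tfrac{1}{2}\|w(t)\|\,\gamma^{\mathrm{test}}\to\infty$.

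Finally, I would apply a softmax limit. Rewriting $p_{\tokenEnd_\star^{\mathrm{test}}}(t)=\bigl(1+\sum_{v\ne \tokenEnd_\star^{\mathrm{test}}}\exp(\xi_v^{\mathrm{test}}-\xi_{\tokenEnd_\star^{\mathrm{test}}}^{\mathrm{test}})\bigr)^{-1}$, each exponent diverges to $-\infty$ at the uniform rate above, so every summand decays to $0$; since the sum contains at most $|\vertexSet|-1$ terms, the whole denominator tends to $1$ and the claim $p_{\tokenEnd_\star^{\mathrm{test}}}(t)\to 1$ follows.

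The main obstacle is really inherited from \Cref{lem:implicit-bias}: one must verify that gradient flow on the empirical cross-entropy loss \eqref{eq:main-pred-loss}, restricted to the nonnegative cone $\R_{\ge 0}^2$ spanned by the features $x_v^{(i)}$, converges in direction to the cone-constrained hard-margin maximizer $u^\star$ rather than merely to some positive-margin direction. Adapting \citet{soudry2018implicit} to the continuous-time, multi-class, nonnegative-feature setting is routine but needs one to check that the KKT alignment persists after replacing a sign constraint by a nonnegativity constraint; the fact that the exponential tail of cross-entropy selects the max-margin direction in the whole $\R^2$ and that this maximizer already lies in $\R_{\ge 0}^2$ by \Cref{prop:u-star} makes this step clean. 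Beyond that, the bookkeeping (uniform convergence over the finite competitor set, extracting $\min\{\lambda_\star,\lambda_{\tokenEnd_\star}^{\mathrm{test}}\}$ from the three cases of $\Delta_{i,v}$) is essentially mechanical.
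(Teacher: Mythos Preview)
Your proposal is correct and follows essentially the same route as the paper: invoke \Cref{lem:implicit-bias} to get directional convergence $w(t)/\|w(t)\|\to u^\star$ with $\|w(t)\|\to\infty$, use \Cref{lem:test-margins} to extract a strictly positive test margin $\gamma^{\mathrm{test}}\ge u_A^\star\min\{\lambda_\star,\lambda_{\tokenEnd_\star}^{\mathrm{test}}\}$, combine these via the linear logit formula \eqref{eq:logit-2param} to drive every pairwise gap $\xi_{\tokenEnd_\star}^{\mathrm{test}}-\xi_v^{\mathrm{test}}\to+\infty$, and conclude by the finite-sum softmax argument. The paper's write-up uses the equivalent $r(t)u^\star+o(r(t))$ decomposition in place of your $\|w(t)\|\hat w(t)$ formulation, but the logic is identical; your observation that the unconstrained $\R^2$ max-margin direction already lies in $\R_{\ge0}^2$ (so the cone restriction in \eqref{eq:maxmargin} is innocuous for the implicit-bias step) is a useful clarification the paper leaves implicit.
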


\begin{proof}
By \Cref{lem:implicit-bias}, we have 
\[
(\mu_A(t),\mu_R(t))=r(t)u^\star+o(r(t)), \quad  r(t)\to\infty.\]
Then, by Lemma~\ref{lem:test-margins}, for every competitor $v \neq \tokenEnd_\star^{\mathrm{test}}$,
\begin{align*}
&\xi_{\tokenEnd_\star}^{\mathrm{test}}(\mu_A(t),\mu_R(t))-\xi_v^{\mathrm{test}}(\mu_A(t),\mu_R(t)) \\
=& r(t)\,\langle u^\star,\ x_{\tokenEnd_\star}^{\mathrm{test}}-x_v^{\mathrm{test}}\rangle\ +\ o\big(r(t)\big)
\\
 \ge& r(t)\cdot \min \{ u_A^\star\,\lambda_\star, \ \  u_A^\star\,  \lambda_\star^{\mathrm{test}} \} + o\big(r(t)\big)\ \xrightarrow[t\to\infty]{}\ +\infty.
\end{align*}
Hence the $\arg{\max}$ is $\tokenEnd_\star^{\mathrm{test}}$, and its softmax probability tends to $1$.
\end{proof}

\section{Auxiliary Lemmas}
\label{app_sec:auxiliary_lemmas}

\begin{lemma}[ODE lower bound]
\label{lem::ODE_bounds}
Let $c_1, c_2 > 0$ be two constants. Assume the function $f: \mathbb{R} \to \mathbb{R}$ satisfies $f(0) = 0$ and 
\begin{align*}
    \frac{df(t)}{dt} \geq c_1 \cdot \exp(-c_2\cdot f(t)), \quad \forall t \geq 0.
\end{align*}
Then it holds 
\begin{align*}
    f(t) \geq \frac{1}{c_2} \ln(1+c_1c_2t)
\end{align*}
for all $t \geq 0$.
\end{lemma}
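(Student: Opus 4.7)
The plan is to reduce the inequality to an equality by making the substitution that exactly solves the corresponding ODE $\dot g = c_1 e^{-c_2 g}$, then use monotonicity of the integral to transfer the lower bound. Specifically, I would introduce the auxiliary quantity $F(t) := e^{c_2 f(t)}$, which is designed so that the exponential nonlinearity on the right-hand side cancels after multiplication by $\dot f$.

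First, I differentiate $F$ using the chain rule to get $\dot F(t) = c_2 e^{c_2 f(t)} \dot f(t)$. Then, I apply the hypothesis $\dot f(t) \ge c_1 e^{-c_2 f(t)}$ to obtain
\begin{equation*}
\dot F(t) \;\ge\; c_2 e^{c_2 f(t)} \cdot c_1 e^{-c_2 f(t)} \;=\; c_1 c_2.
\end{equation*}
Since $F(0) = e^{c_2 f(0)} = 1$, integrating from $0$ to $t$ yields $F(t) \ge 1 + c_1 c_2 t$, i.e., $e^{c_2 f(t)} \ge 1 + c_1 c_2 t$. Taking logarithms (valid since $1 + c_1 c_2 t > 0$ for all $t \ge 0$) and dividing by $c_2 > 0$ gives the desired bound
\begin{equation*}
f(t) \;\ge\; \frac{1}{c_2}\ln(1 + c_1 c_2 t).
\end{equation*}

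There is no serious obstacle here; this is essentially the scalar comparison principle applied to a separable ODE, and the substitution $F = e^{c_2 f}$ is exactly the separation-of-variables transformation that linearizes the problem. The only mild point to be careful about is the differentiability class of $f$: if $f$ is only assumed absolutely continuous and the inequality holds almost everywhere, the same argument goes through because $F$ inherits absolute continuity from $f$ and the fundamental theorem of calculus applies in its Lebesgue form. Under the $C^1$ assumption implicit in gradient-flow solutions (which is how this lemma is used in the proofs of \Cref{lem::dynamics_mu_BFS} and \Cref{thm:mu-dynamics-coconut}), the argument above is entirely elementary.
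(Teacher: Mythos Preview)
Your proof is correct and essentially identical to the paper's: both introduce the auxiliary function $e^{c_2 f(t)}$, differentiate to obtain a constant lower bound $c_1 c_2$ on its derivative, integrate, and take logarithms. The only differences are notational (you write $F$, the paper writes $g$) and your added remark on absolute continuity, which is a harmless refinement.
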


\begin{proof}
    We define $g(t) = e^{c_2f(t)}$.
    Note that 
    \begin{align*}
       \frac{d g(t)}{dt} = \frac{d}{dt}(e^{c_2f(t)}) = c_2\frac{df(t)}{dt} \cdot e^{c_2f(t)} \geq c_2 \cdot  c_1 \cdot \exp(-c_2\cdot f(t)) \exp(c_2f(t)) = c_1c_2.
    \end{align*}
    Therefore, $d g(t) \geq c_1c_2 d t$ for $t \geq 0$, and thus
    \begin{align*}
        \int_{0}^t d g(t)  \geq \int_{0}^t c_1c_2 d t \implies g(t) - g(0) \geq c_1c_2t.
    \end{align*}
    Therefore, 
    \begin{align*}
        g(t) = e^{c_2f(t)} \geq g(0) + c_1c_2t = e^{c_2f(0)} +  c_1c_2t = 1+c_1c_2t,
    \end{align*}
    which implies
    \begin{align*}
        f(t)\geq  \frac{1}{c_2} \ln(1+c_1c_2t).
    \end{align*}
\end{proof}
\section{Experiment Details}

\subsection{Dataset}

\begin{table}[ht]
  \centering
  \caption{ProsQA statistics. Numbers are averaged over problem instances.}
  \label{tab:data‑stats}
  \begin{tabular}{@{}lcccc@{}}
    \toprule
    & \#Problems & $\lvert V \rvert$ & $\lvert E \rvert$  & Sol. Len.\\
    \midrule
    Train &  14785 & 22.8 & 36.5 & 3.5 \\
    Val   &  257 & 22.7 & 36.3 & 3.5 \\
    Test  &  419 & 22.7 & 36.0 & 3.5 \\
    \bottomrule
  \end{tabular}
\end{table}

The statistics of the ProsQA dataset is shown in \autoref{tab:data‑stats}.

\subsection{Experiment with \coconutBFS{}}
\label{app:bfs-exp}
As a comparison, we also train a model with a modified version of $\loss^{\bfs}$. Recall that the original $\loss^{\bfs}$ \eqref{eq:main-bfs-loss} encourages the model to predict any nodes within $\neighbor_{c+1}$. To avoid the trivial solution of always predicting the root node, we introduce an experimental variant that only encourages predicting nodes on the current frontier:
\begin{align}
\textbf{\coconutBFS-exp:}\quad
\ell^{\bfs\text{-exp}}_{\graph,\tokenStart}
:= -\log\frac{\sum_{v\in \neighbor_{c+1} \setminus \neighbor_{c}} \exp(\xi_v)}
{\sum_{v\in \vertexSet}\exp(\xi_v)}.
\label{eq:main-bfs-exp-loss}
\end{align}

All other training settings remain unchanged. The answer accuracy of this model on the test set is 99.0\%. We then track the logit difference between frontier and non-frontier edges as a proxy for $\mu_v$, with results shown in \autoref{fig:attention_weight_bfs}.

In Stage~1, the logit difference for $c=1$ grows much faster than under $\loss^{\coco}$ and shows no sign of saturation even after 150 epochs. This agrees with the theoretical prediction in \autoref{thm:main-informal-boundedness}: under $\coconutBFS$, $\mu_v$ diverges rather than stabilizing. At later steps ($c=3,4$), the gap between $\coconutBFS$ and $\coconut$ becomes smaller. We attribute this to practical factors such as stage-wise data mixing, gradient propagation across earlier thoughts, and a larger discrepancy between losses \eqref{eq:main-bfs-loss} and \eqref{eq:main-bfs-exp-loss} in the later stage.

\begin{figure}[htbp]
\centering
 \includegraphics[width=0.9\textwidth]{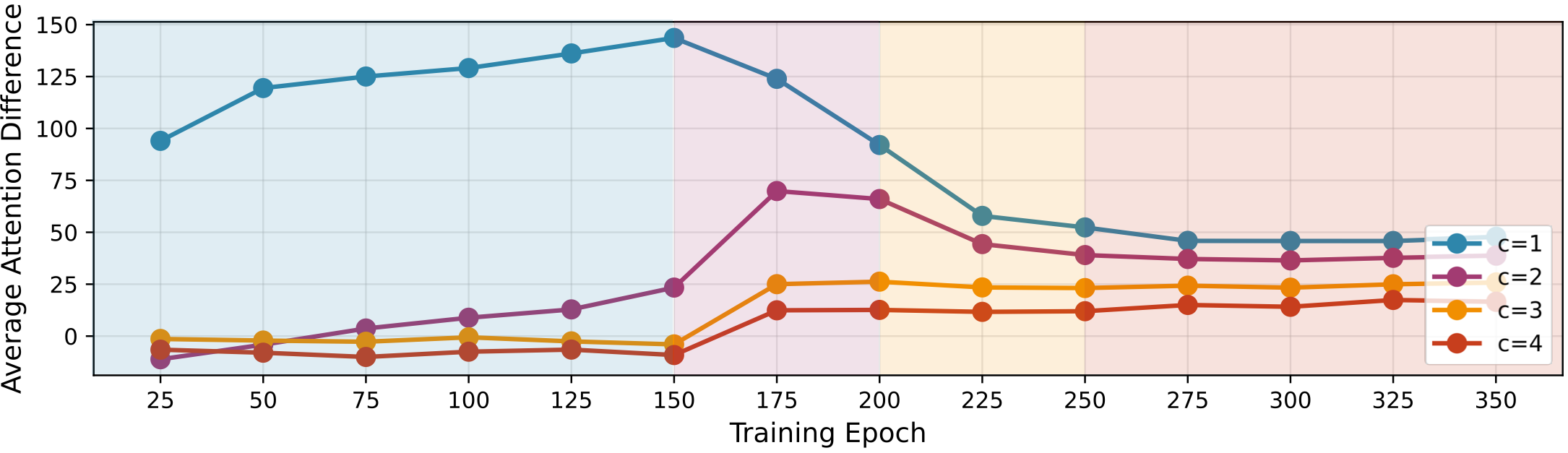}
    \caption{\small The attention logits difference between frontier edges and others. The model is trained with a modified version of $\loss^{\bfs}$.}
    \label{fig:attention_weight_bfs}
\end{figure}

\subsection{Alternative Attention Routes for Candidate Lift}
\label{app:subsec:cand_lift}

Our theoretical analysis in \Cref{lem:main-informal-pred-logits} assumes that \tokenReasoning\ copies the candidate nodes in the first layer, and \tokenAnswer\ then attends to \tokenReasoning\ in the second layer. In practice, however, we observe three distinct yet functionally equivalent attention routes that realize the same \emph{candidate lift}. Example attention maps for each route are shown in \autoref{fig:3x2_attention}.

\begin{figure}[htbp]
  \centering

  \begin{subfigure}[b]{0.45\textwidth}
    \centering
    \includegraphics[width=\textwidth]{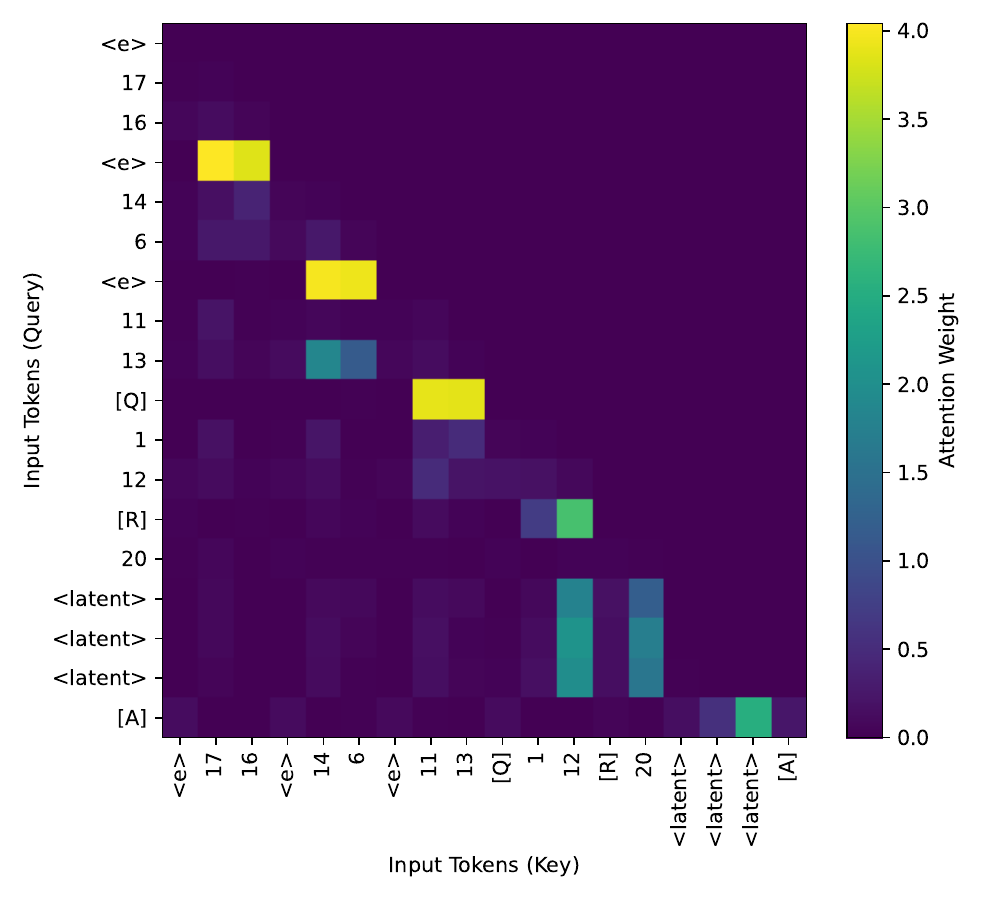}
    \caption{Layer-1 attention map for Pattern A}
  \end{subfigure}
  \hfill
  \begin{subfigure}[b]{0.45\textwidth}
    \centering
    \includegraphics[width=\textwidth]{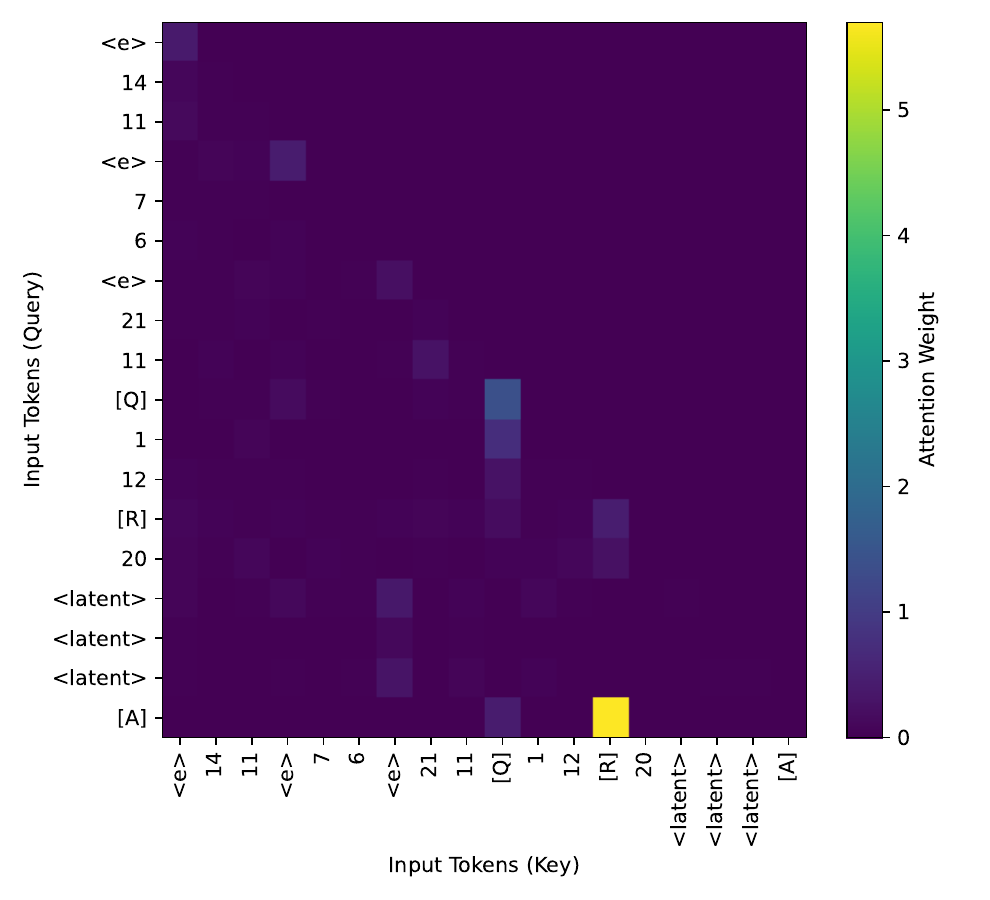}
    \caption{Layer-2 attention map for Pattern A}
  \end{subfigure}

  \vskip\baselineskip %

  \begin{subfigure}[b]{0.45\textwidth}
    \centering
    \includegraphics[width=\textwidth]{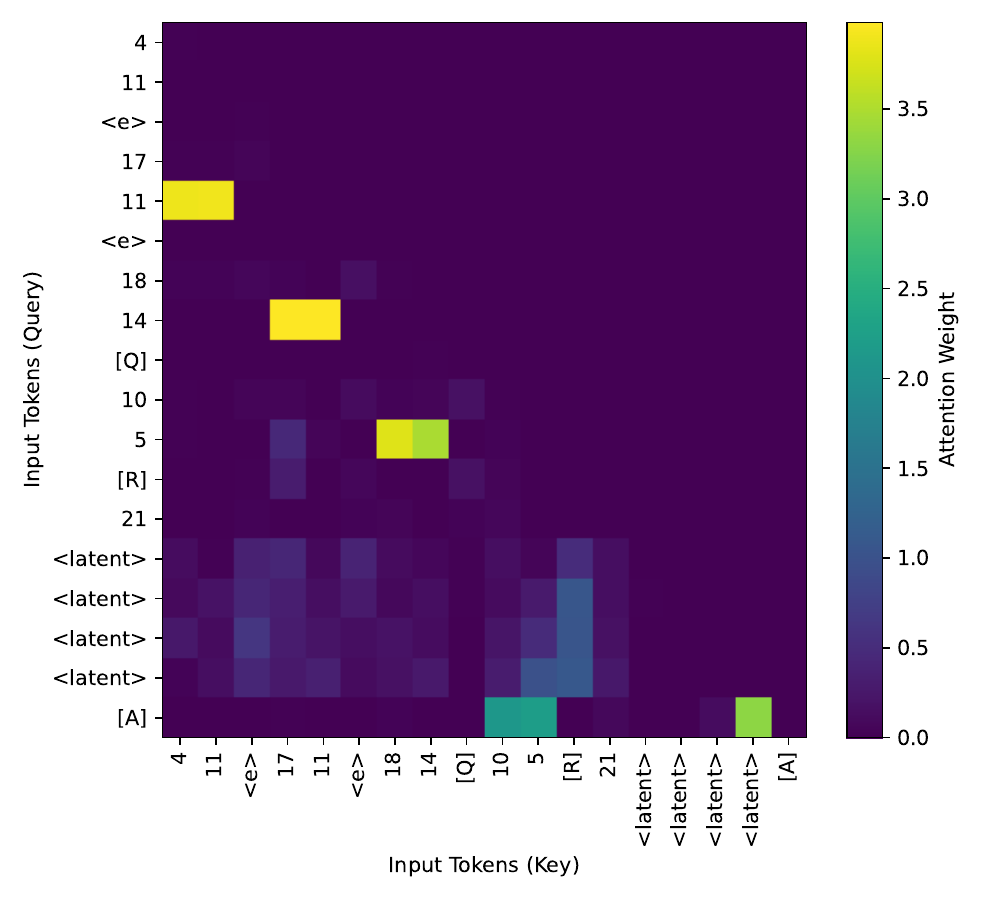}
    \caption{Layer-1 attention map for Pattern B}
  \end{subfigure}
  \hfill
  \begin{subfigure}[b]{0.45\textwidth}
    \centering
    \includegraphics[width=\textwidth]{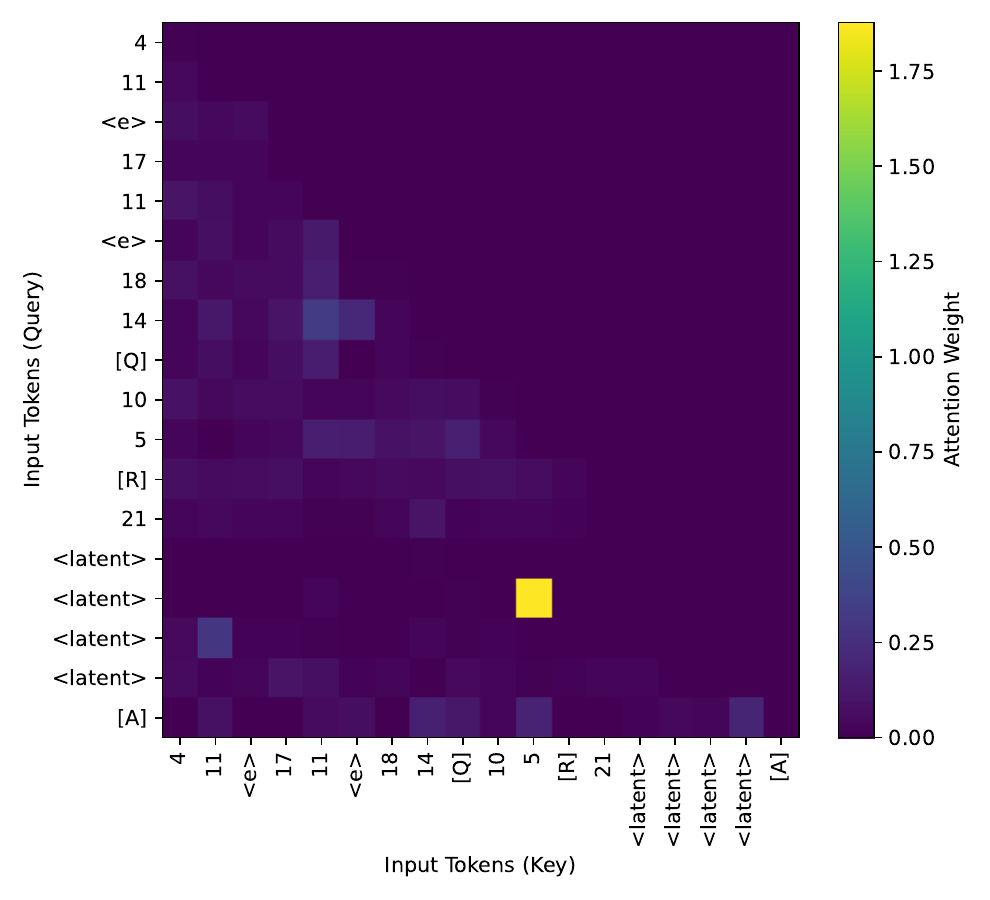}
    \caption{Layer-2 attention map for Pattern B}
  \end{subfigure}

  \vskip\baselineskip

  \begin{subfigure}[b]{0.45\textwidth}
    \centering
    \includegraphics[width=\textwidth]{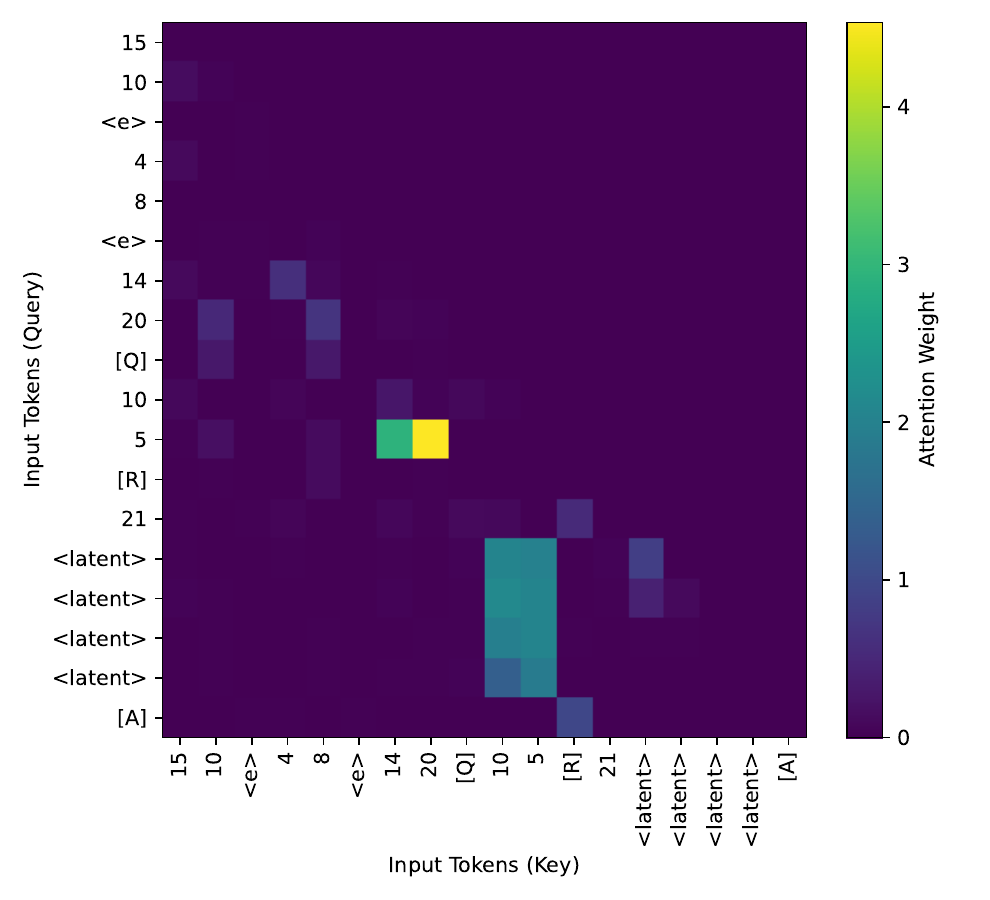}
    \caption{Layer-1 attention map for Pattern C}
  \end{subfigure}
  \hfill
  \begin{subfigure}[b]{0.45\textwidth}
    \centering
    \includegraphics[width=\textwidth]{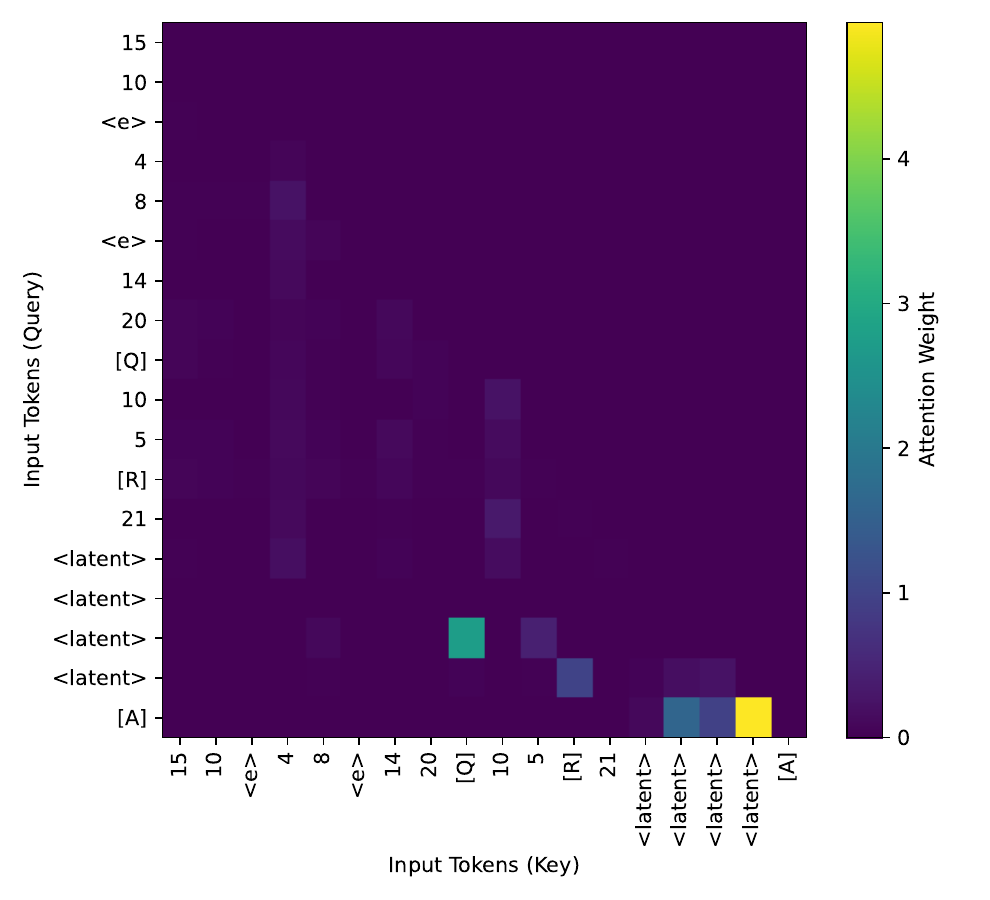}
    \caption{Layer-2 attention map for Pattern C}
  \end{subfigure}

  \caption{Example attention maps illustrating three alternative routes for \emph{candidate lift}. For clarity, we omit earlier tokens in the sequence and only visualize the final segment containing some of edges, the candidate nodes, latent thoughts, and answer tokens. 
\textbf{Pattern A} (consistent with the theoretical assumption): \tokenReasoning\ copies candidate nodes in Layer~1, and \tokenAnswer\ attends to \tokenReasoning\ in Layer~2. 
\textbf{Pattern B}: \tokenAnswer\ directly attends to candidate nodes in Layer~1. 
\textbf{Pattern C}: continuous thoughts copy candidate nodes in Layer~1, and \tokenAnswer\ attends to the continuous thoughts in Layer~2. 
All three patterns achieve the same functional effect of lifting the reachable candidate.}
  \label{fig:3x2_attention}
\end{figure}

\section{Additional Experimental Results}
\label{app:additional-experiments}

In this section, we provide additional experiments to complement our main analysis of
training dynamics.

\subsection{Ablation Study: Architectural and Optimization Sensitivity}
\label{app:exp-hyperparam}

We evaluate the sensitivity of \coconut{} training to model depth, number of attention heads,
hidden width, and learning rate. The results are summarized in Table~\ref{tab:app-hyperparam}.

\begin{table}[t]
    \centering
    \begin{tabular}{l|c}
        \toprule
        \multicolumn{2}{c}{\textbf{Number of layers}} \\
        \midrule
        $L = 2$  & $98.8$ \\
        $L = 4$  & $97.3$ \\
        $L = 8$  & $96.5$ \\
        $L = 12$ & $67.4$ \\
        \bottomrule
    \end{tabular}
    \hspace{1.5em}
    \begin{tabular}{l|c}
        \toprule
        \multicolumn{2}{c}{\textbf{Number of heads}} \\
        \midrule
        $H = 4$  & $98.0$ \\
        $H = 8$  & $98.8$ \\
        $H = 12$ & $98.8$ \\
        \bottomrule
    \end{tabular}
    \hspace{1.5em}
    \begin{tabular}{l|c}
        \toprule
        \multicolumn{2}{c}{\textbf{Width}} \\
        \midrule
        $d_{\text{model}} = 384$  & $62.0$ \\
        $d_{\text{model}} = 768$  & $98.8$ \\
        $d_{\text{model}} = 1536$ & $97.7$ \\
        \bottomrule
    \end{tabular}

    \vspace{1.0em}

    \begin{tabular}{l|c}
        \toprule
        \multicolumn{2}{c}{\textbf{Learning rate}} \\
        \midrule
        $\eta = 2\times 10^{-4}$ & $58.1$ \\
        $\eta = 1\times 10^{-4}$ & $98.8$ \\
        $\eta = 5\times 10^{-5}$ & $62.1$ \\
        \bottomrule
    \end{tabular}
    \hspace{1.5em}
    \begin{tabular}{l|c}
        \toprule
        \multicolumn{2}{c}{\textbf{Weight tying}} \\
        \midrule
        Tied   & $98.8$ \\
        Untied & $98.8$ \\
        \bottomrule
    \end{tabular}
    \caption{Ablation on depth, heads, width, learning rate, and weight tying. By default, other hyperparameters follow the main experiments.}
    \label{tab:app-hyperparam}
\end{table}

We observe that models with $L = \{4, 8\}$ layers maintain high accuracy, while $L=12$ is
harder to optimize.
The performance remains comparable when $d_{\text{model}} \in \{768, 1536\}$, but degrades when the width is too small (e.g., $d_{\text{model}} = 384$). Varying the number of heads does not have major effects on final accuracy, whereas too large or too small learning rates tend to degrade performance. Weight tying setting does not affect model performance.

We emphasize that each ablation in \Cref{tab:app-hyperparam} varies only a single hyperparameter at a time, keeping all other settings identical to our main experiment. In practice, these hyperparameters interact in a coupled manner. For instance, with a smaller learning rate of $5\times 10^{-5}$, we can extend the first-stage training to 300 epochs and get 97.0\% accuracy. For deeper models with $L = 12$, prolonging first-stage training to 400 epochs and reducing the learning rate to $5\times 10^{-5}$ improves accuracy to 99.6\%. A comprehensive hyperparameter interaction study is beyond the scope of this work and is left for future investigation. %

\subsection{Multi-Layer Transformers and Mechanistic Patterns}
\label{app:exp-depth}
We use the \coconut{} model with $L=4$ to analyze the reasoning pattern beyond two-layer transformers. The results are shown in Figure~\ref{fig:app-layers-mechanism} and Figure~\ref{fig:app-layers-inner-product}, and we summarize the reasoning patterns below.

\begin{itemize}
    \item \textbf{First layer (induction head):} The first layer performs token-level
    copying, propagating node information into edge tokens $\tokenEdge$, consistent
    with the copy mechanism derived in previous theoretical analysis~\citep{zhu2025reasoning}.
    \item \textbf{Second layer and beyond (superposition):} From the second layer onward,
    the model aggregates over reachable nodes in a superpositional representation that enables parallel breadth-first exploration.
\end{itemize}

\begin{figure}[t]
    \centering
    \includegraphics[width=0.7\linewidth]{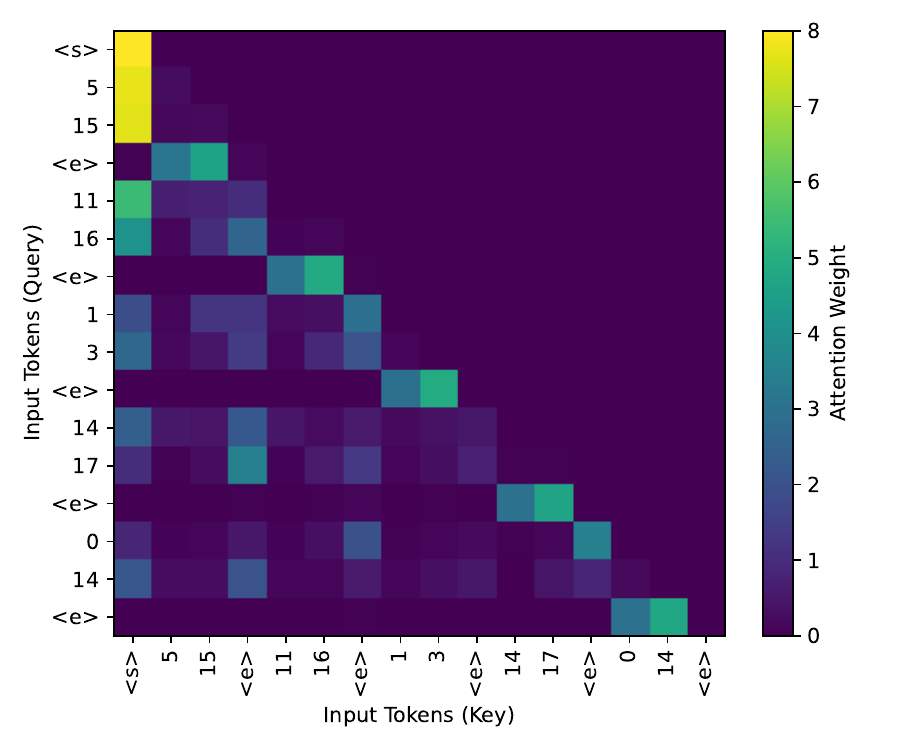}
    \caption{The first-layer attention patterns in 4-layer transformers. $\tokenEdge$ tokens attend to the corresponding source and target nodes to aggregate the information. This is consistent with the analysis of the two-layer transformer in~\citet{zhu2025reasoning}.}
    \label{fig:app-layers-mechanism}
\end{figure}

\begin{figure}[t]
    \centering
    \includegraphics[width=\linewidth]{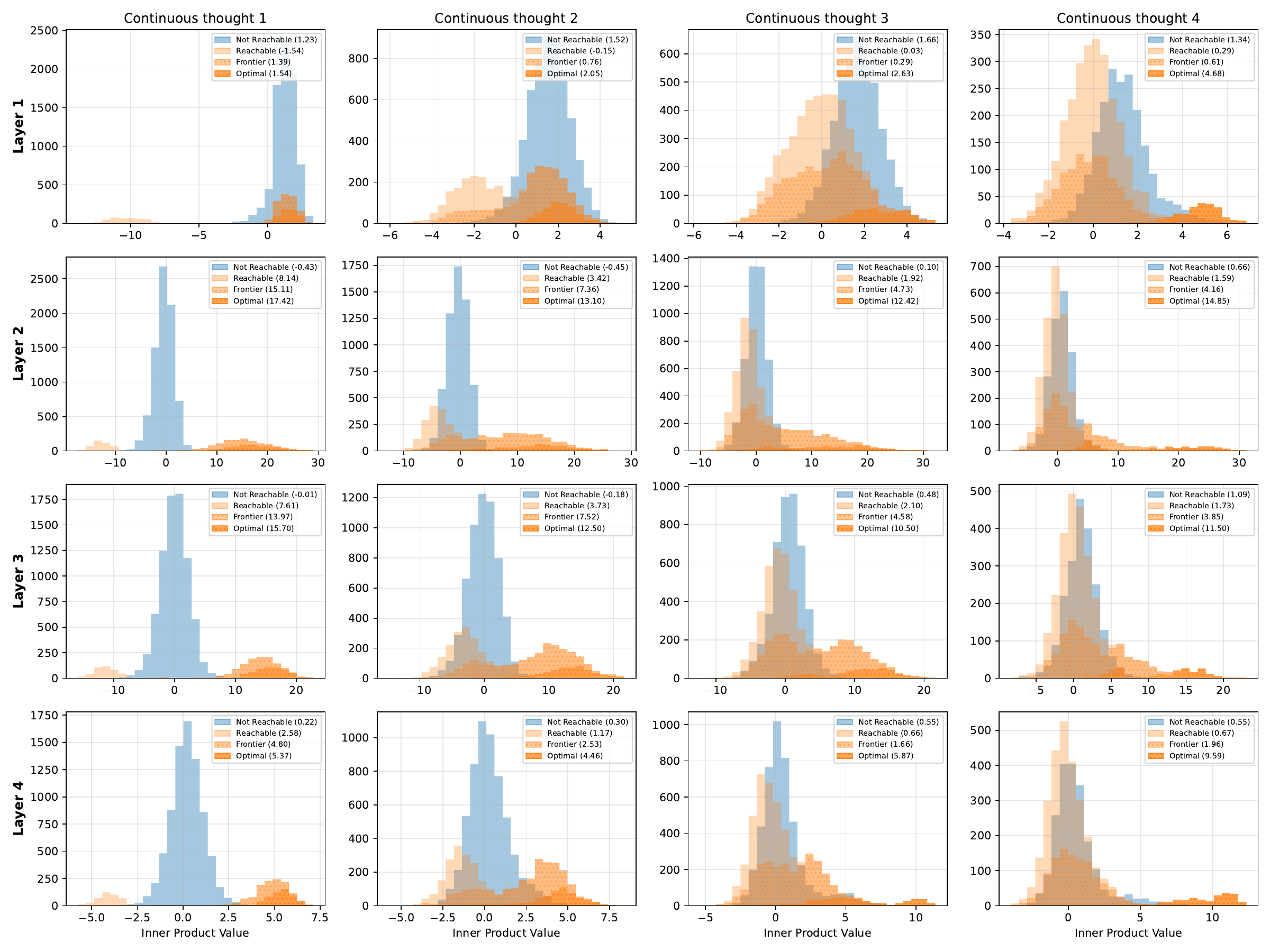}
    \caption{Inner product between layer-wise hidden states and different types of nodes in a 4-layer transformer. The experimental setting follows~\citet{zhu2025reasoning}. From the second layer onward, hidden states exhibit larger inner products with reachable, frontier, and optimal nodes, indicating that superpositional representations emerge as early as layer 2 in the 4-layer transformer.}
    \label{fig:app-layers-inner-product}
\end{figure}

\subsection{Accuracy Dynamics in the Answer-Prediction Stage}
\label{app:exp-accuracy-curve}

We track the test accuracy during the final answer-prediction stage following the setting in Figure~\ref{fig:answer_prediction}. The result is shown in Figure~\ref{fig:app-accuracy-curve}, which shows a rapid transition from near-random guessing to stable high accuracy once the model integrates residual carryover and candidate lift signals.

\begin{figure}[t]
    \centering
    \includegraphics[width=\linewidth]{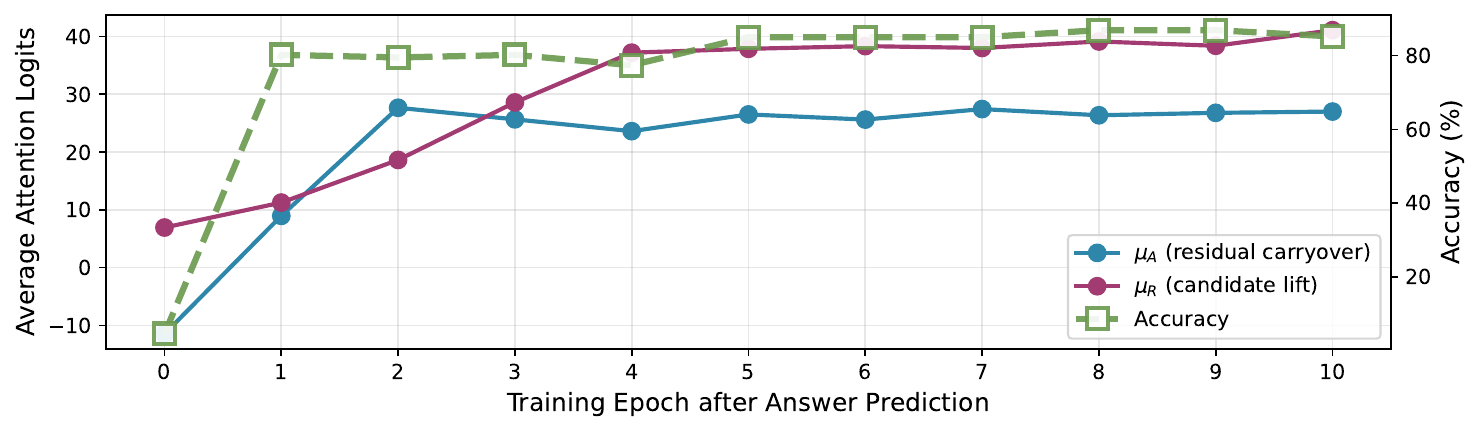}
    \caption{Accuracy curve during the answer-prediction stage.
    The accuracy shows a rapid improvement corresponding to the learning of residual carryover and candidate lift signals.}
    \label{fig:app-accuracy-curve}
\end{figure}

\section{The Use of Large Language Models (LLMs)}

We used LLMs mainly for grammar checking and polishing in paper writing.

\end{document}